\documentclass[letterpaper]{article}
\usepackage{aaai2027}
\nocopyright

\usepackage[hyphens]{url}
\usepackage{natbib}
\usepackage{caption}
\usepackage{booktabs}
\usepackage{multirow}
\usepackage{tabularx}
\usepackage{amsfonts}
\usepackage{amsthm}
\usepackage{nicefrac}
\usepackage{microtype}
\usepackage{xcolor}
\usepackage{colortbl}
\definecolor{rccband}{RGB}{219,234,254}
\usepackage{amsmath}
\usepackage{graphicx}
\usepackage{amssymb}
\usepackage{afterpage}
\usepackage{placeins}
\usepackage{needspace}
\usepackage{listings}
\usepackage[most]{tcolorbox}

\usepackage{subcaption}

\lstdefinestyle{pythonstyle}{
  language=Python,
  basicstyle=\ttfamily\small,
  breaklines=true,
  breakatwhitespace=false,
  columns=fullflexible,
  keepspaces=true,
  frame=single,
  showstringspaces=false,
  tabsize=2
}

\newtcolorbox{promptbox}[2]{%
  enhanced, breakable,
  colback=#1!4!white,
  colframe=#1!55!black,
  colbacktitle=#1!55!black,
  coltitle=white,
  fonttitle=\sffamily\bfseries\small,
  title={#2},
  boxrule=0.5pt,
  arc=1.5mm,
  left=3mm, right=3mm, top=2.5mm, bottom=2.5mm,
  fontupper=\small,
  parbox=false
}

\newcommand{\promptkey}[1]{{\sffamily\footnotesize\bfseries\color{black!55}#1}\par\nobreak\smallskip}

\newcommand{\promptboxhead}{\par\smallskip\noindent}
\newcommand{\plabel}[1]{\textbf{#1}}

\theoremstyle{definition}
\newtheorem{theorem}{Theorem}
\title{Evaluating Rational Contracting in Natural Language}

\author{
    Bhavyesh Sajja$^{1}$,
    Max Kleiman-Weiner$^{2}$,
    Roger Zimmermann$^{1}$,
    Tan Zhi-Xuan$^{1,3}$
}
\affiliations{
    $^{1}$Department of Computer Science, National University of Singapore\\
    $^{2}$Department of Computer Science, University of Washington\\
    $^{3}$A*STAR Institute of Advanced Intelligence and Computing\\
    \texttt{bsajja@u.nus.edu}, \texttt{maxkw@uw.edu}, \texttt{dcsrz@nus.edu.sg}, \texttt{xuan.cs@nus.edu.sg}
}

\begin{document}

\maketitle

\begin{abstract}

The emergence of language-based AI agents promises to transform the scope of machine economic activity. Instead of just proposing bids or following hard-coded protocols, such agents can be used to negotiate and execute agreements in open-ended natural language. However, most evaluations of these abilities have focused on one-off exchanges or simple economic games, leaving open the rich space of time-extended, contingent, and incomplete contracts made expressible by language; they also focus on raw profit, without measuring the qualities required for trustworthy contracting. We address this by formulating a rational framework for how agents should negotiate and perform natural language contracts in uncertain multi-step environments. Within this framework, we develop metrics and baselines for quantifying rational and cooperative play. To evaluate how agents perform at such contracting, we instantiate our framework in \textsc{ContractSim}, an evaluation suite where two players negotiate and execute a multi-turn supplier contract under environmental and inter-player uncertainty. Across six environments and three supplier settings (catering, hotel cleaning, and AI hosting) we find that current LLM-based agents reach agreement reliably, and negotiate efficient contracts when environmental uncertainty is low. However, under high uncertainty, they often fail to negotiate satisfiable, efficient, or mutually beneficial contracts. They are also frequently uncooperative when executing contracts, violating contract terms for additional profit even when contracts are easy to satisfy. These findings highlight room for improvement in the design of language agents that can negotiate, interpret, and execute contracts both rationally and cooperatively.
\end{abstract}

\begin{figure*}[t]
        \centering
        \includegraphics[width=0.9\textwidth]{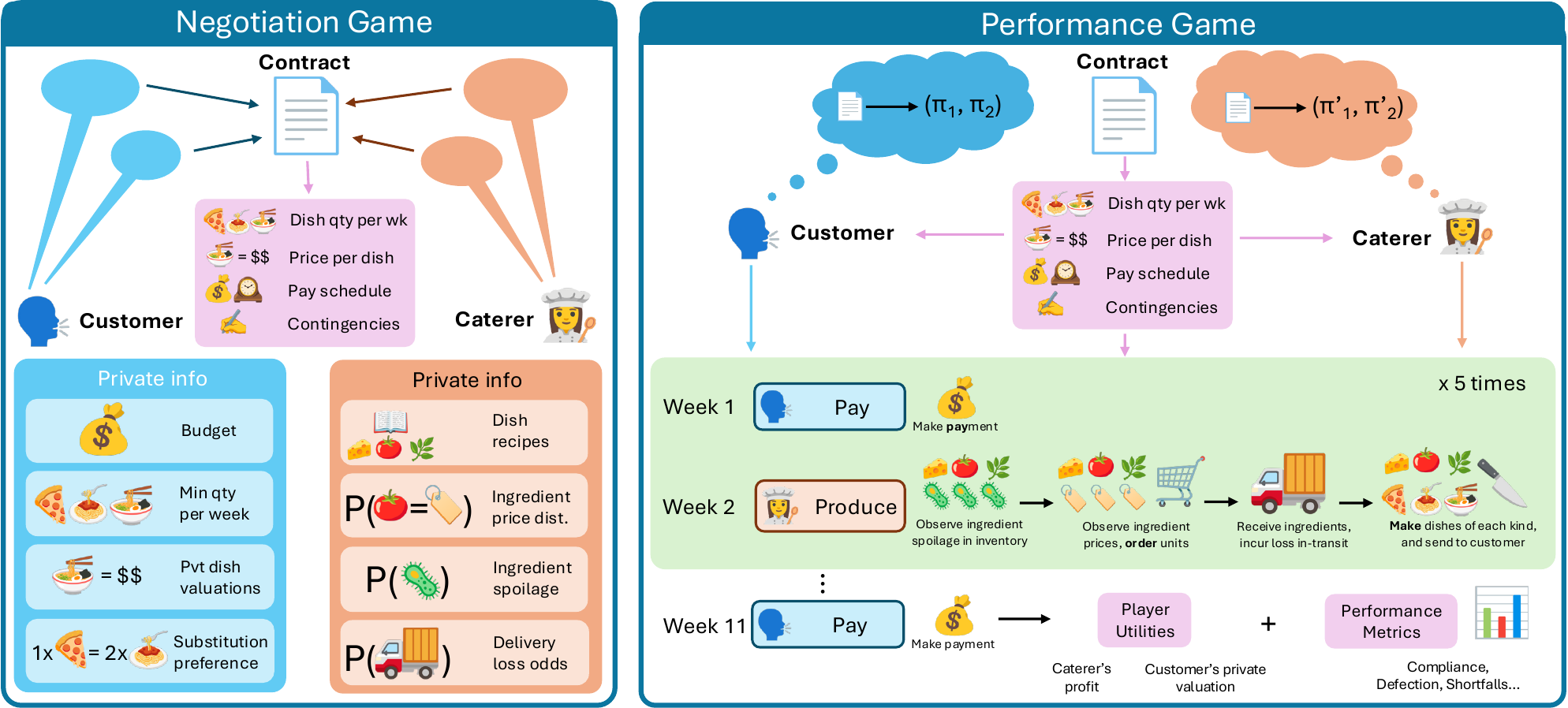}
        \caption{\textbf{Evaluating contracting with \textsc{ContractSim}.} \emph{Negotiation stage:} Two LLM agents \emph{negotiate} a natural-language catering contract given their private information. \emph{Performance stage:} The accepted contract is then \emph{performed} over alternating payment/production weeks with stochastic prices, spoilage, and delivery losses. We benchmark performance in each stage against rational baselines, enabling us to study the extent to which LLM agents can serve as rational and cooperative contractors.}
        \label{fig:pipeline}
\end{figure*}

\section{Introduction}
\label{sec:intro}

As AI agents based on large language models (LLMs) become ever more autonomous and capable, we may eventually see the emergence of an \emph{agentic economy}, where AI agents negotiate, coordinate, and carry out economic transactions on behalf of humans~\cite{rothschild2025agenticeconomy,tomasev2025virtual,shahidi2025coasean}. In contrast to earlier generations of economic agents, such as algorithmic traders~\cite{almgren2000optimal,hendershott2011liquidity}, auction agents~\cite{stone2002attac}, smart contracts \cite{szabo1997formalizing,werbach2017contracts}, and negotiation algorithms~\cite{kraus1995multiagent,faratin1998negotiation,kraus1998argumentation,jennings2001automated}, modern AI agents are distinguished by their ability to use \emph{natural language}. This opens up the possibility of agreements and contracts over a much wider space of terms, going beyond just prices and quantities to include dynamic obligations \cite{battaglini2019optimal}, contingencies \cite{bazerman1999betting}, and qualitative assurances \cite{bernheim1998incomplete,choi2009strategic}. By sharply reducing the costs of forming such agreements, AI agents promise to expand the scale, scope, and efficiency of human coordination \cite{shahidi2025coasean,krier2025coasean}.

With this complexity and open-endedness, however, it is unclear how well current AI agents perform at natural language contracting, and hence whether they can deliver on the promises of an agentic economy. After all, while LLMs excel at some forms of language-based reasoning \cite{feng2026towards,nguyen2025llms}, strong negotiation requires abilities like theory-of-mind \cite{de2017negotiating,chan2024negotiationtom} and planning under uncertainty \cite{schauer2023nine} that LLMs tend to be weaker at \cite{deng2025planu,kim2025hypothesis,ying2025liras}. It is also unclear how to even measure what good contracting looks like. For one, natural language contracts lack obvious metrics for assessing properties such as feasibility or mutual benefit. For another, successful contracting is a \emph{cooperative} endeavor that requires mutual compliance to achieve mutual benefit \cite{telser1980theory}; evaluations should thus measure agents' ability to achieve these cooperative outcomes instead of selfishly breaching a contract when convenient.

To address these challenges, we propose a framework that models and evaluates contracting in \emph{rational} terms. We formulate contracting as a two-party \emph{negotiation-performance game}, where rational and cooperative contractors should negotiate a mutually beneficial contract, comply with the contract conditional on the other party's compliance, then optimize their utilities subject to contractual constraints. To evaluate natural language contracts in this framework, we translate them into \emph{constraints on joint policies}, allowing us to implement rational baselines that follow these constraints while otherwise acting in their own interests, and to quantitatively assess contract quality. We instantiate this framework in \textsc{ContractSim}, where two agents negotiate a multi-turn supplier contract in natural language and then perform the contract in a stochastic environment. This allows us to benchmark a range of LLM-based agents in their ability to behave both rationally and cooperatively.

Empirically, we find that although LLM agents negotiate mutually beneficial and satisfiable contracts in environments with little to no stochasticity, they struggle in high stochasticity environments. In one such environment, only a third of negotiated contracts
are mutually beneficial; and unless prompted to do so, these agents do not add contingency clauses that could improve contract quality. When it comes to contract performance, LLM agents are able to gain high utility, but do so by breaching agreements and defecting against cooperative counterparties. This defection behavior remains consistent even when contracts become easier to satisfy, indicating a failure of disposition not capability. However, prompting agents to avoid unprovoked defection substantially reduces 
defection rates, suggesting that further scaffolding could improve performance. Altogether, these results indicate that while current LLM agents can contract well in some conditions, they are not yet reliably efficient or trustworthy contracting partners, and that more work is needed to make them rational and
cooperative economic delegates.

\paragraph{Related Work.}
Our work builds upon multiple lines of research on AI negotiation and contracts. Whereas classical automated negotiation studies structured issue spaces with explicit protocols~\cite{jennings2001automated}, and neural bargaining extends the protocol (but not the agreement space) to handle natural language ~\cite{lewis2017deal}, we consider agreements that are themselves expressed in natural language, which we formalize as general constraints on policies in stochastic games \cite{hansen2004dynamic}. Our work thus contains elements of legal AI workflows, which treat contracts as documents to be interpreted or formalized ~\cite{hendrycks2021cuad,koreeda2021contractnli,chalkidis2022lexglue}. In formulating rational contracting, we also draw on concepts from bargaining and game theory~\cite{nash1950bargaining,rubinstein1982bargaining,myerson1983efficient}, along with economic and legal contract design~\cite{telser1980theory,hart1988incomplete,choi2009strategic,thomas2022rational}.

Strategic rationality in LLMs has been evaluated in \emph{Diplomacy}, repeated games, social dilemmas, and business simulation~\cite{bakhtin2022diplomacy,akata2023repeated,piatti2024cooperate,vezhnevets2023generativeagentbasedmodelingactions,backlund2025vending}, and a growing line of work benchmarks LLM negotiation itself, spanning scorable bargaining~\cite{abdelnabi2024stakeholders,davidson2024evaluating}, price negotiation~\cite{xia2024measuring,fu2023improving,liu2026agenticpay}, resource trading~\cite{bianchi2024negotiationarena,qian2026strategic}, and the negotiation of smart or self-executing contracts~\cite{gopinathan2026pact,wyse2026commitment}. \textsc{ContractSim} extends this literature by contributing the first quantitative evaluation of LLM negotiation over dynamic, contingent, and incomplete contracts expressed in natural language, bringing evaluation practices closer to the open-ended scope of future agentic agreements.

\section{Contracting as a \\Negotiation-Performance Game}
\label{sec:game}

To characterize rational contracting, we first formalize contracting as a game where players negotiate a contract, then perform the contract if agreement is reached.

A \emph{negotiation-performance game} $\mathcal{G}$ is described by the tuple: $
    \mathcal{G}
    =
    \left(
        \mathcal{I},\,
        \Theta,\,
        \Omega,\,
        \mathcal{N},\,
        \mathcal{P}
    \right),
$
where $\mathcal{I}=\{1,2\}$ is the player set, $\Theta$ is the
type-profile space, $\Omega$ is the contract space, $\mathcal{N}$ is
the negotiation game, and $\mathcal{P}$ is the performance game. Each
run realizes types $\theta=(\theta_i)_{i\in\mathcal{I}}\in\Theta$ that describe each player's private information about their preferences or environment. Players negotiate over $\Omega$ in $\mathcal{N}$; agreement produces
$\omega\in\Omega$, which they perform in $\mathcal{P}$ with their types retained. Disagreement is terminal.

A \emph{negotiation game} $\mathcal{N}$ is defined by the tuple
$
    \left(
        \mathcal{M},\,
        \Omega,
        \alpha
    \right),
$
where $\mathcal{M}$ is the natural-language message space,
and $\Omega$ is the natural-language
contract space, and a protocol
$\alpha$ that maps a message history $h \in  \mathcal{M}^*$ to an agreed contract
$\omega \in \Omega$, disagreement $\bot$, or continuation $\top$. Each turn, a player sends a message $m \in \mathcal{M}$. Based on the history $h$, $\alpha$ determines if negotiation continues ($\top$), ends in disagreement ($\bot$), or ends with a contract $\omega$ extracted from $h$.

A \emph{performance game} $\mathcal{P}$ is a partially-observable stochastic game \cite{hansen2004dynamic} defined by the tuple
$
    \left(
        \mathcal{S},\,
        (\mathcal{A}_i)_{i\in\mathcal{I}},\,
        (\mathcal{O}_i)_{i\in\mathcal{I}},\,
        T,\,
        (O_i)_{i\in\mathcal{I}},\,
        L,\,
        (u_i)_{i\in\mathcal{I}}
    \right),
$
where $\mathcal{S}$ is the environment state space,
$\mathcal{A}_i$ is player $i$'s action space, $\mathcal{O}_i$ is player $i$'s observation space,
$T(\cdot \mid s,a)$ is the transition kernel for joint action
$a=(a_i)_{i\in\mathcal{I}}$, $O_i(\cdot | s, a)$ is player $i$'s observation model, $L \in \mathbb{N}$ is the length of the game,
and $u_i(\tau;\theta_i)$ is player $i$'s utility over complete
trajectories $\tau \in \mathcal{T}$, where
$\mathcal{T}:=\mathcal{S}\times(\mathcal{A}\times\mathcal{S})^L$
Execution starts from an initial state $s_0 \in \mathcal{S}$, after which each player $i$ follows a policy $\pi_i$ which may condition on their observations, the contract, and the negotiation
message history:
$\pi_i(a_{i,t} \mid o_{i,1:t}, \theta_i, \omega, h)$.

\subsection{Contracts as Constraints on Joint Policies.}
\label{subsec:contracts}

In the formalism above, a natural language contract $\omega$ is just a signal on which players can condition future actions. However, real contracts typically specify the actions and obligations required of each player, enabling mutually beneficial coordination. As such, we assume that a contract $\omega$ can be interpreted as a set of contractually-compliant policies $\Pi^\omega \subseteq \Pi$, where $\Pi$ is the full space of joint policies $\pi = (\pi_i, \pi_{-i})$. We further refine this by assuming $\Pi^\omega$ is defined by a constraint $C^\omega: \mathcal{T} \to \{0, 1\}$ on the performance trajectory $\tau$ and a violation probability $\epsilon$, such that $\Pi^\omega := \{\pi \in \Pi \mid P_{\mathrm{sat}}(\pi, C^\omega) \geq 1-\epsilon \}$. $P_{\mathrm{sat}}(\pi, C^\omega) := \mathbb{E}_{\tau}[C^\omega(\tau) | \pi]$ is the probability of $\pi$ satisfying the constraint $C^\omega$, so a contractually-compliant policy is one that satisfies $C^\omega$ with high probability.

By interpreting contracts in this way, we can capture several important aspects of natural language contracts: 

\emph{Contingent contracts} \cite{bazerman1999betting} can be modeled with history-dependent constraints $C^\omega$ that are only satisfied when particular outcomes follow certain conditions. We can also model \emph{incomplete contracts} \cite{hart1988incomplete,hadfieldmenell2019incomplete}, due either to \emph{under-specification} --- $\Pi^\omega$ does not determine a unique policy --- or \emph{unanticipated unsatisfiability} ---  constraints $C^\omega$ may be impossible to satisfy in certain states, making it unclear how to act. With this interpretation, we are able to formalize various measures of contract quality (Section \ref{sec:infra}), and evaluate them by translating natural language into formal constraints.

\subsection{Modeling Rational Contracting}
\label{subsec:rational-contracting}

What does it mean to negotiate and perform a contract rationally? Game-theoretic characterizations provide limited guidance here --- a wide variety of equilibria are supported in games similar to ours, including ones where no economic exchange occurs \cite{akerlof1978market,telser1980theory}, where negotiation is just cheap talk that results in babbling or arbitrary signaling \cite{crawford1982strategic,farrell1996cheap}, or where unrealistically efficient negotiation terminates in a single turn \cite{rubinstein1982bargaining,chatterjee1983bargaining}. Furthermore, absent institutional incentives (e.g. repeated interaction,
reputational costs, or third-party enforcement \cite{telser1980theory,baker2002relational,tewolde2026coopeval}), complying with a negotiated contract may not be \emph{individually rational} when the other party complies or defects, even if it is \emph{collectively rational} \cite{gauthier1990rationality,weirich2009collective} for both parties to commit to compliance.

We avoid these difficulties by focusing on behaviors that can be meaningfully described as contracting, assuming that agreement results in a contract $\omega$ interpretable as a policy constraint $\Pi^\omega \equiv (C^\omega, \epsilon)$. Accordingly, we develop both \emph{cooperative} and \emph{non-cooperative} models of rational contract performance, where agents either unconditionally comply with a contract (modeled as optimization under contractual constraints), exploit unconditional compliers, or defend against exploitation via conditional compliance. In our experiments, these agents serve as both benchmarks and opponents to compare LLM agents against. By grounding the value of a contract in the expected utility of joint conditional compliance, we then derive upper bounds on the contract values achievable by rational negotiation, without having to model negotiation dynamics.

\subsubsection{Rational Performance}
\label{subsubsec:rational-performance}
Let $\Pi_i$ denote player $i$'s full performance-policy space. We define a
\emph{Rational Complier (RC)} as the utility-maximizing policy that is
contractually compliant when paired with a counterparty that is \emph{expected} to follow a policy $\pi^{\text{ref}}_{-i}$ which also complies with the contract $\omega$:
\begin{equation}
\label{eq:rc}
\begin{aligned}
    \pi_i^{\mathrm{RC}}
    \in
    &\arg\max_{\pi_i\in\Pi_i}
    \mathbb{E}\!\left[
        u_i(\tau;\theta_i)
        \mid \omega, \pi_i,\pi_{-i}^{\mathrm{ref}}
    \right] \\
    &\text{s.t.} (\pi_i,\pi_{-i}^{\mathrm{ref}})\in\Pi^\omega.
\end{aligned}
\end{equation}
To fix a counterparty policy $\pi^{\text{ref}}_{-i}$, we note that there are often natural choices given the contract $\omega$. For example, in the supplier-customer games our benchmark studies (Section \ref{sec:benchmark}), most contracts fully constrain the customer's policy (i.e. their payment schedule), leaving a unique choice for $\pi^{\text{ref}}_{-i}$ where $-i$ is the customer. This in turn determines the supplier's RC policy $\pi^{\mathrm{RC}}_{i}$. In less constrained settings, $\pi^{\mathrm{RC}}_{i}$ and $\pi^{\text{ref}}_{-i}$ can be jointly determined as a bargaining solution  \cite{nash1950bargaining,kalai1975other} among compliant policies. 

RC can be viewed as rational in several ways: It is the rational best response to $\pi^{\text{ref}}_{-i}$, under the cooperative assumption that both parties comply with $\omega$.
Compliance can also be made individually rational if it is sustained by
institutions such as repetition, reputation, arbitration, or penalties. We
illustrate this for the case of repetition (see Appendix).

Against such a complier, a self-interested agent need not remain within the
contractual constraint. We thus define the \emph{Rational Exploiter (RE)} as the
unconstrained best response to the counterparty's RC policy:
\begin{equation}
\label{eq:re}
\begin{aligned}
    \pi_i^{\mathrm{RE}}
    \in
    &\arg\max_{\pi_i \in \Pi_i}
    \mathbb{E}\!\left[
        u_i(\tau;\theta_i)
        \mid \omega,\pi_i,\pi_{-i}^{\mathrm{RC}}
    \right].
\end{aligned}
\end{equation}
RE represents a \emph{non-cooperative} contractor. In our benchmark, we use RE as a test of agents' robustness to adversarial play.

Finally, the \emph{Rational Conditional Complier (RCC)} makes cooperative
contracting robust to exploitation. It follows RC as long as counterparty
complies, but if the counterparty is observed to violate its obligations under
$C^\omega$ at time-step $t$, RCC switches to a best response $\text{BR}_i$ to an RE policy. $D_{i,t}\in\{0,1\}$ indicates whether player $i$ has observed a counterparty violation by time $t$
\begin{equation}
\label{eq:rcc}
    \pi_i^{\mathrm{RCC}}
    =
    \begin{cases}
        \pi_i^{\mathrm{RC}}, &
        \text{if } D_{i,t}=0,\\
        \operatorname{BR}_i(\pi_{-i}^{\mathrm{RE}}), &
        \text{if } D_{i,t}=1.
    \end{cases}
\end{equation}
RCC thus models many desirable aspects for a contracting agent: Besides maximizing utility, it cooperates through compliance by default, but also defends itself against exploitation once it becomes clear that the counterparty is non-cooperative.

\subsubsection{Rational Negotiation}
\label{subsubsec:rational-negotiation}

We now introduce a model of rational negotiation focused on the \emph{quality of negotiated contracts}, leaving the analysis of negotiation dynamics to future work. Under incomplete information, negotiating parties generally cannot reach Pareto-efficient agreements due to incentives to withhold information \cite{myerson1983efficient}. Nonetheless, we can find \emph{upper bounds} on what rational negotiators can achieve by assuming all private information $\theta_{i,-i}$ is made public, then solving for the Pareto frontier of joint contract utilities.

To define the expected utility of a contract $U_i(\omega)$ for each player $i$, we assume that both players perform $\omega$ cooperatively by following a joint RCC policy $\pi^{\mathrm{RCC}} := (\pi_i^{\mathrm{RCC}}, \pi_{-i}^{\mathrm{RCC}})$, giving us $U_i(\omega) = \mathbb{E}\!\left[u_i(\tau;\theta_i)\mid \omega,\pi^{\mathrm{RCC}} \right]$. Recalling our decomposition of $\omega$ into constraints $C^\omega$ and a violation rate $\epsilon$, we also assume that players want to avoid violating $C^\omega$. Rational negotiators should thus negotiate a contract $\omega$ that maximizes $U_i(\omega)$ while ensuring high contract satisfiability $P_{\mathrm{sat}}(\pi^{\mathrm{RCC}}, C^\omega) > 1-\epsilon$. This gives us a Pareto frontier of $\epsilon$-satisfiable contracts, where each party $i$ cannot improve their utility $U_i$ without decreasing the other's utility $U_{-i}$. If we fix a lower bound of $\eta$ for $U_{-i}$, the corresponding Pareto-efficient contract is:
\begin{equation}
\label{eq:rational-negotiation}
\begin{aligned}
&\omega^{*}(\eta) \;\in\; \underset{\omega \,\in\, \Omega}{\arg\max} \;\;
  U_{i}(\omega) \\
&\text{s.t.} \quad
  U_{-i}(\omega) \;\geq\; \eta, \quad P_{\mathrm{sat}}\!\left(\pi^{\mathrm{RCC}}, C^{\omega}\right) > 1 - \epsilon .
\end{aligned}
\end{equation}
Besides contract utility and satisfiability, our framework also affords other measures of contract and negotiation quality (see Section \ref{sec:infra}), allowing us to give a rich characterization of how AI agents negotiate contracts.

\section{\textsc{ContractSim}: \\ Supplier Contracting as a Benchmark Suite}
\label{sec:benchmark}

\textsc{ContractSim} instantiates our contracting formalism as
$\mathcal{G}'=
\left(
    \mathcal{I}',\,
    \Theta',\,
    \Omega',\,
    \mathcal{N}',\,
    \mathcal{P}'
\right)$,
where $\mathcal{I}'=\{\mathrm{Cust},\mathrm{Supp}\}$ indexes the Customer and
Supplier, $\Omega'$ is the contract space, and a
run fixes a type profile
$\theta'=(\theta_{\mathrm{Cust}},\theta_{\mathrm{Supp}})\in\Theta'$.
Here $\theta_{\mathrm{Cust}}$ specifies the Customer's budget, minimum delivery
requirements, consumption utilities, and substitution rule, while
$\theta_{\mathrm{Supp}}$ specifies the Supplier's production function, initial resources, and
private information about supply uncertainty, including prices,
delivery losses, and spoilage.

Supplier contracting is a useful test domain because it captures many of the features we want to study in open-ended natural-language contracting while remaining structured enough to analyze. Players negotiate over a rich contract space $\Omega'$, then execute those agreements in a performance game $\mathcal{P}'$ with changing costs, inventory constraints, and possible shortfalls. This naturally creates room for contingent and incomplete contracting, since agreements can specify fallback terms for some contingencies while leaving others unresolved. At the same time, the domain remains tractable because each contract $\omega\in\Omega'$ can be parsed into formal trajectory constraints $C^\omega$ ---- prices, delivery quantities, payment schedules, and optional contingency clauses --- which can then be analyzed to quantify player performance.

\textbf{Settings and Environments.} We study three supplier settings that are structurally equivalent but have different linguistic descriptions: \emph{Catering}, \emph{Hotel Cleaning}, and \emph{AI Hosting}. The primary setting we evaluate is \emph{Catering} (illustrated in Figure \ref{fig:pipeline}), where the Customer seeks recurring deliveries
of several dishes subject to minimum requirements and a budget,
while the Supplier produces these dishes from ingredients using fixed
recipes. In each setting, we vary the difficulty of contracting by creating 6 environments that differ in environmental stochasticity (none, low, high), supplier capital, and customer valuations (low vs. high). See Appendix for full descriptions of each environment.

\textbf{Negotiation.}
In $\mathcal{N}'$, the Customer and Supplier exchange
structured natural-language proposals for at most 50 rounds. A typical contract $\omega\in\Omega'$
might specify per-product prices, a delivery schedule over production
weeks, a payment schedule over payment weeks, and several
contingency clauses. Agreement is reached only when the Customer
explicitly accepts the Caterer's most recent formal contract;
otherwise, both agents walk away with their initial capital.

\textbf{Performance.}
In $\mathcal{P}'$, an accepted contract is executed over the course of $L=11$ weeks with alternating payment and production periods. Let $D$ denote the product set and
$\mathcal{X}$ the ingredient set. A contract $\omega$ can be formalized as a trajectory constraint
$C^\omega=(\mathbf{p}^d,\mathbf{q},\mathbf{M},\kappa)$, where
$\mathbf{p}^d=(p_d)_{d\in D}$ are product prices, $\mathbf{q}=(q_{w,d})$ is the
delivery schedule of integer product counts $q_{w,d}\in\mathbb{Z}_{\ge 0}$,
$\mathbf{M}$ is the payment schedule, and $\kappa$ specifies the presence of four supported contingencies: substitution, payment
deduction, rollover, and grim trigger.

In each payment week $w$ the Customer chooses how much to pay $M_w^{\mathrm{paid}}$. In each production week, spoilage is applied, prices are realized, the Supplier orders
inputs, delivery losses determine receipts, and the Supplier
chooses a feasible production plan. Both parties observe the contract,
realized payments, and delivered products, but only the Supplier observes
its inventory, current prices, spoilage outcomes, and delivered inputs. Completed products are delivered immediately, unused
inventory carries forward, and no renegotiation or post-contract
communication is allowed after agreement.

\textbf{Customer Utility.} Let $B$ denote the Customer's
initial budget, $v_d$ its private value for product $d$, and
$q'_{w,d}$ the quantity of $d$ delivered in week $w$. The Customer's accumulated private value is
$U_{\mathrm{Cust}}(\tau)=B+\sum_{w \in W_{\mathrm{prod}}}\sum_{d \in D}v_d\,q'_{w,d}-\sum_{w \in W_{\mathrm{pay}}}M_w^{\mathrm{paid}}$. 

\textbf{Supplier Utility.} Let $o_{w,x}$ be the amount of
ingredient $x$ ordered in production period $w$, and let $p_{w,x}$ be
its realized price. The Supplier's profit is therefore
$U_{\mathrm{Supp}}(\tau)=\sum_{w \in W_{\mathrm{pay}}}M_w^{\mathrm{paid}}-\sum_{w \in W_{\mathrm{prod}}}\sum_{x \in \mathcal{X}}o_{w,x}\,p_{w,x}$.

\begin{figure}[!t]
\centering
\includegraphics[width=\columnwidth]{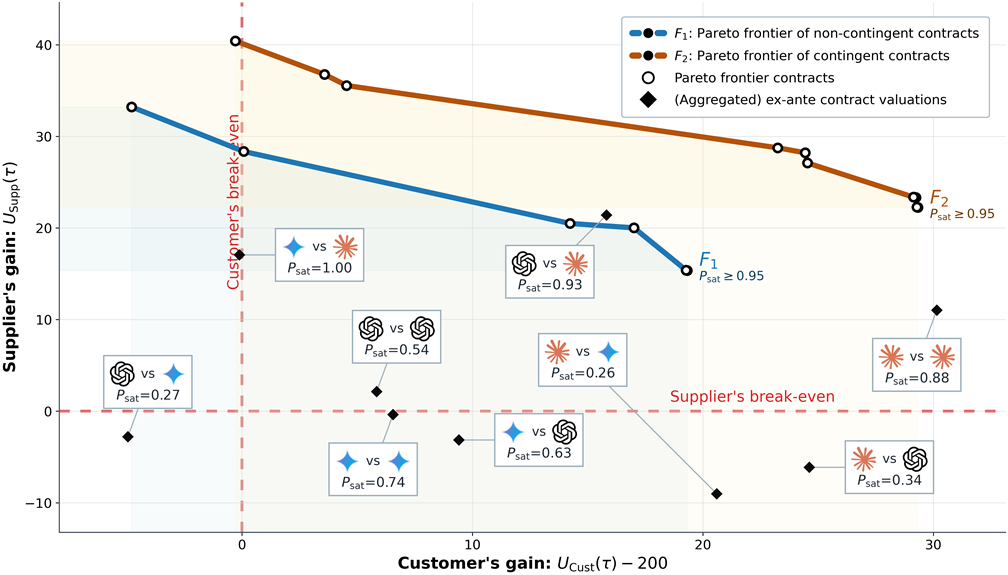}
\caption{\textbf{Negotiated contracts versus Pareto frontiers in a high stochasticity environment.} Each point is the joint utility and $P_{\mathrm{sat}}$ an LLM pair achieves, averaged across 3 negotiated contracts. $F_1$ is the Pareto frontier for $P_{\mathrm{sat}}\geq0.95$ contracts with no contingencies. $F_2$ is the corresponding frontier for contingent contracts. In each
callout, the left and right icons denote the Customer and Supplier models.}
\label{fig:neg-frontier}
\end{figure}

\section{Evaluating Rational Contracting}
\label{sec:evals}

We evaluate the contracting capabilities of three frontier LLMs (Claude
Opus~5, Gemini~3.6 Flash, and GPT--5.6-Sol) implemented within Concordia
\cite{vezhnevets2023generativeagentbasedmodelingactions} on
\textsc{ContractSim}, with high reasoning enabled. Prompt and scaffold details can be found in the Appendix.

\subsection{Rational Baselines and Contract Translation}
\label{subsec:baselines}

Following Section \ref{subsec:rational-contracting}, we implement baselines for rational performance as finite-horizon dynamic
programs, with policies conditioned on the contract,
private type, and observation history. RC is implemented by solving the constrained Markov Decision Process \cite{altman2021constrained} implied by Eq. \ref{eq:rc} under constraints $C^\omega$, RE is approximated as non-engagement (i.e. not paying or not delivering anything), and RCC switches from RC to non-engagement when the counter-party violates the contract (see Appendix for more details).

\begin{figure}[!t]
\centering
\includegraphics[width=\columnwidth]{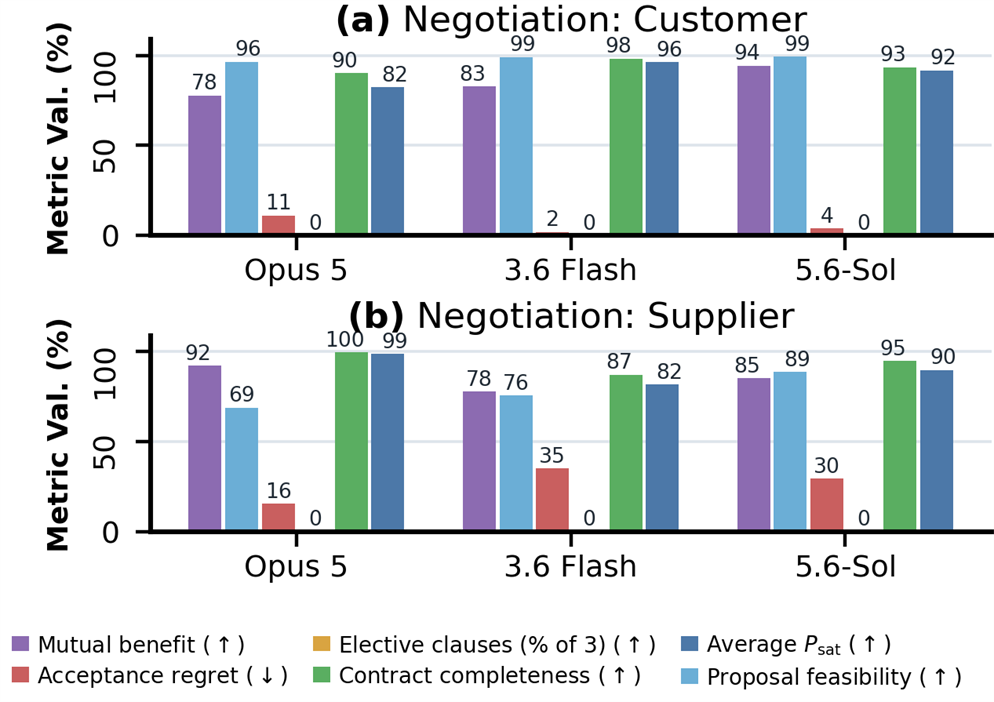}
\caption{\textbf{Role-conditioned negotiation quality.} Negotiation metrics
for (a) Customer and (b) Supplier models across all six environments.}
\label{fig:negotiation-quality-roles}
\end{figure}

To perform a natural language contract $\omega$ or evaluate its expected utility $U_i(\omega)$, we use Gemini 3.6 Flash (owing to its performance and fast inference) to translate $\omega$ into the structured constraints $C^\omega$ described in Section \ref{sec:benchmark} (see Appendix for parsing validation), and assume a violation rate of $\epsilon = 0.05$. We also support the translation and evaluation of several contingency clauses: grim trigger terminates engagement
after a violation, payment deduction reduces the next payment after a shortfall,
rollover carries a shortfall forward, and substitution permits delivery of value-equivalent products.

\begingroup
\providecommand{\meanstd}[2]{#1{\hspace{0.1em}\raisebox{0.25ex}{\scalebox{0.5}{$\pm$\,#2}}}}
\providecommand{\counterpartyrcc}{\shortstack{Rational Conditional\\Complier (RCC)}}
\providecommand{\counterpartyrc}{\shortstack{Rational Complier\\(RC)}}
\providecommand{\counterpartyrnc}{\shortstack{Rational Exploiter\\(RE)}}
\begin{table*}[!t]
\caption{\textbf{Customer-side performance metrics} against RCC and RE caterer counterparties. Entries are mean $\pm$ sample standard deviation across 180 environment--contract cases. -- denotes an undefined rate.}
\label{tab:synthetic-baseline-customer-full}
\centering
\setlength{\tabcolsep}{1.2pt}
\renewcommand{\arraystretch}{0.96}
\resizebox{\textwidth}{!}{
\begin{tabular}{ccccccccccccc}
\toprule
\multirow{2}{*}[-2pt]{\shortstack{\textbf{Counterparty}\\(\emph{Supplier})}} & \multirow{2}{*}[-2pt]{\shortstack{\textbf{Agent}\\(\emph{Customer})}} & \multicolumn{2}{c}{\textbf{Utility}} & \multicolumn{2}{c}{\textbf{(Ex-Post) Regret}} & \multicolumn{4}{c}{\textbf{Compliance}} & \multicolumn{3}{c}{\textbf{Defection}} \\
\cmidrule(lr){3-4}\cmidrule(lr){5-6}\cmidrule(lr){7-10}\cmidrule(lr){11-13}
 & & \textbf{Utility}\,$\uparrow$ & \textbf{Compl.}\,$\uparrow$ & \textbf{Regret}\,$\to 0$ & \textbf{Compl.}\,$\to 0$ & \textbf{Rate}\,$\uparrow$ & \textbf{Cond.}\,$\uparrow$ & \textbf{Exploited}\,$\downarrow$ & \textbf{TFT}\,$\uparrow$ & \textbf{Rate}\,$\downarrow$ & \textbf{Unilateral}\,$\downarrow$ & \textbf{Reciprocal}\,$\uparrow$ \\
\midrule
\multirow{4}{*}{\counterpartyrcc} & Opus 5 & \meanstd{324.4}{95.6} & \meanstd{324.4}{95.6} & \meanstd{-1.0}{33.8} & \meanstd{-1.0}{33.8} & \meanstd{86.1}{18.8} & \meanstd{88.0}{15.2} & \meanstd{0.0}{0.0} & \meanstd{88.1}{15.5} & \meanstd{50.6}{50.1} & \meanstd{48.9}{50.1} & \meanstd{100.0}{0.0} \\
 & 3.6 Flash & \meanstd{323.6}{95.1} & \meanstd{323.6}{95.1} & \meanstd{-0.2}{40.1} & \meanstd{-0.2}{40.1} & \meanstd{90.0}{14.3} & \meanstd{90.7}{13.0} & \meanstd{60.0}{52.9} & \meanstd{90.0}{14.4} & \meanstd{43.9}{49.8} & \meanstd{42.8}{49.6} & \meanstd{66.7}{57.7} \\
 & GPT--5.6-Sol & \meanstd{\textbf{326.0}}{95.2} & \meanstd{\textbf{326.0}}{95.2} & \meanstd{-2.6}{52.1} & \meanstd{-2.6}{52.1} & \meanstd{78.9}{18.0} & \meanstd{81.7}{13.0} & \meanstd{0.0}{0.0} & \meanstd{82.7}{12.0} & \meanstd{83.9}{36.9} & \meanstd{82.2}{38.3} & \meanstd{100.0}{0.0} \\
 & RCC & \meanstd{323.4}{88.4} & \meanstd{323.4}{88.4} & \meanstd{\textbf{0.0}}{0.0} & \meanstd{\textbf{0.0}}{0.0} & \meanstd{\textbf{98.7}}{10.4} & \meanstd{\textbf{100.0}}{0.0} & \meanstd{\textbf{0.0}}{0.0} & \meanstd{\textbf{100.0}}{0.0} & \meanstd{\textbf{1.7}}{12.8} & \meanstd{\textbf{0.0}}{0.0} & \meanstd{\textbf{100.0}}{0.0} \\
\midrule
\multirow{4}{*}{\counterpartyrnc} & Opus 5 & \meanstd{142.0}{28.4} & \meanstd{142.0}{28.4} & \meanstd{11.8}{21.8} & \meanstd{11.8}{21.8} & \meanstd{\textbf{74.6}}{35.6} & \meanstd{99.4}{5.3} & \meanstd{9.7}{10.1} & \meanstd{96.8}{7.9} & \meanstd{34.4}{47.7} & \meanstd{1.1}{10.5} & \meanstd{100.0}{0.0} \\
 & 3.6 Flash & \meanstd{149.6}{31.2} & \meanstd{149.6}{31.2} & \meanstd{4.3}{22.6} & \meanstd{4.3}{22.6} & \meanstd{72.0}{37.9} & \meanstd{98.6}{8.2} & \meanstd{3.3}{7.5} & \meanstd{97.7}{8.9} & \meanstd{36.1}{48.2} & \meanstd{2.8}{16.5} & \meanstd{100.0}{0.0} \\
 & GPT--5.6-Sol & \meanstd{153.8}{32.4} & \meanstd{153.8}{32.4} & \meanstd{0.0}{0.0} & \meanstd{0.0}{0.0} & \meanstd{72.4}{39.1} & \meanstd{100.0}{0.0} & \meanstd{0.0}{0.0} & \meanstd{100.0}{0.0} & \meanstd{33.3}{47.3} & \meanstd{0.0}{0.0} & \meanstd{100.0}{0.0} \\
 & RCC & \meanstd{\textbf{153.8}}{32.4} & \meanstd{\textbf{153.8}}{32.4} & \meanstd{\textbf{0.0}}{0.0} & \meanstd{\textbf{0.0}}{0.0} & \meanstd{72.4}{39.1} & \meanstd{\textbf{100.0}}{0.0} & \meanstd{\textbf{0.0}}{0.0} & \meanstd{\textbf{100.0}}{0.0} & \meanstd{\textbf{33.3}}{47.3} & \meanstd{\textbf{0.0}}{0.0} & \meanstd{\textbf{100.0}}{0.0} \\
\bottomrule
\end{tabular}
}
\end{table*}

\begin{table*}[!t]
\caption{\textbf{Supplier-side performance metrics} against RCC and RE customer counterparties. Entries are mean $\pm$ sample standard deviation across 180 environment--contract cases. -- denotes an undefined rate.}
\label{tab:synthetic-baseline-caterer-full}
\centering
\setlength{\tabcolsep}{1.2pt}
\renewcommand{\arraystretch}{0.96}
\resizebox{\textwidth}{!}{
\begin{tabular}{ccccccccccccc}
\toprule
\multirow{2}{*}[-2pt]{\shortstack{\textbf{Counterparty}\\(\emph{Customer})}} & \multirow{2}{*}[-2pt]{\shortstack{\textbf{Agent}\\(\emph{Supplier})}} & \multicolumn{2}{c}{\textbf{Utility}} & \multicolumn{2}{c}{\textbf{(Ex-Post) Regret}} & \multicolumn{4}{c}{\textbf{Compliance}} & \multicolumn{3}{c}{\textbf{Defection}} \\
\cmidrule(lr){3-4}\cmidrule(lr){5-6}\cmidrule(lr){7-10}\cmidrule(lr){11-13}
 & & \textbf{Utility}\,$\uparrow$ & \textbf{Compl.}\,$\uparrow$ & \textbf{Regret}\,$\to 0$ & \textbf{Compl.}\,$\to 0$ & \textbf{Rate}\,$\uparrow$ & \textbf{Cond.}\,$\uparrow$ & \textbf{Exploited}\,$\downarrow$ & \textbf{TFT}\,$\uparrow$ & \textbf{Rate}\,$\downarrow$ & \textbf{Unilateral}\,$\downarrow$ & \textbf{Reciprocal}\,$\uparrow$ \\
\midrule
\multirow{4}{*}{\counterpartyrcc} & Opus 5 & \meanstd{62.5}{36.1} & \meanstd{45.0}{42.3} & \meanstd{-9.8}{25.0} & \meanstd{7.4}{35.3} & \meanstd{87.0}{20.0} & \meanstd{88.1}{17.4} & -- & \meanstd{88.9}{16.4} & \meanstd{41.7}{49.4} & \meanstd{41.7}{49.4} & -- \\
 & 3.6 Flash & \meanstd{46.1}{41.9} & \meanstd{43.8}{44.6} & \meanstd{6.5}{25.9} & \meanstd{8.6}{29.0} & \meanstd{95.7}{14.1} & \meanstd{96.5}{10.8} & -- & \meanstd{97.1}{9.0} & \meanstd{11.7}{32.2} & \meanstd{11.7}{32.2} & -- \\
 & GPT--5.6-Sol & \meanstd{\textbf{63.5}}{36.4} & \meanstd{45.6}{36.3} & \meanstd{-10.9}{25.1} & \meanstd{6.9}{31.6} & \meanstd{85.9}{22.5} & \meanstd{86.5}{21.3} & -- & \meanstd{87.1}{20.5} & \meanstd{40.6}{49.2} & \meanstd{40.6}{49.2} & -- \\
 & RCC & \meanstd{52.6}{36.2} & \meanstd{\textbf{52.4}}{36.5} & \meanstd{\textbf{0.0}}{0.0} & \meanstd{\textbf{0.0}}{0.0} & \meanstd{\textbf{97.2}}{15.4} & \meanstd{\textbf{97.2}}{15.4} & -- & \meanstd{\textbf{98.6}}{9.0} & \meanstd{\textbf{3.9}}{19.4} & \meanstd{\textbf{3.9}}{19.4} & -- \\
\midrule
\multirow{4}{*}{\counterpartyrnc} & Opus 5 & \meanstd{-19.6}{12.0} & \meanstd{-19.6}{12.0} & \meanstd{19.6}{12.0} & \meanstd{19.6}{12.0} & \meanstd{7.9}{11.5} & -- & \meanstd{7.9}{11.5} & \meanstd{92.1}{11.5} & \meanstd{100.0}{0.0} & -- & \meanstd{100.0}{0.0} \\
 & 3.6 Flash & \meanstd{-19.7}{13.6} & \meanstd{-19.7}{13.6} & \meanstd{19.7}{13.6} & \meanstd{19.7}{13.6} & \meanstd{\textbf{9.6}}{11.1} & -- & \meanstd{9.6}{11.1} & \meanstd{90.4}{11.1} & \meanstd{100.0}{0.0} & -- & \meanstd{100.0}{0.0} \\
 & GPT--5.6-Sol & \meanstd{-0.4}{4.0} & \meanstd{-0.4}{4.0} & \meanstd{0.4}{4.0} & \meanstd{0.4}{4.0} & \meanstd{0.2}{2.1} & -- & \meanstd{0.2}{2.1} & \meanstd{99.8}{2.1} & \meanstd{100.0}{0.0} & -- & \meanstd{100.0}{0.0} \\
 & RCC & \meanstd{\textbf{0.0}}{0.0} & \meanstd{\textbf{0.0}}{0.0} & \meanstd{\textbf{0.0}}{0.0} & \meanstd{\textbf{0.0}}{0.0} & \meanstd{0.0}{0.0} & -- & \meanstd{\textbf{0.0}}{0.0} & \meanstd{\textbf{100.0}}{0.0} & \meanstd{\textbf{100.0}}{0.0} & -- & \meanstd{\textbf{100.0}}{0.0} \\
\bottomrule
\end{tabular}
}
\end{table*}
\endgroup

\subsection{Evaluating Contract Negotiation}
\label{sec:infra}

\newcommand{\meanstd}[2]{#1{\hspace{0.1em}\raisebox{0.25ex}{\scalebox{0.5}{$\pm$\,#2}}}}
\newcommand{\thinrowrule}{\specialrule{0.05em}{0pt}{0pt}}
\newcommand{\thickrowrule}{\specialrule{0.11em}{0pt}{0pt}}

In each of our six \emph{Catering} environments, we run three repeats of the ordered $3 \times 3$
model-pair setup. This yields 162 negotiations, of which 159 reach agreement
($98.1\%$).

We evaluate the quality of accepted contracts $\omega$ and the
negotiation process using \emph{expected Customer and Supplier
utilities} $(U_{\textrm{Cust}}(i), U_{\textrm{Supp}}(i))$, \emph{satisfaction probability} $P_\text{sat}$, whether $\omega$ achieves \emph{mutual
benefit} relative to disagreement, \emph{proposal feasibility}, and
\emph{acceptance regret} relative to earlier feasible offers. Each metric is
averaged over negotiations in which the model occupies the scored role. We
additionally measure \emph{contingency count} over the elective clauses and
\emph{contract completeness} under the optimal RCC policy. Full definitions are given in the Appendix. We now highlight several key findings.

\textbf{Negotiation is inefficient in stochastic environments.}
Figure~\ref{fig:neg-frontier} shows $U_{\textrm{Cust}}(i)$, $U_{\textrm{Supp}}(i)$ and $P_\text{sat}$ (averaged across 3 runs) for the nine LLM pairings in the high-randomness
Environment~5. We compare these points against the Pareto frontier of both non-contingent and contingent contracts, created by sweeping counterparty-utility floors $\eta$ for each role, then solving for the contract $\omega^*(\eta)$ by maximizing over the space of formalized contracts (Eq.~\ref{eq:rational-negotiation}). LLM agents do not reliably achieve efficiency or mutual benefit in these high-randomness environments: only 3 of 9 pairings are
mutually beneficial in Environment~5, and only 7 of 9 in Environment~6, compared
with all pairings in the other four environments. This may partly be because LLMs do not negotiate contingent contracts, which can achieve higher utilities for the same $P_\text{sat}$ ($F_2$ vs. $F_1$ in Figure~\ref{fig:neg-frontier}). However, on easier environments, we find that LLM-negotiated contracts are fairly efficient, with many even lying on the Pareto frontier (see Appendix).

\textbf{Mean negotiation quality is high but imperfect.}
Averaged across easy and difficult environments, $P_{\mathrm{sat}}$ is $90.0\%$, but only
$77.4\%$ of contracts meet the rational baseline's $95\%$ satisfaction threshold.
$84.9\%$ are mutually beneficial, i.e., LLM agents still accept deals that make them worse off $15.1\%$ of the time. Contract completeness is $93.8\%$ across models, which is surprising given the lack of contingency clauses in all negotiated contracts.
Figure~\ref{fig:negotiation-quality-roles} also shows strong role asymmetries. Customer
proposals are feasible in
$96.5$--$99.3\%$ of cases, whereas caterer proposals range from $69.0$ to
$88.7\%$. Customer acceptance regret ranges from $1.9$ to $11.1\%$, whereas
Supplier acceptance regret ranges from $15.7$ to $35.2\%$. Thus, Supplier models
often pass over a better proposal before offering the eventual final
agreement.

\subsection{Evaluating Contract Performance}
\label{sec:eval-perf}

\begin{figure}[!t]
\centering
\includegraphics[width=0.95\columnwidth]{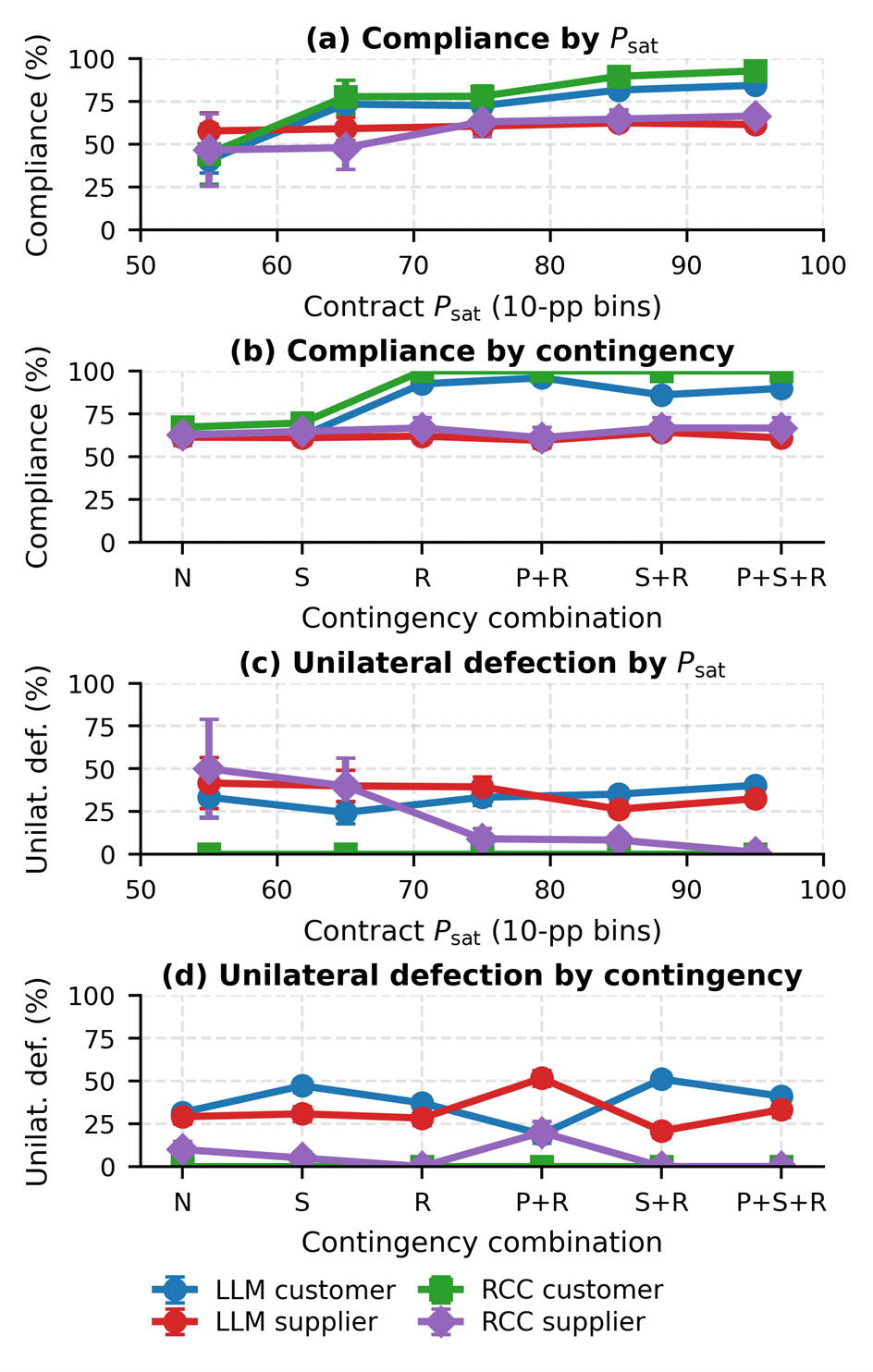}
\captionsetup{skip=3.3pt}
\caption{\textbf{Execution under stochasticity.} Mean ($\pm$SE) compliance
(a--b) and unilateral defection (c--d), pooled across stochastic
environments, counterparties, and LLMs, by $P_{\mathrm{sat}}$ (a,c) or
contingency (b,d). $N/P/S/R$: none/payment deduction/substitution/rollover;
Supplier--RE defection is ineligible.}
\label{fig:stochastic-behavior}
\end{figure}

\begin{figure}[!t]
\centering
\includegraphics[width=0.95\columnwidth]{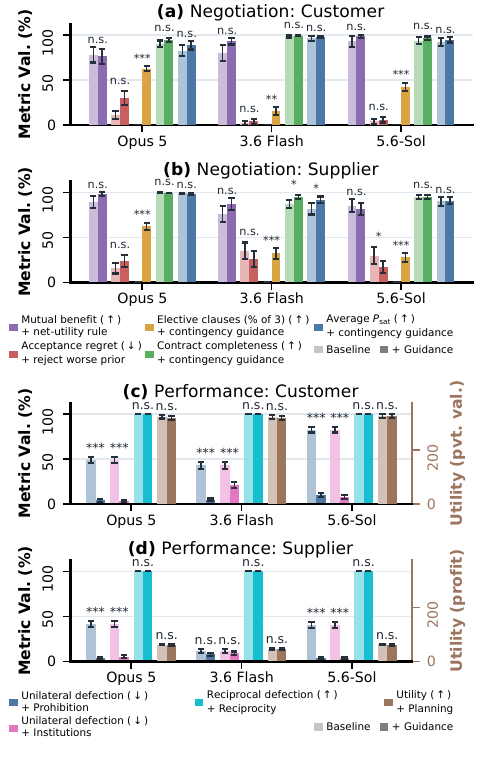}
\includegraphics[width=0.95\columnwidth]{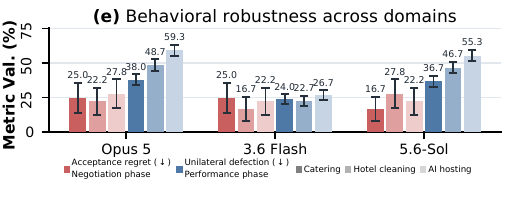}
\captionsetup{skip=0.7pt}
\caption{\textbf{Prompt ablations and behavioral robustness.} (a--d)
Single-instruction ablations. (e) Negotiation and performance quality across matched supplier settings, keeping the underlying environment fixed (high stochasticity).}
\label{fig:prompt-ablations}
\label{fig:story-behavior-robustness}
\end{figure}

Following our rational contracting framework in
Section~\ref{subsec:rational-contracting}, we report four classes of performance metrics measuring the degree to which agents are both rational and cooperative:

\begin{itemize}
    \item \textbf{Utility:} Realized payoff; \emph{Compliant Utility} removes
    positive payoff from contract violating periods.
    \item \textbf{Regret:} Payoff shortfall relative to RCC play on the same history;
    \emph{Compliant Regret} compares compliant utilities.
    \item \textbf{Compliance:} Turn-level contract compliance rate. \emph{Conditional}
    measures the rate before any counterparty defection; \emph{Exploited}
    measures the rate after counterparty defection;
    \emph{Tit-for-Tat} measures the rate at which compliance/defection is met with the same response.
    \item \textbf{Defection:} Fraction of games with any contract violation.
    \emph{Unilateral} means defecting w/o experiencing defection; \emph{Reciprocal}
    means defecting after the counterparty defects first. Conditional rates are computed only for games with the relevant
    opportunity.
\end{itemize}

To disentangle how well agents execute contracts from how well they negotiate them, we evaluate contract execution on 180 \emph{synthetic} contracts in the \emph{Catering} setting, spanning five
$P_\text{sat}$ levels and six contingency variants. We use our rational baselines (RC, RE, RCC) as counterparties, eliminating variance that might arise from using unpredictable LLM opponents.
Altogether this gives 4,860 performance games.
Tables~\ref{tab:synthetic-baseline-customer-full}
and~\ref{tab:synthetic-baseline-caterer-full} report LLM \emph{customers} and
\emph{suppliers} against cooperative RCC and adversarial RE counterparties,
together with the corresponding RCC baseline pairings. Additional results (e.g. for the RC counterparty) live in the Appendix.

\textbf{LLM agents achieve high utility via frequent defection against cooperative counterparties.}
Against RCC suppliers, LLM customers achieve equal or higher utility than the RCC customer, but only by defecting at much higher rates  (Table~\ref{tab:synthetic-baseline-customer-full}). GPT--5.6-Sol
achieves the highest utility, but also the lowest compliance (78\% of turns) and highest
defection rate (82\% of games). Similar behavior appears in LLM suppliers (Table~\ref{tab:synthetic-baseline-caterer-full}):
although Claude Opus 5 and GPT--5.6-Sol outperform RCC in profit, they do so by
defecting frequently against the RCC customer by under-delivering (up to 42\% of games).

\textbf{LLM agents successfully defend against exploitation.} While LLM agents fall short of being compliant contractors, their performance against the RE baseline --- which defects immediately by withholding pay or goods --- shows they are fairly robust to exploitation. Like RCC, LLM agents guard against defection by reciprocally defecting in 100\% of relevant games, while showing low exploited compliance (0--10\%).

\textbf{Low compliance is not driven by compliance difficulty.} One reason why LLM agents might fail to comply to a contract is due to the difficulty of compliance: if terms are too stringent ($P_\text{sat}$ is low), then non-compliance is hard to avoid. Figure~\ref{fig:stochastic-behavior} rules out this possibility, showing that LLMs \emph{consistently non-compliant} even as contract $P_\text{sat}$ increases from 50\% to 95\%, and as more contingencies are added that should make it easier to comply.

\subsection{Prompt Guidance and Setting Robustness}
\label{sec:prompt-ablations}

We rerun all LLM experiments with the addition of prompt guidance encouraging more rational or cooperative decision-making. We also test prompt robustness by porting equivalent environments to the \emph{Hotel Cleaning} and \emph{AI Hosting} settings. 

Figure \ref{fig:prompt-ablations} summarizes the results (full results in Appendix). Prompting LLMs to prohibit unilateral defection or consider institutional incentives has the greatest effect (Fig. \ref{fig:prompt-ablations}c--d), substantially reducing defection. Most other prompt additions are insignificant, with the exception of guiding negotiators to include contingency clauses (Fig. \ref{fig:prompt-ablations}a--b). However, the included contingencies do little to improve contract completeness and $P_{\text{sat}}$, as one might have hoped. Across supplier settings (Fig. \ref{fig:prompt-ablations}e), LLM behavior is mostly consistent: acceptance regret is substantial and unilateral defection remains high.
\section{Discussion and Future Work}
\label{sec:discussion}

This paper contributes a novel framework that characterizes how agents can negotiate and execute time-extended, contingent, and incomplete contracts expressed in language, and how they can do so rationally and cooperatively. We find that current LLM agents fall short of being reliably efficient and trustworthy contractors, though prompt guidance can control their default tendency to defect on contracts.

Despite these contributions, there remain important limitations that should be addressed by future work. While our rational cooperative baselines set a standard that LLM agents should ideally meet, it remains unclear how to train LLMs for such qualities. And while our benchmark covers many hitherto unstudied aspects of AI contracting, many more aspects remain, including re-negotiation, contracting in an open-market, and contract arbitration. By clearly formulating our account, it is our hope that we can pave the way for these further investigations, and thereby contribute to a richer and more disciplined science of the future agentic economy.
\bibliography{references}

@article{myerson1983efficient,
  author  = {Myerson, Roger B. and Satterthwaite, Mark A.},
  title   = {Efficient Mechanisms for Bilateral Trading},
  journal = {Journal of Economic Theory},
  volume  = {29},
  number  = {2},
  pages   = {265--281},
  year    = {1983}
}

@misc{rothschild2025agenticeconomy,
      title={The Agentic Economy}, 
      author={David M. Rothschild and Markus Mobius and Jake M. Hofman and Eleanor W. Dillon and Daniel G. Goldstein and Nicole Immorlica and Sonia Jaffe and Brendan Lucier and Aleksandrs Slivkins and Matthew Vogel},
      year={2025},
      eprint={2505.15799},
      archivePrefix={arXiv},
      primaryClass={cs.CY},
      url={https://arxiv.org/abs/2505.15799}, 
}

@article{nash1950bargaining,
  author  = {Nash, John F.},
  title   = {The Bargaining Problem},
  journal = {Econometrica},
  volume  = {18},
  number  = {2},
  pages   = {155--162},
  year    = {1950}
}

@article{rubinstein1982bargaining,
  author  = {Rubinstein, Ariel},
  title   = {Perfect Equilibrium in a Bargaining Model},
  journal = {Econometrica},
  volume  = {50},
  number  = {1},
  pages   = {97--110},
  year    = {1982},
  doi     = {10.2307/1912531}
}

@inproceedings{hendrycks2021cuad,
  author    = {Hendrycks, Dan and Burns, Collin and Chen, Anya and Ball, Spencer},
  title     = {{CUAD}: An Expert-Annotated {NLP} Dataset for Legal Contract Review},
  booktitle = {Advances in Neural Information Processing Systems, Datasets and Benchmarks Track},
  year      = {2021},
  url       = {https://datasets-benchmarks-proceedings.neurips.cc/paper_files/paper/2021/hash/6ea9ab1baa0efb9e19094440c317e21b-Abstract-round1.html}
}

@inproceedings{koreeda2021contractnli,
  author    = {Koreeda, Yuta and Manning, Christopher},
  title     = {{ContractNLI}: A Dataset for Document-level Natural Language Inference for Contracts},
  booktitle = {Findings of the Association for Computational Linguistics: EMNLP 2021},
  pages     = {1907--1919},
  year      = {2021},
  doi       = {10.18653/v1/2021.findings-emnlp.164},
  url       = {https://aclanthology.org/2021.findings-emnlp.164/}
}

@inproceedings{chalkidis2022lexglue,
  author    = {Chalkidis, Ilias and Jana, Abhik and Hartung, Dirk and Bommarito, Michael and Androutsopoulos, Ion and Katz, Daniel and Aletras, Nikolaos},
  title     = {{LexGLUE}: A Benchmark Dataset for Legal Language Understanding in English},
  booktitle = {Proceedings of the 60th Annual Meeting of the Association for Computational Linguistics (Volume 1: Long Papers)},
  pages     = {4310--4330},
  year      = {2022},
  doi       = {10.18653/v1/2022.acl-long.297},
  url       = {https://aclanthology.org/2022.acl-long.297/}
}

@misc{vezhnevets2023generativeagentbasedmodelingactions,
      title={Generative agent-based modeling with actions grounded in physical, social, or digital space using Concordia}, 
      author={Alexander Sasha Vezhnevets and John P. Agapiou and Avia Aharon and Ron Ziv and Jayd Matyas and Edgar A. Duéñez-Guzmán and William A. Cunningham and Simon Osindero and Danny Karmon and Joel Z. Leibo},
      year={2023},
      eprint={2312.03664},
      archivePrefix={arXiv},
      primaryClass={cs.AI},
      url={https://arxiv.org/abs/2312.03664}, 
}

@misc{akata2023repeated,
  author = {Akata, Elif and Schulz, Lion and Coda-Forno, Julian and Oh, Seong Joon and Bethge, Matthias and Schulz, Eric},
  title = {Playing Repeated Games with Large Language Models},
  year = {2023},
  eprint = {2305.16867},
  archivePrefix = {arXiv},
  primaryClass = {cs.CL},
  url = {https://arxiv.org/abs/2305.16867}
}

@article{bakhtin2022diplomacy,
  author = {Bakhtin, Anton and Brown, Noam and Dinan, Emily and Farina, Gabriele and Flaherty, Colin and Fried, Daniel and Goff, Andrew and Gray, Jonathan and Hu, Hengyuan and Jacob, Athul Paul and Komeili, Mojtaba and Konath, Karthik and Kwon, Minae and Lerer, Adam and Lewis, Mike and Miller, Alexander H. and Mitts, Sasha and Renduchintala, Adithya and Roller, Stephen and Rowe, Dirk and Shi, Weiyan and Spisak, Joe and Wei, Alexander and Wu, David and Zhang, Hugh and Zijlstra, Markus},
  title = {Human-Level Play in the Game of {Diplomacy} by Combining Language Models with Strategic Reasoning},
  journal = {Science},
  year = {2022},
  volume = {378},
  number = {6624},
  pages = {1067--1074},
  doi = {10.1126/science.ade9097},
  url = {https://www.science.org/doi/10.1126/science.ade9097}
}

@inproceedings{lewis2017deal,
  author = {Lewis, Mike and Yarats, Denis and Dauphin, Yann and Parikh, Devi and Batra, Dhruv},
  title = {Deal or No Deal? End-to-End Learning of Negotiation Dialogues},
  booktitle = {Proceedings of the 2017 Conference on Empirical Methods in Natural Language Processing},
  year = {2017},
  pages = {2443--2453},
  doi = {10.18653/v1/D17-1259},
  url = {https://aclanthology.org/D17-1259/}
}

@article{jennings2001automated,
  author = {Jennings, Nicholas R. and Faratin, Peyman and Lomuscio, Alessio R. and Parsons, Simon and Sierra, Carles and Wooldridge, Michael},
  title = {Automated Negotiation: Prospects, Methods and Challenges},
  journal = {Group Decision and Negotiation},
  year = {2001},
  volume = {10},
  number = {2},
  pages = {199--215},
  doi = {10.1023/A:1008746126376}
}

@article{kraus1995multiagent,
  author = {Kraus, Sarit and Wilkenfeld, Jonathan and Zlotkin, Gilad},
  title = {Multiagent Negotiation Under Time Constraints},
  journal = {Artificial Intelligence},
  year = {1995},
  volume = {75},
  number = {2},
  pages = {297--345},
  doi = {10.1016/0004-3702(94)00021-R}
}

@article{faratin1998negotiation,
  author = {Faratin, Peyman and Sierra, Carles and Jennings, Nicholas R.},
  title = {Negotiation Decision Functions for Autonomous Agents},
  journal = {Robotics and Autonomous Systems},
  year = {1998},
  volume = {24},
  number = {3--4},
  pages = {159--182},
  doi = {10.1016/S0921-8890(98)00029-3}
}

@article{kraus1998argumentation,
  author = {Kraus, Sarit and Sycara, Katia and Evenchik, Amir},
  title = {Reaching Agreements through Argumentation: A Logical Model and Implementation},
  journal = {Artificial Intelligence},
  year = {1998},
  volume = {104},
  number = {1--2},
  pages = {1--69},
  doi = {10.1016/S0004-3702(98)00078-2}
}

@article{almgren2000optimal,
  author = {Almgren, Robert and Chriss, Neil},
  title = {Optimal Execution of Portfolio Transactions},
  journal = {Journal of Risk},
  year = {2000},
  volume = {3},
  number = {2},
  pages = {5--39},
  doi = {10.21314/JOR.2001.041}
}

@article{hendershott2011liquidity,
  author = {Hendershott, Terrence and Jones, Charles M. and Menkveld, Albert J.},
  title = {Does Algorithmic Trading Improve Liquidity?},
  journal = {The Journal of Finance},
  year = {2011},
  volume = {66},
  number = {1},
  pages = {1--33},
  doi = {10.1111/j.1540-6261.2010.01624.x}
}

@inproceedings{stone2002attac,
  author = {Stone, Peter and Schapire, Robert E. and Csirik, J{\'a}nos A. and Littman, Michael L. and McAllester, David},
  title = {{ATTac}-2001: A Learning, Autonomous Bidding Agent},
  booktitle = {Agent-Mediated Electronic Commerce {IV}},
  year = {2002},
  pages = {143--160},
  doi = {10.1007/3-540-36378-5_9}
}

@inproceedings{piatti2024cooperate,
  author = {Piatti, Giorgio and Jin, Zhijing and Kleiman-Weiner, Max and Sch{\"o}lkopf, Bernhard and Sachan, Mrinmaya and Mihalcea, Rada},
  title = {Cooperate or Collapse: Emergence of Sustainable Cooperation in a Society of {LLM} Agents},
  booktitle = {Advances in Neural Information Processing Systems},
  year = {2024},
  url = {https://openreview.net/forum?id=0zWzJj6lO3}
}

@inproceedings{abdelnabi2024stakeholders,
  author = {Abdelnabi, Sahar and Gomaa, Amr and Sivaprasad, Sarath and Sch{\"o}nherr, Lea and Fritz, Mario},
  title = {Cooperation, Competition, and Maliciousness: {LLM}-Stakeholders Interactive Negotiation},
  booktitle = {Advances in Neural Information Processing Systems, Datasets and Benchmarks Track},
  year = {2024},
  url = {https://openreview.net/forum?id=59E19c6yrN}
}

@misc{bianchi2024negotiationarena,
  title = {How Well Can {LLM}s Negotiate? {NegotiationArena} Platform and Analysis},
  author = {Bianchi, Federico and Chia, Patrick John and Yuksekgonul, Mert and Tagliabue, Jacopo and Jurafsky, Dan and Zou, James},
  year = {2024},
  eprint = {2402.05863},
  archivePrefix = {arXiv},
  primaryClass = {cs.CL},
  url = {https://arxiv.org/abs/2402.05863}
}

@inproceedings{davidson2024evaluating,
  author = {Davidson, Tim R. and Veselovsky, Veniamin and Josifoski, Martin and Peyrard, Maxime and Bosselut, Antoine and Kosinski, Michal and West, Robert},
  title = {Evaluating Language Model Agency through Negotiations},
  booktitle = {The Twelfth International Conference on Learning Representations ({ICLR})},
  year = {2024},
  url = {https://openreview.net/forum?id=3ZqKxMHcAg}
}

@inproceedings{xia2024measuring,
  author = {Xia, Tian and He, Zhiwei and Ren, Tong and Miao, Yibo and Zhang, Zhuosheng and Yang, Yang and Wang, Rui},
  title = {Measuring Bargaining Abilities of {LLM}s: A Benchmark and A Buyer-Enhancement Method},
  booktitle = {Findings of the Association for Computational Linguistics: {ACL} 2024},
  year = {2024},
  url = {https://arxiv.org/abs/2402.15813}
}

@misc{fu2023improving,
  title = {Improving Language Model Negotiation with Self-Play and In-Context Learning from {AI} Feedback},
  author = {Fu, Yao and Peng, Hao and Khot, Tushar and Lapata, Mirella},
  year = {2023},
  eprint = {2305.10142},
  archivePrefix = {arXiv},
  primaryClass = {cs.CL},
  url = {https://arxiv.org/abs/2305.10142}
}

@article{tewolde2026coopeval,
  title={CoopEval: Benchmarking Cooperation-Sustaining Mechanisms and LLM Agents in Social Dilemmas},
  author={Tewolde, Emanuel and Zhang, Xiao and Piedrahita, David Guzman and Conitzer, Vincent and Jin, Zhijing},
  journal={arXiv preprint arXiv:2604.15267},
  year={2026}
}

@article{hendon1996one,
  title={The one-shot-deviation principle for sequential rationality},
  author={Hendon, Ebbe and Jacobsen, Hans J{\o}rgen and Sloth, Birgitte},
  journal={Games and Economic Behavior},
  volume={12},
  number={2},
  pages={274--282},
  year={1996},
  publisher={Elsevier}
}

@article{baker2002relational,
  title={Relational Contracts and the Theory of the Firm},
  author={Baker, George and Gibbons, Robert and Murphy, Kevin J.},
  journal={The Quarterly Journal of Economics},
  volume={117},
  number={1},
  pages={39--84},
  year={2002},
  doi={10.1162/003355302753399445},
  publisher={Oxford University Press}
}

@inproceedings{hadfieldmenell2019incomplete,
  author = {Hadfield-Menell, Dylan and Hadfield, Gillian K.},
  title = {Incomplete Contracting and {AI} Alignment},
  booktitle = {Proceedings of the {AAAI}/{ACM} Conference on {AI}, Ethics, and Society},
  year = {2019},
  url = {https://arxiv.org/abs/1804.04268}
}

@article{crawford1982strategic,
  title={Strategic information transmission},
  author={Crawford, Vincent P and Sobel, Joel},
  journal={Econometrica: Journal of the Econometric Society},
  pages={1431--1451},
  year={1982},
  publisher={JSTOR}
}

@article{farrell1996cheap,
  title={Cheap talk},
  author={Farrell, Joseph and Rabin, Matthew},
  journal={Journal of Economic perspectives},
  volume={10},
  number={3},
  pages={103--118},
  year={1996},
  publisher={American Economic Association}
}

@article{telser1980theory,
  title={A theory of self-enforcing agreements},
  author={Telser, Lester G},
  journal={Journal of business},
  pages={27--44},
  year={1980},
  publisher={JSTOR}
}

@article{chatterjee1983bargaining,
  title={Bargaining under incomplete information},
  author={Chatterjee, Kalyan and Samuelson, William},
  journal={Operations research},
  volume={31},
  number={5},
  pages={835--851},
  year={1983},
  publisher={INFORMS}
}

@incollection{akerlof1978market,
  title={The market for “lemons”: Quality uncertainty and the market mechanism},
  author={Akerlof, George A},
  booktitle={Uncertainty in economics},
  pages={235--251},
  year={1978},
  publisher={Elsevier}
}

@book{weirich2009collective,
  title={Collective rationality: Equilibrium in cooperative games},
  author={Weirich, Paul},
  year={2009},
  publisher={Oxford University Press}
}

@article{gauthier1990rationality,
  title={The rationality of cooperation},
  author={Gauthier, David},
  journal={Rationality in action: Contemporary approaches},
  pages={315},
  year={1990},
  publisher={Cambridge University Press}
}

@article{tomasev2025virtual,
  title={Virtual agent economies},
  author={Tomasev, Nenad and Franklin, Matija and Leibo, Joel Z and Jacobs, Julian and Cunningham, William A and Gabriel, Iason and Osindero, Simon},
  journal={arXiv preprint arXiv:2509.10147},
  year={2025}
}

@techreport{shahidi2025coasean,
  title={The Coasean Singularity? Demand, Supply, and Market Design with AI Agents},
  author={Shahidi, Peyman and Rusak, Gili and Manning, Benjamin S and Fradkin, Andrey and Horton, John J},
  year={2025},
  institution={National Bureau of Economic Research}
}

@article{szabo1997formalizing,
  title={Formalizing and securing relationships on public networks},
  author={Szabo, Nick},
  journal={First monday},
  year={1997}
}

@article{werbach2017contracts,
  title={Contracts ex machina},
  author={Werbach, Kevin and Cornell, Nicolas},
  journal={Duke Law Journal},
  volume={67},
  pages={313},
  year={2017},
  publisher={HeinOnline}
}

@misc{krier2025coasean,
  author       = {Krier, Seb},
  title        = {Coasean Bargaining at Scale: Decentralization, Coordination, and Co-existence with {AGI}},
  year         = {2025},
  month        = sep,
  howpublished = {Cosmos Institute Blog},
  url          = {https://blog.cosmos-institute.org/p/coasean-bargaining-at-scale}
}

@inproceedings{chan2024negotiationtom,
  title={{NegotiationToM}: A benchmark for stress-testing machine theory of mind on negotiation surrounding},
  author={Chan, Chunkit and Jiayang, Cheng and Yim, Yauwai and Deng, Zheye and Fan, Wei and Li, Haoran and Liu, Xin and Zhang, Hongming and Wang, Weiqi and Song, Yangqiu},
  booktitle={Findings of the Association for Computational Linguistics: EMNLP 2024},
  pages={4211--4241},
  year={2024}
}

@article{de2017negotiating,
  title={Negotiating with other minds: the role of recursive theory of mind in negotiation with incomplete information},
  author={De Weerd, Harmen and Verbrugge, Rineke and Verheij, Bart},
  journal={Autonomous Agents and Multi-Agent Systems},
  volume={31},
  number={2},
  pages={250--287},
  year={2017},
  publisher={Springer}
}

@article{schauer2023nine,
  title={Nine degrees of uncertainty in negotiations},
  author={Schauer, Marco and Majer, Johann M and Tr{\"o}tschel, Roman},
  journal={Negotiation Journal},
  volume={39},
  number={2},
  pages={207--228},
  year={2023},
  publisher={MIT Press}
}

@article{battaglini2019optimal,
  title={Optimal dynamic contracting: The first-order approach and beyond},
  author={Battaglini, Marco and Lamba, Rohit},
  journal={Theoretical Economics},
  volume={14},
  number={4},
  pages={1435--1482},
  year={2019},
  publisher={Wiley Online Library}
}

@article{bazerman1999betting,
  title={Betting on the future: The virtues of contingent contracts},
  author={Bazerman, Max H and Gillespie, James J},
  journal={Harvard Business Review},
  volume={77},
  number={5},
  pages={155--155},
  year={1999},
  publisher={Harvard Business School Press}
}

@article{choi2009strategic,
  title={Strategic vagueness in contract design: The case of corporate acquisitions},
  author={Choi, Albert and Triantis, George},
  journal={Yale Law Journal},
  volume={119},
  pages={848},
  year={2009},
  publisher={HeinOnline}
}

@article{bernheim1998incomplete,
  title={Incomplete contracts and strategic ambiguity},
  author={Bernheim, B Douglas and Whinston, Michael D},
  journal={American Economic Review},
  pages={902--932},
  year={1998},
  publisher={JSTOR}
}

@article{deng2025planu,
  title={Planu: Large language model reasoning through planning under uncertainty},
  author={Deng, Ziwei and Deng, Mian and Liang, Chenjing and Gao, Zeming and Ma, Chennan and Lin, Chenxing and Zhang, Haipeng and Mei, Songzhu and Shen, Siqi and Wang, Cheng},
  journal={Advances in Neural Information Processing Systems},
  volume={38},
  pages={142636--142673},
  year={2025}
}

@inproceedings{kim2025hypothesis,
  title={Hypothesis-Driven Theory-of-Mind Reasoning for Large Language Models},
  author={Kim, Hyunwoo and Sclar, Melanie and Zhi-Xuan, Tan and Ying, Lance and Levine, Sydney and Liu, Yang and Tenenbaum, Joshua B and Choi, Yejin},
  booktitle={Second Conference on Language Modeling},
  year={2025}
}

@inproceedings{ying2025liras,
    title = "Language-Informed Synthesis of Rational Agent Models for Grounded Theory-of-Mind Reasoning On-the-fly",
    author = "Ying, Lance  and
      Truong, Ryan  and
      Collins, Katherine M.  and
      Zhang, Cedegao E.  and
      Wei, Megan  and
      BrookeWilson, Tyler  and
      Zhi-Xuan, Tan  and
      Wong, Lionel  and
      Tenenbaum, Joshua B.",
    booktitle = "Findings of the Association for Computational Linguistics: EMNLP 2025",
    month = nov,
    year = "2025",
    publisher = "Association for Computational Linguistics",
    pages = "12217--12235",
}

@inproceedings{hansen2004dynamic,
  title={Dynamic programming for partially observable stochastic games},
  author={Hansen, Eric A and Bernstein, Daniel S and Zilberstein, Shlomo},
  booktitle={Proceedings of the 19th National Conference on Artifical Intelligence},
  pages={709--715},
  year={2004}
}

@article{hart1988incomplete,
  title={Incomplete Contracts and the Theory of the Firm},
  author={Hart, Oliver D},
  journal={The journal of law, economics, and organization},
  volume={4},
  number={1},
  pages={119--139},
  year={1988},
  publisher={Oxford University Press}
}

@article{thomas2022rational,
  title={Rational Contract Design},
  author={Thomas, Naveen},
  journal={Alabama Law Review},
  volume={74},
  pages={967},
  year={2022},
  publisher={HeinOnline}
}

@inproceedings{qian2026strategic,
  title={Strategic tradeoffs between humans and ai in multi-agent bargaining},
  author={Qian, Crystal and Zhu, Kehang and Horton, John and Manning, Benjamin and Tsai, Vivian and Wexler, James and Thain, Nithum},
  booktitle={Proceedings of the 31st International Conference on Intelligent User Interfaces},
  pages={1625--1646},
  year={2026}
}

@article{gopinathan2026pact,
  title={Pact: A Choreographic Language for Agentic Ecosystems},
  author={Gopinathan, Kiran and Feser, Jack and Naim, Michelangelo and Tavares, Zenna and Bingham, Eli},
  journal={arXiv preprint arXiv:2605.03143},
  year={2026}
}

@article{liu2026agenticpay,
  title={AgenticPay: A Multi-Agent LLM Negotiation System for Buyer-Seller Transactions},
  author={Liu, Xianyang and Gu, Shangding and Song, Dawn},
  journal={arXiv preprint arXiv:2602.06008},
  year={2026}
}

@article{feng2026towards,
  title={Towards autonomous mathematics research},
  author={Feng, Tony and Trinh, Trieu H and Bingham, Garrett and Hwang, Dawsen and Chervonyi, Yuri and Jung, Junehyuk and Lee, Joonkyung and Pagano, Carlo and Kim, Sang-hyun and Pasqualotto, Federico and others},
  journal={arXiv preprint arXiv:2602.10177},
  year={2026}
}

@article{nguyen2025llms,
  title={LLMs for legal reasoning: A unified framework and future perspectives},
  author={Nguyen, Ha Thanh and Fungwacharakorn, Wachara and Zin, May Myo and Goebel, Randy and Toni, Francesca and Stathis, Kostas and Satoh, Ken},
  journal={Computer Law \& Security Review},
  volume={58},
  pages={106165},
  year={2025},
  publisher={Elsevier}
}

@article{kalai1975other,
  title={Other Solutions to Nash's Bargaining Problem},
  author={Kalai, Ehud and Smorodinsky, Meir},
  journal={Econometrica},
  volume={43},
  number={3},
  pages={513--518},
  year={1975}
}

@book{altman2021constrained,
  title={Constrained Markov decision processes},
  author={Altman, Eitan},
  year={2021},
  publisher={Routledge}
}

@article{backlund2025vending,
  title={Vending-bench: A benchmark for long-term coherence of autonomous agents},
  author={Backlund, Axel and Petersson, Lukas},
  journal={arXiv preprint arXiv:2502.15840},
  year={2025}
}

@inproceedings{wyse2026commitment,
  title={Commitment To Cooperation With Self-Negotiated Contracts},
  author={Wyse, Tim and Bustos, Kaitlin and Volkova, Yulia and Kleiman-Weiner, Max},
  booktitle={Third Conference on Language Modeling},
  year={2026}
}
\clearpage
\appendix
\onecolumn

\begin{center}
    {\LARGE\bfseries Appendix: Evaluating Rational Contracting in Natural Language}
\end{center}
\vspace{2em}

\section{Additional Benchmark Details}

\label{app:benchmark-hyperparams}

\begingroup
\setlength{\parindent}{0pt}
\setlength{\parskip}{0.5\baselineskip}

\subsection{Domain-agnostic environment specification}

Following Section~\ref{sec:benchmark}, each environment in \textsc{ContractSim} fixes a type profile
$\theta'=(\theta_{\mathrm{Cust}},\theta_{\mathrm{Supp}})\in\Theta'$ before
assigning the game a story. Let $D=\{O_1,O_2,O_3\}$ be the product set and
$\mathcal{X}=\{I_1,I_2,I_3\}$ the input set. The Customer type
$\theta_{\mathrm{Cust}}$ specifies its budget $B$, private product values
$(v_d)_{d\in D}$, minimum delivery requirements, and substitution rule. The
Supplier type $\theta_{\mathrm{Supp}}$ specifies its production function,
initial resources, inventory and order caps, and private supply uncertainty.

Using the notation of Section~\ref{sec:benchmark}, for every production week
$w$ and input $x\in\mathcal{X}$, an environment supplies probability laws
\[
    p_{w,x}\sim\mathsf{P}_x,\qquad
    r_{w,x}\mid o_{w,x}\sim\mathsf{R}_x(\,\cdot\mid o_{w,x}),\qquad
    s_{w,x}\sim\mathsf{S}_x,
\]
where $p_{w,x}$ and $o_{w,x}$ are respectively the realized unit price and
ordered amount from Section~\ref{sec:benchmark}, $r_{w,x}$ is the amount
received, and $s_{w,x}$ is the spoilage shock. The realized spoilage loss is
the smaller of $s_{w,x}$ and the inventory available immediately before
spoilage. Draws are independent across inputs and production weeks;
environments without stochasticity use point-mass laws. The main performance
experiments use environment seed 42 for these stochastic draws.

The Supplier production function is shared by all environments and is given by
the three unit recipes
\[
    O_1=I_1+I_2+I_3,\qquad
    O_2=I_1+I_2,\qquad
    O_3=I_3.
\]
Thus one unit of an output consumes one unit of every input appearing on its
right-hand side.

The six environments vary $(v_d)_{d\in D}$, the Supplier's initial capital,
and the laws $(\mathsf{P}_x,\mathsf{R}_x,\mathsf{S}_x)$ while holding the
remaining structural parameters fixed. All environments last for $L=11$ weeks in the performance phase, with $(L+1)/2=6$ payment weeks $W_{\mathrm{pay}}$ and $(L-1)/2 = 5$ production weeks
$W_{\mathrm{prod}}$:
\[
W_{\mathrm{pay}} := \{2k-1\}_{k=1}^{(L+1)/2} \qquad
W_{\mathrm{prod}}=\{2k\}_{k=1}^{(L-1)/2}
\]

\subsection{Setting-level entity mapping}

A setting (i.e. story) instantiation only relabels the abstract players, inputs, and outputs
and rescales the monetary unit; it does not change the underlying production
graph or transition dynamics. Table entries on the same row therefore play the same formal role. A plus sign
in an output recipe denotes one unit of every listed input per unit of output.

\begin{table}[htbp]
\centering
\caption{Mapping between the abstract \textsc{ContractSim} entities and the
three matched settings. Recipes and nonmonetary dynamics are preserved
under the mapping; monetary quantities use the listed multiplier and
granularity.}
\label{tab:story-entity-mapping}
\small
\setlength{\tabcolsep}{6pt}
\renewcommand{\arraystretch}{1.12}
\begin{tabularx}{\textwidth}{l>{\raggedright\arraybackslash}X>{\raggedright\arraybackslash}X>{\raggedright\arraybackslash}X}
\toprule
\textbf{Underlying entity} & \textbf{Catering} &
\textbf{Hotel Cleaning} & \textbf{AI Hosting} \\
\midrule
Customer role & Customer & Hotel owner & AI startup founder \\
Supplier role & Caterer & Cleaning provider & AI service provider \\
\midrule
Input/resource $I_1$ & Mozzarella & Detergent & CPU \\
Input/resource $I_2$ & Basil & Disinfectant & Memory \\
Input/resource $I_3$ & Tomato & Wet-wipes & GPU \\
\midrule
Output $O_1=I_1+I_2+I_3$ & Margherita Pizza & Conference Room Cleaning & Fine-tuning \\
Output $O_2=I_1+I_2$ & Pesto Pasta & King-Sized Room Cleaning & Model-hosting \\
Output $O_3=I_3$ & Tomato Soup & Single Room Cleaning & Inference \\
\midrule
Monetary multiplier & $1\times$ & $100\times$ & $1{,}000\times$ \\
Price/payment granularity & \$1 & \$100 & \$1{,}000 \\
\bottomrule
\end{tabularx}
\end{table}

\subsection{Per-environment parameters}

\paragraph{Catering instantiation.}

In the Catering setting, the Customer has a budget of \$200 and requires at
least one Pizza, one Pasta, and one Soup in every production week. For the
Customer, outputs above these mandatory minima are privately substitutable at
the rate \emph{1 Pizza = 2 Pastas = 4 Soups}.

The Caterer's inventory is initially empty, has a 10-unit cap for each
ingredient, and has a per-round order cap of 12 units per ingredient. These
caps were chosen to keep the rational baselines computationally tractable.

Table~\ref{tab:catering-environments} lists the values and distributions used
in each environment. Product values are ordered Pizza/Pasta/Soup (Pi/Pa/S),
and ingredient prices are ordered Mozzarella/Basil/Tomato (M/B/T). Each
stochastic price entry gives an outcome followed by its probability.

Spoilage occurs before the new order. The Caterer pays for all $x$ units
ordered even when fewer arrive or the inventory cap discards excess units.

\begin{table}[htbp]
\centering
\caption{Customer values, Supplier capital, and realized price, receipt, and
spoilage distributions for the six Catering environments. Ingredient-level
draws are independent across inputs and production weeks.}
\label{tab:catering-environments}
\footnotesize
\setlength{\tabcolsep}{1.5pt}
\renewcommand{\arraystretch}{1.5}
\begin{tabularx}{\textwidth}{clcc>{\raggedright\arraybackslash}X>{\raggedright\arraybackslash}X>{\raggedright\arraybackslash}X}
\toprule
\textbf{Env.} & \textbf{Regime} &
\shortstack{\textbf{Values}\\\textbf{(USD; Pi/Pa/S)}} &
\textbf{Capital} &
\shortstack{\textbf{Ingredient-price distribution}\\\textbf{(USD; M/B/T)}} &
\shortstack{\textbf{Receipt distribution}\\\textbf{for an order of $x$}} &
\shortstack{\textbf{Spoilage distribution}\\\textbf{per ingredient}} \\
\midrule
1 & \multirow{2}{*}{none} & 12/6/3  & \$20 &
    \multirow{2}{=}{M: \$1; B: \$1; T: \$2 (fixed)} &
    \multirow{2}{=}{Exactly $x$ units} &
    \multirow{2}{=}{Exactly 0 units} \\
2 & & 20/10/5 & \$40 & & & \\
\addlinespace[2pt]
3 & \multirow{2}{*}{low} & 12/6/3  & \$20 &
    \multirow{2}{=}{M: \$1 ($2/3$), \$3 ($1/3$);
        B: \$1 ($2/3$), \$2 ($1/3$);
        T: \$2 ($2/3$), \$3 ($1/3$)} &
    \multirow{2}{=}{Uniform over the integers
        $\lfloor3x/4\rfloor,\ldots,x$} &
    \multirow{2}{=}{Uniform over 0 or 1 unit} \\
4 & & 20/10/5 & \$40 & & & \\
\addlinespace[2pt]
5 & \multirow{2}{*}{high} & 12/6/3  & \$20 &
    \multirow{2}{=}{M: \$1 ($1/2$), \$3 ($1/2$);
        B: \$1 ($1/2$), \$2 ($1/2$);
        T: \$2 ($1/2$), \$3 ($1/2$)} &
    \multirow{2}{=}{Uniform over the integers
        $\lfloor x/2\rfloor,\ldots,x$} &
    \multirow{2}{=}{Uniform over 0, 1, or 2 units} \\
6 & & 20/10/5 & \$40 & & & \\
\bottomrule
\end{tabularx}
\end{table}

\paragraph{Matched instantiations across settings.}

Each of the six Catering environments induces a matched Hotel Cleaning
environment by multiplying all monetary amounts by 100 and a matched AI
Hosting environment by multiplying them by 1,000. Contract prices and payments
must consequently be multiples of \$100 in Hotel Cleaning and \$1{,}000 in AI
Hosting.

All nonmonetary hyperparameters, such as minimum requirements, recipes,
inventory and order caps, price probabilities, delivery-loss support, and
spoilage support, remain unchanged under this mapping.

\endgroup

\vspace{60pt}
\section{Agent Scaffolds and Prompts}

\subsection{Original Agent System Prompts}
\label{app:negotiation-prompts}

We reproduce the exact role-specific system prompts for all three story
domains in the matched robustness experiment. For Catering, these are the
prompts from canonical Environment~9, which corresponds to Environment~5 in
the paper's six-environment numbering. Hotel Cleaning and AI Hosting (called
AI Model Hosting in the reproduced prompts) use the corresponding matched
Environment~9 conditions. Each prompt begins on a new page.

The main results and ablation experiments use the Catering prompts as their
base templates. For each evaluated environment, we modify only the
environment-specific details---the Customer's private values, the Caterer's
initial capital, ingredient-price distribution, delivery-loss distribution,
and spoilage distribution---according to the hyperparameters in
Appendix~\ref{app:benchmark-hyperparams}. The Catering prompts reproduced
below show the paper Environment~5 instantiation; ablation runs additionally
apply the corresponding ablation instruction.

All environment-dependent text is expanded verbatim. The sole remaining
template token is \texttt{\{\{NEGOTIATED\_CONTRACT\}\}} in each performance
prompt; at runtime, the benchmark replaces it with the complete contract for
the current game between the displayed boundary markers. Dynamic turn and
state/action requests are generated from the current interaction and are not
part of these static system prompts.

\clearpage
\subsubsection{Catering: Negotiation---Customer}
\label{app:prompt-catering-negotiation-customer}
\promptboxhead
\begin{promptbox}{blue}{Catering negotiation system prompt---Customer}
\promptkey{GOAL}
\promptboxhead\plabel{NEGOTIATION PROCESS:}
You have at most 50 rounds to reach an agreement.\par\smallskip
Each round has two turns: the customer acts first, followed by the caterer.\par\smallskip
On your turn, you may send a message to the other party and optionally propose a contract.
You may formally accept the caterer's most recent contract. Accepting it ends the
negotiation with an agreement.\par\smallskip
You may also terminate the negotiation and walk away at any time. If either party terminates
the negotiation, or if no agreement is reached within 50 rounds, negotiations end with no
contract.\par\smallskip
Negotiate a contract with the caterer, explicitly negotiating: (i) price per unit of each
dish, (ii) delivery schedule (how many of each dish per production round over 5 production
rounds), and (iii) payment schedule (how much to pay in each of the 6 payment weeks).\par\smallskip
Aim to negotiate a contract that maximizes your expected private utility when executed
during the subsequent contract-execution phase.\par\smallskip
\promptboxhead\plabel{IMPORTANT:}
Any contract you propose or accept MUST explicitly state the price per unit for each dish,
the exact quantity of each dish per production round, and the exact payment amount for each
payment week --- do not accept a contract that leaves any of these unspecified.\par\smallskip
All contracted dish quantities must be non-negative whole numbers, and all payment amounts
must be non-negative whole-dollar amounts (no fractions).\par\smallskip
Only propose a contract once you believe you have gathered sufficient information from the
negotiation to specify all of these terms.\par\smallskip
\tcbline
\promptkey{PRIVATE INFO}
You are a customer about to negotiate a catering contract with a caterer.\par\smallskip
Your total budget is \$200 --- this is your absolute limit and you cannot spend beyond it.\par\smallskip
\promptboxhead\plabel{PRIVATE VALUATION:}
You privately value each Margherita Pizza at \$12, each Pesto Pasta at \$6, and each Tomato
Soup at \$3. These are your maximum per-unit valuations. You are willing to pay up to these
amounts for each dish, but you prefer to save money.\par\smallskip
\promptboxhead\plabel{MINIMUM REQUIREMENTS:}
You need at least 1 Margherita Pizza, 1 Pesto Pasta, and 1 Tomato Soup per production round
--- these minimums are mandatory and cannot be substituted. All dish quantities and payments
must be whole numbers (no fractions).\par\smallskip
\promptboxhead\plabel{DISH SUBSTITUTABILITY:}
For any quantities ABOVE the minimum, dishes are substitutable at the rate: 1 Pizza = 2
Pastas = 4 Soups (equivalently, 1 Pasta = 2 Soups). For example, if you wanted 2 Pizzas but
the caterer can only make the 1 minimum, the extra Pizza can be replaced by 2 Pastas or 4
Soups. The same logic applies to Pastas too.\par\smallskip
\promptboxhead\plabel{SCHEDULE:}
The game lasts 11 weeks. Odd weeks (1, 3, 5, 7, 9, 11) are PAYMENT weeks --- you make a
payment (6 total). Even weeks (2, 4, 6, 8, 10) are PRODUCTION weeks --- the caterer buys
ingredients and delivers dishes (5 total).\par\smallskip
\promptboxhead\plabel{CONTRACT EXECUTION ACTIONS:}
During contract execution, the only action at your disposal is making payments. Payments
must be non-negative whole-dollar amounts, including \$0.\par\smallskip
The only actions at the caterer's disposal are: (i) buying ingredients, and (ii) producing
dishes.\par\smallskip
There is no refund mechanism in this system.\par\smallskip
There is no communication between you and the caterer after a contract has been agreed upon,
hence no renegotiation.\par\smallskip
\promptboxhead\plabel{OBJECTIVES:}
Your priority is maximizing your expected private utility, where the utility is your
remaining budget plus the cumulative private value of dishes.\par\smallskip
\promptboxhead\plabel{BEST ALTERNATIVE TO NO AGREEMENT:}
If no contract is agreed upon, you are guaranteed to receive your minimum required dishes in
every production round from an outside provider. You pay a total amount exactly equal to the
cumulative private value you assign to all of those dishes, and that payment is deducted
from your budget.\par\smallskip
\end{promptbox}

\clearpage
\subsubsection{Catering: Negotiation---Caterer}
\label{app:prompt-catering-negotiation-caterer}
\promptboxhead
\begin{promptbox}{red}{Catering negotiation system prompt---Caterer}
\promptkey{GOAL}
\promptboxhead\plabel{NEGOTIATION PROCESS:}
You have at most 50 rounds to reach an agreement.\par\smallskip
Each round has two turns: the customer acts first, followed by the caterer.\par\smallskip
On your turn, you may send a message to the other party and optionally propose a contract.
You cannot formally accept a contract. If you agree with terms proposed by the customer,
propose the complete agreed contract for the customer to formally accept on their next turn.
The customer's formal acceptance ends the negotiation with an agreement.\par\smallskip
You may terminate the negotiation and walk away at any time. If either party terminates the
negotiation, or if no agreement is reached within 50 rounds, negotiations end with no
contract.\par\smallskip
Negotiate a contract with the customer, explicitly negotiating: (i) price per unit of each
dish, (ii) delivery schedule (how many of each dish per production round over 5 production
rounds), and (iii) payment schedule (how much to pay in each of the 6 payment weeks).\par\smallskip
Aim to negotiate a contract that maximizes your expected profit when executed during the
subsequent contract-execution phase.\par\smallskip
\promptboxhead\plabel{IMPORTANT:}
Any contract you propose MUST explicitly state the price per unit for each dish, the exact
quantity of each dish per production round, and the exact payment amount for each payment
week --- do not propose a contract that leaves any of these unspecified.\par\smallskip
All contracted dish quantities must be non-negative whole numbers, and all payment amounts
must be non-negative whole-dollar amounts (no fractions).\par\smallskip
Only propose a contract once you believe you have gathered sufficient information from the
negotiation to specify all of these terms.\par\smallskip
\tcbline
\promptkey{PRIVATE INFO}
You are a caterer negotiating a catering contract with a prospective customer.\par\smallskip
Your initial budget: \$20.\par\smallskip
\promptboxhead\plabel{RECIPES:}
You produce Margherita Pizza, Pesto Pasta, and Tomato Soup using Mozzarella, Basil, and
Tomato.\par\smallskip
Each Margherita Pizza needs 1 Tomato + 1 Mozzarella + 1 Basil, each Pesto Pasta needs 1
Mozzarella + 1 Basil, each Tomato Soup needs 1 Tomato.\par\smallskip
All ingredient-purchase and dish-production quantities must be non-negative whole numbers.\par\smallskip
\promptboxhead\plabel{ENVIRONMENT STOCHASTICITY:}
(a) Prices: each production round, each ingredient price is independently set to one of two
values with equal probability (50/50) --- Mozzarella \$1 or \$3/unit, Basil \$1 or \$2/unit,
Tomato \$2 or \$3/unit. (b) Delivery loss: for each ingredient, if you order x units you
receive between floor(x/2) and x units (uniformly at random). You pay for the full amount
ordered, not the amount received. For example, ordering 6 Mozzarella means you pay for 6 but
receive 3, 4, 5, or 6 with equal chance. (c) Spoilage: inventory carried over from the
previous production round independently loses 0, 1, or 2 units per ingredient per production
round (equal probability, 1/3 each). Spoilage happens before you order new ingredients. (d)
Inventory cap: you can store at most 10 units of each ingredient. Any received units that
would exceed this cap are lost --- you still pay for them. This cap applies before production,
so you cannot order beyond capacity to use the surplus in the same round. Maximum order: 12
units of each ingredient per production round.\par\smallskip
\promptboxhead\plabel{SCHEDULE:}
The game lasts 11 weeks. Odd weeks (1, 3, 5, 7, 9, 11) are PAYMENT weeks --- the customer pays
you (6 total). Even weeks (2, 4, 6, 8, 10) are PRODUCTION weeks --- you buy ingredients and
deliver dishes (5 total).\par\smallskip
\promptboxhead\plabel{PRODUCTION WEEK PROCESS (in order):}
(1) inventory spoilage occurs, (2) you order ingredients (delivery loss applies), (3) you
produce dishes and deliver them, (4) prices are re-randomized for next production round.\par\smallskip
\promptboxhead\plabel{CONTRACT EXECUTION ACTIONS:}
During contract execution, your only actions are (i) buying ingredients and (ii) producing
dishes.\par\smallskip
The customer's only action is making payments. Payments must be non-negative whole-dollar
amounts, including \$0.\par\smallskip
There is no refund mechanism in this system.\par\smallskip
Unused ingredients carry over between production rounds (subject to spoilage).\par\smallskip
There is no communication between you and the customer after a contract has been agreed
upon, hence no possibility of renegotiation.\par\smallskip
\promptboxhead\plabel{OBJECTIVES:}
Your priority is maximizing your expected profit, where profit is defined as the money you
earn from the customer minus the total amount you spend buying ingredients.\par\smallskip
\promptboxhead\plabel{BEST ALTERNATIVE TO NO AGREEMENT:}
If no contract is agreed upon, you walk away with only your initial budget, \$20.\par\smallskip
\end{promptbox}

\clearpage
\subsubsection{Catering: Performance---Customer}
\label{app:prompt-catering-performance-customer}
\promptboxhead
\begin{promptbox}{blue}{Catering performance system prompt---Customer}
\promptkey{GOAL}
Perform the contract with the caterer, choosing how much you wish to pay the caterer in
every payment week. Aim to perform the contract in a way that maximizes your private
utility.\par\smallskip
\tcbline
\promptkey{PRIVATE INFO}
You are a customer who has negotiated a catering contract with a caterer. Your total budget
is \$200 --- this is your absolute limit and you cannot spend beyond it. Here is the negotiated
contract:\par\smallskip
\begin{center}\ttfamily
\textless{}\textless{}\textless{} BEGIN CONTRACT \textgreater{}\textgreater{}\textgreater{}\\
\{\{NEGOTIATED\_CONTRACT\}\}\\
\textless{}\textless{}\textless{} END CONTRACT \textgreater{}\textgreater{}\textgreater{}
\end{center}
\promptboxhead\plabel{PRIVATE VALUATION:}
You privately value each Margherita Pizza at \$12, each Pesto Pasta at \$6, and each Tomato
Soup at \$3.\par\smallskip
\promptboxhead\plabel{MINIMUM REQUIREMENTS:}
You need at least 1 Margherita Pizza, 1 Pesto Pasta, and 1 Tomato Soup per production round
--- these minimums are mandatory and cannot be substituted. All dish quantities and payments
must be whole numbers (no fractions).\par\smallskip
\promptboxhead\plabel{DISH SUBSTITUTABILITY:}
For any quantities ABOVE the minimum, dishes are substitutable at the rate: 1 Pizza = 2
Pastas = 4 Soups (equivalently, 1 Pasta = 2 Soups). For example, if you wanted 2 Pizzas but
the caterer can only make the 1 minimum, the extra Pizza can be replaced by 2 Pastas or 4
Soups. The same logic applies to Pastas too.\par\smallskip
\promptboxhead\plabel{SCHEDULE:}
The game lasts 11 weeks. Odd weeks (1, 3, 5, 7, 9, 11) are PAYMENT weeks --- you make a
payment (6 total). Even weeks (2, 4, 6, 8, 10) are PRODUCTION weeks --- the caterer buys
ingredients and delivers dishes (5 total).\par\smallskip
\promptboxhead\plabel{CONTRACT EXECUTION ACTIONS:}
During contract execution, the only action at your disposal is making payments. Payments
must be non-negative whole-dollar amounts, including \$0. There is no refund mechanism. The
only actions at the caterer's disposal are: (i) buying ingredients, and (ii) producing
dishes. No communication or renegotiation during execution.\par\smallskip
\promptboxhead\plabel{OBJECTIVES:}
Your priority is maximizing your private utility, where the utility is your remaining budget
plus the cumulative private value of dishes.\par\smallskip
\end{promptbox}

\clearpage
\subsubsection{Catering: Performance---Caterer}
\label{app:prompt-catering-performance-caterer}
\promptboxhead
\begin{promptbox}{red}{Catering performance system prompt---Caterer}
\promptkey{GOAL}
Perform the contract with the customer, choosing how many ingredients of each kind to
purchase and how many dishes of each type to produce. Aim to perform the contract in a way
that maximizes your profit.\par\smallskip
\tcbline
\promptkey{PRIVATE INFO}
You are a caterer who has negotiated a catering contract with a customer. Your initial
budget: \$20. Here is the negotiated contract:\par\smallskip
\begin{center}\ttfamily
\textless{}\textless{}\textless{} BEGIN CONTRACT \textgreater{}\textgreater{}\textgreater{}\\
\{\{NEGOTIATED\_CONTRACT\}\}\\
\textless{}\textless{}\textless{} END CONTRACT \textgreater{}\textgreater{}\textgreater{}
\end{center}
\promptboxhead\plabel{RECIPES:}
You produce Margherita Pizza, Pesto Pasta, and Tomato Soup using Mozzarella, Basil, and
Tomato. Each Margherita Pizza needs 1 Tomato + 1 Mozzarella + 1 Basil, each Pesto Pasta
needs 1 Mozzarella + 1 Basil, each Tomato Soup needs 1 Tomato. All ingredient-purchase and
dish-production quantities must be non-negative whole numbers.\par\smallskip
\promptboxhead\plabel{ENVIRONMENT STOCHASTICITY:}
(a) Prices: each production round, each ingredient price is independently set to one of two
values with equal probability (50/50) --- Mozzarella \$1 or \$3/unit, Basil \$1 or \$2/unit,
Tomato \$2 or \$3/unit. (b) Delivery loss: for each ingredient, if you order x units you
receive between floor(x/2) and x units (uniformly at random). You pay for the full amount
ordered, not the amount received. For example, ordering 6 Mozzarella means you pay for 6 but
receive 3, 4, 5, or 6 with equal chance. (c) Spoilage: inventory carried over from the
previous production round independently loses 0, 1, or 2 units per ingredient per production
round (equal probability, 1/3 each). Spoilage happens before you order new ingredients. (d)
Inventory cap: you can store at most 10 units of each ingredient. Any received units that
would exceed this cap are lost --- you still pay for them. This cap applies before production,
so you cannot order beyond capacity to use the surplus in the same round. Maximum order: 12
units of each ingredient per production round.\par\smallskip
\promptboxhead\plabel{SCHEDULE:}
The game lasts 11 weeks. Odd weeks (1, 3, 5, 7, 9, 11) are PAYMENT weeks --- the customer pays
you (6 total). Even weeks (2, 4, 6, 8, 10) are PRODUCTION weeks --- you buy ingredients and
deliver dishes (5 total).\par\smallskip
\promptboxhead\plabel{PRODUCTION WEEK PROCESS (in order):}
(1) random inventory spoilage occurs, (2) you order ingredients (random delivery loss
applies), (3) you produce dishes and deliver them, (4) prices are re-randomized for next
production round.\par\smallskip
\promptboxhead\plabel{CONTRACT EXECUTION ACTIONS:}
During contract execution, your only actions are (i) buying ingredients and (ii) producing
dishes. The customer's only action is making payments. Payments must be non-negative
whole-dollar amounts, including \$0. There is no refund mechanism. Unused ingredients carry
over between production rounds (subject to spoilage). No communication or renegotiation
during execution.\par\smallskip
\promptboxhead\plabel{OBJECTIVES:}
Your priority is maximizing your profit, where profit is defined as the money you earn from
the customer minus the total amount you spend buying ingredients.\par\smallskip
\end{promptbox}

\clearpage
\subsubsection{Hotel Cleaning: Negotiation---Hotel Owner}
\label{app:prompt-hotel-cleaning-negotiation-hotel-owner}
\promptboxhead
\begin{promptbox}{blue}{Hotel Cleaning negotiation system prompt---Hotel Owner}
\promptkey{GOAL}
\promptboxhead\plabel{NEGOTIATION PROCESS:}
You have at most 50 rounds to reach an agreement.\par\smallskip
Each round has two turns: the hotel owner acts first, followed by the cleaning provider.\par\smallskip
On your turn, you may send a message to the other party and optionally propose a contract.
You may formally accept the cleaning provider's most recent contract. Accepting it ends the
negotiation with an agreement.\par\smallskip
You may also terminate the negotiation and walk away at any time. If either party terminates
the negotiation, or if no agreement is reached within 50 rounds, negotiations end with no
contract.\par\smallskip
Negotiate a contract with the cleaning provider, explicitly negotiating: (i) price per unit
of each cleaning type, (ii) cleaning schedule (how many of each cleaning type per cleaning
round over 5 cleaning rounds), and (iii) payment schedule (how much to pay in each of the 6
payment weeks).\par\smallskip
Aim to negotiate a contract that maximizes your expected private utility when executed
during the subsequent contract-execution phase.\par\smallskip
\promptboxhead\plabel{IMPORTANT:}
Any contract you propose or accept MUST explicitly state the price per unit for each
cleaning type, the exact quantity of each cleaning type per cleaning round, and the exact
payment amount for each payment week --- do not accept a contract that leaves any of these
unspecified.\par\smallskip
All quantities of each contracted cleaning type must be non-negative whole numbers. All
per-unit prices and payment amounts must be non-negative multiples of \$100.\par\smallskip
Only propose a contract once you believe you have gathered sufficient information from the
negotiation to specify all of these terms.\par\smallskip
\tcbline
\promptkey{PRIVATE INFO}
You are a hotel owner about to negotiate a hotel cleaning contract with a cleaning provider.\par\smallskip
Your total budget is \$20,000 --- this is your absolute limit and you cannot spend beyond it.\par\smallskip
\promptboxhead\plabel{PRIVATE VALUATION:}
You privately value each Conference Room Cleaning at \$1,200, each King-Sized Room Cleaning
at \$600, and each Single Room Cleaning at \$300. These are your maximum per-unit valuations.
You are willing to pay up to these amounts for each cleaning type, but you prefer to save
money.\par\smallskip
\promptboxhead\plabel{MINIMUM REQUIREMENTS:}
You need at least 1 Conference Room Cleaning, 1 King-Sized Room Cleaning, and 1 Single Room
Cleaning per cleaning round --- these minimums are mandatory and cannot be substituted. All
cleaning quantities and payments must be whole numbers (no fractions).\par\smallskip
\promptboxhead\plabel{CLEANING-SERVICE SUBSTITUTABILITY:}
For any quantities ABOVE the minimum, cleanings are substitutable at the rate: 1 Conference
Room Cleaning = 2 King-Sized Room Cleanings = 4 Single Room Cleanings (equivalently, 1
King-Sized Room Cleaning = 2 Single Room Cleanings). For example, if you wanted 2 Conference
Room Cleanings but the cleaning provider can provide only the mandatory minimum of 1
Conference Room Cleaning, the extra Conference Room Cleaning can be replaced by 2 King-Sized
Room Cleanings or 4 Single Room Cleanings. The same logic applies to King-Sized Room
Cleanings too.\par\smallskip
\promptboxhead\plabel{SCHEDULE:}
The game lasts 11 weeks. Odd weeks (1, 3, 5, 7, 9, 11) are PAYMENT weeks --- you make a
payment (6 total). Even weeks (2, 4, 6, 8, 10) are CLEANING weeks --- the cleaning provider
buys cleaning supplies and performs cleaning services (5 total).\par\smallskip
\promptboxhead\plabel{CONTRACT EXECUTION ACTIONS:}
During contract execution, the only action at your disposal is making payments. Payments
must be non-negative multiples of \$100, including \$0.\par\smallskip
The only actions at the cleaning provider's disposal are: (i) buying cleaning supplies, and
(ii) performing cleaning services.\par\smallskip
There is no refund mechanism in this system.\par\smallskip
There is no communication between you and the cleaning provider after a contract has been
agreed upon, hence no renegotiation.\par\smallskip
\promptboxhead\plabel{OBJECTIVES:}
Your priority is maximizing your expected private utility, where the utility is your
remaining budget plus the cumulative private value of cleanings.\par\smallskip
\promptboxhead\plabel{BEST ALTERNATIVE TO NO AGREEMENT:}
If no contract is agreed upon, you are guaranteed to receive your minimum required cleanings
in every cleaning round from an outside cleaning provider. You pay a total amount exactly
equal to the cumulative private value you assign to all of those cleanings, and that payment
is deducted from your budget.\par\smallskip
\end{promptbox}

\clearpage
\subsubsection{Hotel Cleaning: Negotiation---Cleaning Provider}
\label{app:prompt-hotel-cleaning-negotiation-cleaning-provider}
\promptboxhead
\begin{promptbox}{red}{Hotel Cleaning negotiation system prompt---Cleaning Provider}
\promptkey{GOAL}
\promptboxhead\plabel{NEGOTIATION PROCESS:}
You have at most 50 rounds to reach an agreement.\par\smallskip
Each round has two turns: the hotel owner acts first, followed by the cleaning provider.\par\smallskip
On your turn, you may send a message to the other party and optionally propose a contract.
You cannot formally accept a contract. If you agree with terms proposed by the hotel owner,
propose the complete agreed contract for the hotel owner to formally accept on their next
turn. The hotel owner's formal acceptance ends the negotiation with an agreement.\par\smallskip
You may terminate the negotiation and walk away at any time. If either party terminates the
negotiation, or if no agreement is reached within 50 rounds, negotiations end with no
contract.\par\smallskip
Negotiate a contract with the hotel owner, explicitly negotiating: (i) price per unit of
each cleaning type, (ii) cleaning schedule (how many of each cleaning type per cleaning
round over 5 cleaning rounds), and (iii) payment schedule (how much to pay in each of the 6
payment weeks).\par\smallskip
Aim to negotiate a contract that maximizes your expected profit when executed during the
subsequent contract-execution phase.\par\smallskip
\promptboxhead\plabel{IMPORTANT:}
Any contract you propose MUST explicitly state the price per unit for each cleaning type,
the exact quantity of each cleaning type per cleaning round, and the exact payment amount
for each payment week --- do not propose a contract that leaves any of these unspecified.\par\smallskip
All quantities of each contracted cleaning type must be non-negative whole numbers. All
per-unit prices and payment amounts must be non-negative multiples of \$100.\par\smallskip
Only propose a contract once you believe you have gathered sufficient information from the
negotiation to specify all of these terms.\par\smallskip
\tcbline
\promptkey{PRIVATE INFO}
You are a cleaning provider negotiating a hotel cleaning contract with a hotel owner.\par\smallskip
Your initial budget: \$2,000.\par\smallskip
\promptboxhead\plabel{CLEANING-SERVICE REQUIREMENTS:}
You provide Conference Room Cleaning, King-Sized Room Cleaning, and Single Room Cleaning
using Detergent, Disinfectant, and Wet-wipes.\par\smallskip
Each Conference Room Cleaning needs 1 unit of Wet-wipes + 1 unit of Detergent + 1 unit of
Disinfectant, each King-Sized Room Cleaning needs 1 unit of Detergent + 1 unit of
Disinfectant, each Single Room Cleaning needs 1 unit of Wet-wipes.\par\smallskip
All cleaning-supply purchase quantities and cleaning-service quantities must be non-negative
whole numbers.\par\smallskip
\promptboxhead\plabel{ENVIRONMENT STOCHASTICITY:}
(a) Prices: each cleaning round, each cleaning supply's price is independently set to one of
two values with equal probability (50/50) --- Detergent \$100 or \$300/unit, Disinfectant \$100
or \$200/unit, Wet-wipes \$200 or \$300/unit. (b) Delivery loss: for each cleaning supply, if
you order x units you receive between floor(x/2) and x units (uniformly at random). You pay
for the full amount ordered, not the amount received. For example, ordering 6 units of
Detergent means you pay for 6 but receive 3, 4, 5, or 6 with equal chance. (c) Supply
expiration: for each cleaning supply, inventory carried over from the previous cleaning
round independently loses 0, 1, or 2 units per cleaning round (equal probability, 1/3 each).
Supply expiration happens before you order new cleaning supplies. (d) Inventory cap: you can
store at most 10 units of each cleaning supply. Any received units that would exceed this
cap are lost --- you still pay for them. This cap applies before cleaning, so you cannot order
beyond capacity to use the surplus in the same round. Maximum order: 12 units of each
cleaning supply per cleaning round.\par\smallskip
\promptboxhead\plabel{SCHEDULE:}
The game lasts 11 weeks. Odd weeks (1, 3, 5, 7, 9, 11) are PAYMENT weeks --- the hotel owner
pays you (6 total). Even weeks (2, 4, 6, 8, 10) are CLEANING weeks --- you buy cleaning
supplies and perform cleaning services (5 total).\par\smallskip
\promptboxhead\plabel{CLEANING WEEK PROCESS (in order):}
(1) supply expiration occurs, (2) you order cleaning supplies (delivery loss applies), (3)
you perform cleaning services, (4) prices are re-randomized for the next cleaning round.\par\smallskip
\promptboxhead\plabel{CONTRACT EXECUTION ACTIONS:}
During contract execution, your only actions are (i) buying cleaning supplies and (ii)
performing cleaning services.\par\smallskip
The hotel owner's only action is making payments. Payments must be non-negative multiples of
\$100, including \$0.\par\smallskip
There is no refund mechanism in this system.\par\smallskip
Unused cleaning supplies carry over between cleaning rounds (subject to supply expiration).\par\smallskip
There is no communication between you and the hotel owner after a contract has been agreed
upon, hence no possibility of renegotiation.\par\smallskip
\promptboxhead\plabel{OBJECTIVES:}
Your priority is maximizing your expected profit, where profit is defined as the money you
earn from the hotel owner minus the total amount you spend buying cleaning supplies.\par\smallskip
\promptboxhead\plabel{BEST ALTERNATIVE TO NO AGREEMENT:}
If no contract is agreed upon, you walk away with only your initial budget, \$2,000.\par\smallskip
\end{promptbox}

\clearpage
\subsubsection{Hotel Cleaning: Performance---Hotel Owner}
\label{app:prompt-hotel-cleaning-performance-hotel-owner}
\promptboxhead
\begin{promptbox}{blue}{Hotel Cleaning performance system prompt---Hotel Owner}
\promptkey{GOAL}
Perform the contract with the cleaning provider, choosing how much you wish to pay the
cleaning provider in every payment week. Aim to perform the contract in a way that maximizes
your private utility.\par\smallskip
\tcbline
\promptkey{PRIVATE INFO}
You are a hotel owner who has negotiated a hotel cleaning contract with a cleaning provider.
Your total budget is \$20,000 --- this is your absolute limit and you cannot spend beyond it.
Here is the negotiated contract:\par\smallskip
\begin{center}\ttfamily
\textless{}\textless{}\textless{} BEGIN CONTRACT \textgreater{}\textgreater{}\textgreater{}\\
\{\{NEGOTIATED\_CONTRACT\}\}\\
\textless{}\textless{}\textless{} END CONTRACT \textgreater{}\textgreater{}\textgreater{}
\end{center}
\promptboxhead\plabel{PRIVATE VALUATION:}
You privately value each Conference Room Cleaning at \$1,200, each King-Sized Room Cleaning
at \$600, and each Single Room Cleaning at \$300.\par\smallskip
\promptboxhead\plabel{MINIMUM REQUIREMENTS:}
You need at least 1 Conference Room Cleaning, 1 King-Sized Room Cleaning, and 1 Single Room
Cleaning per cleaning round --- these minimums are mandatory and cannot be substituted. All
cleaning quantities and payments must be whole numbers (no fractions).\par\smallskip
\promptboxhead\plabel{CLEANING-SERVICE SUBSTITUTABILITY:}
For any quantities ABOVE the minimum, cleaning services are substitutable at the rate: 1
Conference Room Cleaning = 2 King-Sized Room Cleanings = 4 Single Room Cleanings
(equivalently, 1 King-Sized Room Cleaning = 2 Single Room Cleanings). For example, if you
wanted 2 Conference Room Cleanings but the cleaning provider can provide only the mandatory
minimum of 1 Conference Room Cleaning, the extra Conference Room Cleaning can be replaced by
2 King-Sized Room Cleanings or 4 Single Room Cleanings. The same logic applies to King-Sized
Room Cleanings too.\par\smallskip
\promptboxhead\plabel{SCHEDULE:}
The game lasts 11 weeks. Odd weeks (1, 3, 5, 7, 9, 11) are PAYMENT weeks --- you make a
payment (6 total). Even weeks (2, 4, 6, 8, 10) are CLEANING weeks --- the cleaning provider
buys cleaning supplies and performs cleaning services (5 total).\par\smallskip
\promptboxhead\plabel{CONTRACT EXECUTION ACTIONS:}
During contract execution, the only action at your disposal is making payments. Payments
must be non-negative multiples of \$100, including \$0. There is no refund mechanism. The only
actions at the cleaning provider's disposal are: (i) buying cleaning supplies, and (ii)
performing cleaning services. No communication or renegotiation during execution.\par\smallskip
\promptboxhead\plabel{OBJECTIVES:}
Your priority is maximizing your private utility, where the utility is your remaining budget
plus the cumulative private value of cleaning services.\par\smallskip
\end{promptbox}

\clearpage
\subsubsection{Hotel Cleaning: Performance---Cleaning Provider}
\label{app:prompt-hotel-cleaning-performance-cleaning-provider}
\promptboxhead
\begin{promptbox}{red}{Hotel Cleaning performance system prompt---Cleaning Provider}
\promptkey{GOAL}
Perform the contract with the hotel owner, choosing how many cleaning supplies of each kind
to purchase and how many cleaning services of each type to perform. Aim to perform the
contract in a way that maximizes your profit.\par\smallskip
\tcbline
\promptkey{PRIVATE INFO}
You are a cleaning provider who has negotiated a hotel cleaning contract with a hotel owner.
Your initial budget: \$2,000. Here is the negotiated contract:\par\smallskip
\begin{center}\ttfamily
\textless{}\textless{}\textless{} BEGIN CONTRACT \textgreater{}\textgreater{}\textgreater{}\\
\{\{NEGOTIATED\_CONTRACT\}\}\\
\textless{}\textless{}\textless{} END CONTRACT \textgreater{}\textgreater{}\textgreater{}
\end{center}
\promptboxhead\plabel{CLEANING-SERVICE REQUIREMENTS:}
You provide Conference Room Cleaning, King-Sized Room Cleaning, and Single Room Cleaning
using Detergent, Disinfectant, and Wet-wipes. Each Conference Room Cleaning needs 1 unit of
Wet-wipes + 1 unit of Detergent + 1 unit of Disinfectant, each King-Sized Room Cleaning
needs 1 unit of Detergent + 1 unit of Disinfectant, each Single Room Cleaning needs 1 unit
of Wet-wipes. All cleaning-supply purchase quantities and cleaning-service quantities must
be non-negative whole numbers.\par\smallskip
\promptboxhead\plabel{ENVIRONMENT STOCHASTICITY:}
(a) Prices: each cleaning round, each cleaning supply's price is independently set to one of
two values with equal probability (50/50) --- Detergent \$100 or \$300/unit, Disinfectant \$100
or \$200/unit, Wet-wipes \$200 or \$300/unit. (b) Delivery loss: for each cleaning supply, if
you order x units you receive between floor(x/2) and x units (uniformly at random). You pay
for the full amount ordered, not the amount received. For example, ordering 6 units of
Detergent means you pay for 6 but receive 3, 4, 5, or 6 with equal chance. (c) Supply
expiration: for each cleaning supply, inventory carried over from the previous cleaning
round independently loses 0, 1, or 2 units per cleaning round (equal probability, 1/3 each).
Supply expiration happens before you order new cleaning supplies. (d) Inventory cap: you can
store at most 10 units of each cleaning supply. Any received units that would exceed this
cap are lost --- you still pay for them. This cap applies before cleaning, so you cannot order
beyond capacity to use the surplus in the same round. Maximum order: 12 units of each
cleaning supply per cleaning round.\par\smallskip
\promptboxhead\plabel{SCHEDULE:}
The game lasts 11 weeks. Odd weeks (1, 3, 5, 7, 9, 11) are PAYMENT weeks --- the hotel owner
pays you (6 total). Even weeks (2, 4, 6, 8, 10) are CLEANING weeks --- you buy cleaning
supplies and perform cleaning services (5 total).\par\smallskip
\promptboxhead\plabel{CLEANING WEEK PROCESS (in order):}
(1) random supply expiration occurs, (2) you order cleaning supplies (random delivery loss
applies), (3) you perform cleaning services, (4) prices are re-randomized for the next
cleaning round.\par\smallskip
\promptboxhead\plabel{CONTRACT EXECUTION ACTIONS:}
During contract execution, your only actions are (i) buying cleaning supplies and (ii)
performing cleaning services. The hotel owner's only action is making payments. Payments
must be non-negative multiples of \$100, including \$0. There is no refund mechanism. Unused
cleaning supplies carry over between cleaning rounds (subject to supply expiration). No
communication or renegotiation during execution.\par\smallskip
\promptboxhead\plabel{OBJECTIVES:}
Your priority is maximizing your profit, where profit is defined as the money you earn from
the hotel owner minus the total amount you spend buying cleaning supplies.\par\smallskip
\end{promptbox}

\clearpage
\subsubsection{AI Model Hosting: Negotiation---AI Startup Founder}
\label{app:prompt-ai-model-hosting-negotiation-ai-startup-founder}
\promptboxhead
\begin{promptbox}{blue}{AI Model Hosting negotiation system prompt---AI Startup Founder}
\promptkey{GOAL}
\promptboxhead\plabel{NEGOTIATION PROCESS:}
You have at most 50 rounds to reach an agreement.\par\smallskip
Each round has two turns: the AI startup founder acts first, followed by the AI service
provider.\par\smallskip
On your turn, you may send a message to the other party and optionally propose a contract.
You may formally accept the AI service provider's most recent contract. Accepting it ends
the negotiation with an agreement.\par\smallskip
You may also terminate the negotiation and walk away at any time. If either party terminates
the negotiation, or if no agreement is reached within 50 rounds, negotiations end with no
contract.\par\smallskip
Negotiate a contract with the AI service provider, explicitly negotiating: (i) price per
unit of each service type, (ii) service schedule (how many of each service type per service
round over 5 service rounds), and (iii) payment schedule (how much to pay in each of the 6
payment weeks).\par\smallskip
Aim to negotiate a contract that maximizes your expected private utility when executed
during the subsequent contract-execution phase.\par\smallskip
\promptboxhead\plabel{IMPORTANT:}
Any contract you propose or accept MUST explicitly state the price per unit for each service
type, the exact quantity of each service type per service round, and the exact payment
amount for each payment week --- do not accept a contract that leaves any of these
unspecified.\par\smallskip
All quantities of each contracted service type must be non-negative whole numbers. All
per-unit prices and payment amounts must be non-negative multiples of \$1,000.\par\smallskip
Only propose a contract once you believe you have gathered sufficient information from the
negotiation to specify all of these terms.\par\smallskip
\tcbline
\promptkey{PRIVATE INFO}
You are an AI startup founder about to negotiate a service contract with an AI service
provider.\par\smallskip
Your total budget is \$200,000 --- this is your absolute limit and you cannot spend beyond it.\par\smallskip
\promptboxhead\plabel{PRIVATE VALUATION:}
You privately value each Fine-tuning service at \$12,000, each Model-hosting service at
\$6,000, and each Inference service at \$3,000. These are your maximum per-unit valuations.
You are willing to pay up to these amounts for each service, but you prefer to save money.\par\smallskip
\promptboxhead\plabel{MINIMUM REQUIREMENTS:}
You need at least 1 Fine-tuning service, 1 Model-hosting service, and 1 Inference service
per service round --- these minimums are mandatory and cannot be substituted. All service
quantities must be whole numbers (no fractions).\par\smallskip
\promptboxhead\plabel{SERVICE SUBSTITUTABILITY:}
For any quantities ABOVE the minimum, services are substitutable at the rate: 1 Fine-tuning
service = 2 Model-hosting services = 4 Inference services (equivalently, 1 Model-hosting
service = 2 Inference services). For example, if you wanted 2 Fine-tuning services but the
AI service provider can provide only the mandatory minimum of 1 Fine-tuning service, the
extra Fine-tuning service can be replaced by 2 Model-hosting services or 4 Inference
services. The same logic applies to Model-hosting services too.\par\smallskip
\promptboxhead\plabel{SCHEDULE:}
The game lasts 11 weeks. Odd weeks (1, 3, 5, 7, 9, 11) are PAYMENT weeks --- you make a
payment (6 total). Even weeks (2, 4, 6, 8, 10) are SERVICE weeks --- the AI service provider
reserves resources and provides services (5 total).\par\smallskip
\promptboxhead\plabel{CONTRACT EXECUTION ACTIONS:}
During contract execution, the only action at your disposal is making payments. Payments
must be non-negative multiples of \$1,000, including \$0.\par\smallskip
The only actions at the AI service provider's disposal are: (i) reserving resources, and
(ii) providing services.\par\smallskip
There is no refund mechanism in this system.\par\smallskip
There is no communication between you and the AI service provider after a contract has been
agreed upon, hence no renegotiation.\par\smallskip
\promptboxhead\plabel{OBJECTIVES:}
Your priority is maximizing your expected private utility, where the utility is your
remaining budget plus the cumulative private value of services.\par\smallskip
\promptboxhead\plabel{BEST ALTERNATIVE TO NO AGREEMENT:}
If no contract is agreed upon, you are guaranteed to receive your minimum required services
in every service round from an outside AI service provider. You pay a total amount exactly
equal to the cumulative private value you assign to all of those services, and that payment
is deducted from your budget.\par\smallskip
\end{promptbox}

\clearpage
\subsubsection{AI Model Hosting: Negotiation---AI Service Provider}
\label{app:prompt-ai-model-hosting-negotiation-ai-service-provider}
\promptboxhead
\begin{promptbox}{red}{AI Model Hosting negotiation system prompt---AI Service Provider}
\promptkey{GOAL}
\promptboxhead\plabel{NEGOTIATION PROCESS:}
You have at most 50 rounds to reach an agreement.\par\smallskip
Each round has two turns: the AI startup founder acts first, followed by the AI service
provider.\par\smallskip
On your turn, you may send a message to the other party and optionally propose a contract.
You cannot formally accept a contract. If you agree with terms proposed by the AI startup
founder, propose the complete agreed contract for the AI startup founder to formally accept
on their next turn. The AI startup founder's formal acceptance ends the negotiation with an
agreement.\par\smallskip
You may terminate the negotiation and walk away at any time. If either party terminates the
negotiation, or if no agreement is reached within 50 rounds, negotiations end with no
contract.\par\smallskip
Negotiate a contract with the AI startup founder, explicitly negotiating: (i) price per unit
of each service type, (ii) service schedule (how many of each service type per service round
over 5 service rounds), and (iii) payment schedule (how much to pay in each of the 6 payment
weeks).\par\smallskip
Aim to negotiate a contract that maximizes your expected profit when executed during the
subsequent contract-execution phase.\par\smallskip
\promptboxhead\plabel{IMPORTANT:}
Any contract you propose MUST explicitly state the price per unit for each service type, the
exact quantity of each service type per service round, and the exact payment amount for each
payment week --- do not propose a contract that leaves any of these unspecified.\par\smallskip
All quantities of each contracted service type must be non-negative whole numbers. All
per-unit prices and payment amounts must be non-negative multiples of \$1,000.\par\smallskip
Only propose a contract once you believe you have gathered sufficient information from the
negotiation to specify all of these terms.\par\smallskip
\tcbline
\promptkey{PRIVATE INFO}
You are an AI service provider negotiating a service contract with an AI startup founder.\par\smallskip
Your initial budget: \$20,000.\par\smallskip
\promptboxhead\plabel{SERVICE REQUIREMENTS:}
You provide Fine-tuning, Model-hosting, and Inference services using CPU, Memory, and GPU
resources.\par\smallskip
Each Fine-tuning service needs 1 unit of GPU + 1 unit of CPU + 1 unit of Memory, each
Model-hosting service needs 1 unit of CPU + 1 unit of Memory, and each Inference service
needs 1 unit of GPU.\par\smallskip
All resource-reservation and service-provision quantities must be non-negative whole
numbers.\par\smallskip
\promptboxhead\plabel{ENVIRONMENT STOCHASTICITY:}
(a) Prices: each service round, each resource price is independently set to one of two
values with equal probability (50/50) --- CPU \$1,000 or \$3,000/unit, Memory \$1,000 or
\$2,000/unit, GPU \$2,000 or \$3,000/unit. (b) Capacity shortfall: for each resource, if you
reserve x units you receive between floor(x/2) and x units (uniformly at random) because
some capacity may be preempted or allocated elsewhere. You pay for the full amount reserved,
not the amount received. For example, reserving 6 units of CPU means you pay for 6 but
receive 3, 4, 5, or 6 with equal chance. (c) Resource expiration: capacity carried over from
the previous service round independently loses 0, 1, or 2 units per resource per service
round (equal probability, 1/3 each). Resource expiration happens before you reserve new
capacity. (d) Resource cap: you can retain at most 10 units of each resource. Any received
units that would exceed this cap are lost --- you still pay for them. This cap applies before
providing services, so you cannot reserve beyond capacity to use the surplus in the same
round. Maximum reservation: 12 units of each resource per service round.\par\smallskip
\promptboxhead\plabel{SCHEDULE:}
The game lasts 11 weeks. Odd weeks (1, 3, 5, 7, 9, 11) are PAYMENT weeks --- the AI startup
founder pays you (6 total). Even weeks (2, 4, 6, 8, 10) are SERVICE weeks --- you reserve
resources and provide services (5 total).\par\smallskip
\promptboxhead\plabel{SERVICE WEEK PROCESS (in order):}
(1) resource expiration occurs, (2) you reserve resources (capacity shortfall applies), (3)
you provide services, (4) prices are re-randomized for the next service round.\par\smallskip
\promptboxhead\plabel{CONTRACT EXECUTION ACTIONS:}
During contract execution, your only actions are (i) reserving resources and (ii) providing
services.\par\smallskip
The AI startup founder's only action is making payments. Payments must be non-negative
multiples of \$1,000, including \$0.\par\smallskip
There is no refund mechanism in this system.\par\smallskip
Unused resources carry over between service rounds (subject to resource expiration).\par\smallskip
There is no communication between you and the AI startup founder after a contract has been
agreed upon, hence no possibility of renegotiation.\par\smallskip
\promptboxhead\plabel{OBJECTIVES:}
Your priority is maximizing your expected profit, where profit is defined as the money you
earn from the AI startup founder minus the total amount you spend reserving resources.\par\smallskip
\promptboxhead\plabel{BEST ALTERNATIVE TO NO AGREEMENT:}
If no contract is agreed upon, you walk away with only your initial budget, \$20,000.\par\smallskip
\end{promptbox}

\clearpage
\subsubsection{AI Model Hosting: Performance---AI Startup Founder}
\label{app:prompt-ai-model-hosting-performance-ai-startup-founder}
\promptboxhead
\begin{promptbox}{blue}{AI Model Hosting performance system prompt---AI Startup Founder}
\promptkey{GOAL}
Perform the contract with the AI service provider, choosing how much you wish to pay the AI
service provider in every payment week. Aim to perform the contract in a way that maximizes
your private utility.\par\smallskip
\tcbline
\promptkey{PRIVATE INFO}
You are an AI startup founder who has negotiated a service contract with an AI service
provider. Your total budget is \$200,000 --- this is your absolute limit and you cannot spend
beyond it. Here is the negotiated contract:\par\smallskip
\begin{center}\ttfamily
\textless{}\textless{}\textless{} BEGIN CONTRACT \textgreater{}\textgreater{}\textgreater{}\\
\{\{NEGOTIATED\_CONTRACT\}\}\\
\textless{}\textless{}\textless{} END CONTRACT \textgreater{}\textgreater{}\textgreater{}
\end{center}
\promptboxhead\plabel{PRIVATE VALUATION:}
You privately value each Fine-tuning service at \$12,000, each Model-hosting service at
\$6,000, and each Inference service at \$3,000.\par\smallskip
\promptboxhead\plabel{MINIMUM REQUIREMENTS:}
You need at least 1 Fine-tuning service, 1 Model-hosting service, and 1 Inference service
per service round --- these minimums are mandatory and cannot be substituted. All service
quantities and payments must be whole numbers (no fractions).\par\smallskip
\promptboxhead\plabel{SERVICE SUBSTITUTABILITY:}
For any quantities ABOVE the minimum, services are substitutable at the rate: 1 Fine-tuning
service = 2 Model-hosting services = 4 Inference services (equivalently, 1 Model-hosting
service = 2 Inference services). For example, if you wanted 2 Fine-tuning services but the
AI service provider can provide only the mandatory minimum of 1 Fine-tuning service, the
extra Fine-tuning service can be replaced by 2 Model-hosting services or 4 Inference
services. The same logic applies to Model-hosting services too.\par\smallskip
\promptboxhead\plabel{SCHEDULE:}
The game lasts 11 weeks. Odd weeks (1, 3, 5, 7, 9, 11) are PAYMENT weeks --- you make a
payment (6 total). Even weeks (2, 4, 6, 8, 10) are SERVICE weeks --- the AI service provider
reserves resources and provides services (5 total).\par\smallskip
\promptboxhead\plabel{CONTRACT EXECUTION ACTIONS:}
During contract execution, the only action at your disposal is making payments. Payments
must be non-negative multiples of \$1,000, including \$0. There is no refund mechanism. The
only actions at the AI service provider's disposal are: (i) reserving resources, and (ii)
providing services. No communication or renegotiation during execution.\par\smallskip
\promptboxhead\plabel{OBJECTIVES:}
Your priority is maximizing your private utility, where the utility is your remaining budget
plus the cumulative private value of services.\par\smallskip
\end{promptbox}

\clearpage
\subsubsection{AI Model Hosting: Performance---AI Service Provider}
\label{app:prompt-ai-model-hosting-performance-ai-service-provider}
\promptboxhead
\begin{promptbox}{red}{AI Model Hosting performance system prompt---AI Service Provider}
\promptkey{GOAL}
Perform the contract with the AI startup founder, choosing how many resources of each kind
to reserve and how many services of each type to provide. Aim to perform the contract in a
way that maximizes your profit.\par\smallskip
\tcbline
\promptkey{PRIVATE INFO}
You are an AI service provider who has negotiated a service contract with an AI startup
founder. Your initial budget: \$20,000. Here is the negotiated contract:\par\smallskip
\begin{center}\ttfamily
\textless{}\textless{}\textless{} BEGIN CONTRACT \textgreater{}\textgreater{}\textgreater{}\\
\{\{NEGOTIATED\_CONTRACT\}\}\\
\textless{}\textless{}\textless{} END CONTRACT \textgreater{}\textgreater{}\textgreater{}
\end{center}
\promptboxhead\plabel{SERVICE REQUIREMENTS:}
You provide Fine-tuning, Model-hosting, and Inference services using CPU, Memory, and GPU
resources. Each Fine-tuning service needs 1 unit of GPU + 1 unit of CPU + 1 unit of Memory,
each Model-hosting service needs 1 unit of CPU + 1 unit of Memory, and each Inference
service needs 1 unit of GPU. All resource-reservation and service-provision quantities must
be non-negative whole numbers.\par\smallskip
\promptboxhead\plabel{ENVIRONMENT STOCHASTICITY:}
(a) Prices: each service round, each resource price is independently set to one of two
values with equal probability (50/50) --- CPU \$1,000 or \$3,000/unit, Memory \$1,000 or
\$2,000/unit, GPU \$2,000 or \$3,000/unit. (b) Capacity shortfall: for each resource, if you
reserve x units you receive between floor(x/2) and x units (uniformly at random) because
some capacity may be preempted or allocated elsewhere. You pay for the full amount reserved,
not the amount received. For example, reserving 6 units of CPU means you pay for 6 but
receive 3, 4, 5, or 6 with equal chance. (c) Resource expiration: capacity carried over from
the previous service round independently loses 0, 1, or 2 units per resource per service
round (equal probability, 1/3 each). Resource expiration happens before you reserve new
capacity. (d) Resource cap: you can retain at most 10 units of each resource. Any received
units that would exceed this cap are lost --- you still pay for them. This cap applies before
providing services, so you cannot reserve beyond capacity to use the surplus in the same
round. Maximum reservation: 12 units of each resource per service round.\par\smallskip
\promptboxhead\plabel{SCHEDULE:}
The game lasts 11 weeks. Odd weeks (1, 3, 5, 7, 9, 11) are PAYMENT weeks --- the AI startup
founder pays you (6 total). Even weeks (2, 4, 6, 8, 10) are SERVICE weeks --- you reserve
resources and provide services (5 total).\par\smallskip
\promptboxhead\plabel{SERVICE WEEK PROCESS (in order):}
(1) resource expiration occurs, (2) you reserve resources (capacity shortfall applies), (3)
you provide services, (4) prices are re-randomized for the next service round.\par\smallskip
\promptboxhead\plabel{CONTRACT EXECUTION ACTIONS:}
During contract execution, your only actions are (i) reserving resources and (ii) providing
services. The AI startup founder's only action is making payments. Payments must be
non-negative multiples of \$1,000, including \$0. There is no refund mechanism. Unused
resources carry over between service rounds (subject to resource expiration). No
communication or renegotiation during execution.\par\smallskip
\promptboxhead\plabel{OBJECTIVES:}
Your priority is maximizing your profit, where profit is defined as the money you earn from
the AI startup founder minus the total amount you spend reserving resources.\par\smallskip
\end{promptbox}

\clearpage

\subsection{Agent Scaffold and Inference Protocol}
\label{app:agent-scaffold}

All LLM experiments instantiate Claude Opus~5, Gemini~3.6 Flash, and
GPT--5.6-Sol as agents in Concordia using the same scaffold.  The static role
prompt is the concatenation of the role's \emph{Goal} and \emph{Private Info}
fields reproduced in Appendix~\ref{app:negotiation-prompts}; environment parameters
are substituted before a run according to
Appendix~\ref{app:benchmark-hyperparams}.  At each decision, the scaffold adds
a request generated from the current game state.

Rather than reconstructing each prompt from a bounded observation buffer, the
scaffold maintains an independent, append-only provider conversation for each
agent. Each new observation is appended once and remains in the conversation
history, leaving a stable prompt prefix that enables provider prompt caching;
the five observations most relevant to the current request are additionally
recalled.

Table~\ref{tab:inference-settings} reports the provider-specific deliberation
configuration.

\begin{table}[!ht]
\centering
\small
\caption{Inference configuration used for the evaluated LLM agents.  No
temperature or top-$p$ override was supplied.}
\label{tab:inference-settings}
\begin{tabularx}{0.96\textwidth}{l>{\raggedright\arraybackslash}X>{\raggedright\arraybackslash}X}
\toprule
\textbf{Model} & \textbf{Deliberation configuration} & \textbf{Output limit} \\
\midrule
Claude Opus~5
& Adaptive thinking; effort set to \texttt{high}
& 16{,}384 tokens in negotiation; 8{,}192 in performance \\
Gemini~3.6 Flash
& Thinking level set to \texttt{high}
& Provider default \\
GPT--5.6-Sol
& Reasoning effort set to \texttt{high}
& Provider default \\
\bottomrule
\end{tabularx}
\end{table}

\subsubsection{Negotiation Scaffolding}

Negotiation lasts for at most 50 rounds, with a Customer turn followed by a
Supplier turn.  The Customer's opening request asks it to begin the
negotiation.  Later requests identify the current round, the counterparty's
latest message, and, when present, the most recent formal contract together
with its proposer and proposal round.  The request then lists the actions
available in that state: continue the dialogue, propose or revise a contract,
terminate, or, for the Customer only, accept an eligible Supplier proposal.  A
proposal replaces the previously active proposal.  Acceptance succeeds only
when the Customer accepts the Supplier's formal proposal from the immediately
preceding round; proposing and accepting in the same response is not permitted.
Either party may terminate immediately, and reaching the round limit without a
valid acceptance yields disagreement.

Dialogue and formal proposals are visible to both parties, whereas each
agent's private prompt, private information, and reasoning traces are hidden
from the counterparty.

The negotiation suffix requires a JSON object with a string
\texttt{response}.  A formal proposal additionally uses a string
\texttt{contract}; \texttt{terminate: true} ends negotiation; and
\texttt{accept: true} is available only to the Customer.  The expected answer
takes one of the following JSON forms:
\begin{verbatim}
{"response": "<dialogue>", "contract": "<contract>"}
{"response": "<dialogue>", "accept": true}
{"response": "<dialogue>", "terminate": true}
\end{verbatim}
Malformed negotiation outputs are conservatively recovered when possible;
otherwise, they are treated as ordinary dialogue and cannot trigger a
proposal, acceptance, or termination. Empty responses are retried once, after
which the run is marked invalid.

\subsubsection{Performance Scaffolding}

Main performance experiments instantiate a fresh LLM agent for each synthetic
contract game; the agent is not shown a negotiation transcript.  The complete
executed contract is inserted into its private information at the marked
location in Appendix~\ref{app:negotiation-prompts}.  Before week~1, the agent
observes that negotiation has ended and performance has begun.

In each payment week, the Customer receives the week number, total paid,
remaining budget, and a payment request.  In each production week, the
Supplier first observes spoilage and receives its current balance, inventory,
and realized input prices in a purchase request.  After stochastic delivery,
it receives its updated inventory in a production request.  Outcome
observations record payments, deliveries, inventory changes, and cumulative
production for use in later decisions.

During performance, the Customer returns one integer \texttt{payment}; the
Supplier returns one integer \texttt{buy\_\{resource\}} field per input and one integer
\texttt{produce\_\{product\}} field per output.  Payments must be nonnegative,
respect the remaining budget and the monetary granularity in
Appendix~\ref{app:benchmark-hyperparams}; purchase quantities lie between 0 and
12 and must be affordable; and production quantities must be nonnegative and
feasible from current inventory.  The Hotel Cleaning and AI Hosting runs use
the same schemas with the domain labels and monetary units mapped as in
Appendix~\ref{app:benchmark-hyperparams}.

The performance parser attempts the complete response and then the first flat
JSON object it contains.  Missing required fields or malformed JSON
receive corrective feedback and up to three attempts, after which the
corresponding action defaults to zero.  A well-formed but invalid payment,
unaffordable order, or infeasible production decision likewise receives
state-specific feedback for at most three action attempts before the zero
fallback.  Transient provider/API exceptions are retried up to five times with
exponential backoff; these transport retries do not alter the environment
state or add dialogue to an agent's history.

\pagebreak
\subsection{Prompt Guidance Clauses}
\label{app:ablation-prompts}

Figure~\ref{fig:prompt-ablations} toggles one additional instruction at a time; the control
adds nothing.  In negotiation, the following text was appended to the
\textsf{goal} field of both agents, except for the role-specific contingency
wording:
\begin{itemize}
    \item \textbf{Contingencies (Customer):} ``RISK: You should be wary of the
    risk of default from the Caterer. Consider adding contingency clauses.''
    \item \textbf{Contingencies (Caterer):} ``RISK: Be wary of the risk of
    default from the Customer and environment stochasticity. Consider adding
    contingency clauses.''
    \item \textbf{Net utility (both):} ``ACCEPTING CONTRACTS: You should only
    accept contracts if you expect them to result in a net increase in
    utility.''
    \item \textbf{No worse prior (both):} ``ACCEPTING CONTRACTS: You should not
    accept contracts that are worse for you than any contract that has been
    proposed to you before.''
\end{itemize}

In performance, the following text was appended to the \textsf{goal} field of
the focal LLM; the other role's prompt was unchanged:
\begin{itemize}
    \item \textbf{Prohibition:} ``CONTRACT PERFORMANCE: Avoid unilaterally
    violating the contract, and plan ahead so that you aren't in a position
    when you are forced to violate the terms.''
    \item \textbf{Institutional incentives:} ``CONTRACT PERFORMANCE: You are
    in a repeated game; violating the contract may result in you losing out on
    future opportunities to make contracts with the same counter-party in the
    future.''
    \item \textbf{Reciprocity:} ``CONTRACT PERFORMANCE: Stop following the
    contract once it's clear that the other party is not following the
    contract.''
    \item \textbf{Planning:} ``CONTRACT PERFORMANCE: Plan ahead under
    uncertainty, and consider multiple possible outcomes.''
\end{itemize}

\clearpage
\section{Additional Methods}

\subsection{Rational Baselines}
\label{app:rational-baselines}

\paragraph{Scope and parseability.}
The rational baselines operate on the structured, parse-complete contract
constraint,
\begin{equation}
    C^\omega=(\mathbf{p}^d,\mathbf{q},\mathbf{M},\kappa).
\end{equation}
We write $C^\omega(\tau)\in\{0,1\}$ for its Boolean evaluation on trajectory
$\tau\in\mathcal T$.  Here
``parse-complete'' means that the contract specifies all three dish prices,
all five production-week quantity vectors, and all six payment-week amounts in
types accepted by the solver.  It does not mean that the terms are necessarily
economically feasible or arithmetically consistent: those properties are
checked and reported separately rather than silently repaired.  Every
synthetic performance contract is parse-complete by construction.  A
natural-language proposal with missing or ambiguous required terms receives
no solver configuration and hence no RC, RE, RCC, utility, or satisfaction
estimate; Appendix~\ref{app:contract-translation} describes this path in
detail.

\paragraph{Prefix constraints and absorbing violations.}
For the parse-complete contracts we consider, it is possible to decompose the contract constraint $C^\omega$ into role-specific
prefix indicators
$C^\omega_{i,t}(\tau_{\leq t})\in\{0,1\}$ for
$i\in\mathcal{I}'=\{\mathrm{Cust},\mathrm{Supp}\}$.  The indicator is active while role $i$ has met every obligation due through week
$t$.  These constraints hence have the absorbing-prefix property:
\begin{equation}
    C^\omega_{i,t}(\tau_{\leq t})=0
    \quad\Longrightarrow\quad
    C^\omega_{i,t'}(\widetilde\tau_{\leq t'})=0
    \quad
    \forall t'>t,
    \ \widetilde\tau_{\leq t'}\succeq\tau_{\leq t},
    \label{eq:absorbing-contract-constraint}
\end{equation}
where $\widetilde\tau_{\leq t'}\succeq\tau_{\leq t}$ denotes any extension
of the observed prefix.  Equivalently, the violation indicator
$1-C^\omega_{i,t}$ is nondecreasing.  A shortfall permitted by an active
deduction or rollover remedy does not set this indicator to zero until that
remedy's own failure condition is met; once a failure is declared, however,
later cure cannot restore the prefix to compliance.  The joint terminal
constraint is
$C^\omega(\tau)=\prod_{i\in\mathcal{I}'}C^\omega_{i,L}(\tau_{\leq L})$.

\paragraph{RC Customer.}
The Customer has only one action on each payment week. Assuming a parse-complete contract $\omega$ which includes a full payment schedule, these actions are \emph{fully specified} by $C^\omega$. As such, the RC Customer's policy $\pi^{\mathrm{RC}}_{\mathrm{Cust}}$ is fully determined regardless of the reference
Supplier policy $\pi^{\mathrm{ref}}_{\mathrm{Supp}}$ in Equation~\ref{eq:rc}.

Let $M_w$ be the
scheduled payment, $B^{\mathrm{rem}}_w$ its remaining budget, and $\delta_w$
the deduction generated by the preceding production round, with
$\delta_w=0$ when payment deduction is absent.  Each week, the RC Customer pays:
\begin{equation}
    M^{\mathrm{RC}}_w
    =\min\!\left\{
        B^{\mathrm{rem}}_w,
        M_w-\min\{\delta_w,M_w\}
      \right\}.
    \label{eq:rc-customer-payment}
\end{equation}
Any deduction beyond the next scheduled payment is forfeited.  Under a valid
within-budget schedule, the outer minimum never applies and is only retained as a
runtime safety check.  RC applies contract-authorized deductions but never
stops paying merely because the Supplier violated.  Substitution and rollover
affect how the Customer computes Supplier fulfillment, but create no
additional Customer choice unless payment deduction is also active.

\paragraph{RC Supplier.} To solve for the RC Supplier's policy $\pi^{\mathrm{RC}}_{\mathrm{Supp}}$, we assume a parse-complete contract $\omega$ such that the reference Customer policy $\pi^{\mathrm{ref}}_{\mathrm{Cust}} \equiv \pi^{\mathrm{RC}}_{\mathrm{Cust}}$ is fully determined by $\omega$. Together with the contractual constraints $C^\omega$ and the desired violation rate $\epsilon$, this creates a constrained Markov Decision Process (cMDP) from the Supplier's perspective. We approximately solve this cMDP via dynamic programming, ensuring tractability by restricting the full policy space that we optimize over.

\textbf{State space, transition function, and fixed production rule.} Index the production rounds by $t\in\{1,\ldots,(L-1)/2\}$.  Immediately before
the round-$t$ input-price draw, the cMDP state is:
\begin{equation}
    s_t=(\mathbf n_t,\beta_t,\mu_t),
    \label{eq:rc-supplier-state}
\end{equation}
where $\mathbf n_t$ is the three-input inventory vector, $\beta_t$ is the
Supplier's integer cash balance, and $\mu_t$ is contingency memory.  After
observing the price scenario $\mathbf p_t$, the Supplier chooses an order
$\mathbf o_t\in\{0,\ldots,12\}^3$ whose cost does not exceed the balance after
the current scheduled payment.  The environment then draws input receipts,
the inventory cap is applied, and a deterministic, solver-aligned production
rule enumerates feasible output vectors.  It first prefers full fulfillment (or,
under rollover, any delivery that avoids rollover failure), then maximizes
credited service using deduction prices when applicable and fixed whole-number weights \(4\), \(2\), and \(1\) for \(O_1\), \(O_2\), and \(O_3\) otherwise, and finally minimizes total ingredient use. The latter weights discretize the stochastic environments' relative recipe-cost scale for indivisible outputs; they define a production-priority heuristic and apply independently of the substitution contingency. Spoilage then maps leftover inventory into the next
state.

This production rule is an approximation of the optimal production policy.  Optimizing output
production jointly with ordering would couple the ingredient coordinates
through the recipes: every output vector would induce a different joint leftover
inventory transition, greatly expanding the effective state--action tables.
Fixing a deterministic post-receipt production rule instead lets the solver
reuse ingredient-wise receipt and spoilage kernels.  The rule prioritizes
contract preservation, credited service, and ingredient conservation, but it
does not account for the continuation value of leftover ingredients and need
not yield the globally optimal joint ordering and output production scheme.

The memory component $\mu_t$ of the cMDP state allows it to handle contractual contingencies:
\begin{itemize}
    \item Under grim trigger, or substitution alone, memory is not necessary. Substitution just changes the output fulfillment map by enforcing each named
    minimum exactly and crediting above-minimum output at the
    $4{:}2{:}1$ substitution rate.
    \item Under payment deduction, $\mu_t\in\{0,1\}$ records whether the
    immediately preceding production round had a shortfall, since two
    consecutive shortfall rounds constitute a contract violation;
    \item Under rollover, $\mu_t=(d_{t,A},d_{t,B},d_{t,C})$ records each
    carried product deficit up to the contractual cap.  A deficit above the
    cap, or failure to cure a carried deficit in the following production
    round, constitutes a contract violation; and
    \item The combined deduction--rollover mode uses the same deficit vector
    while also applying the capped deduction transition.  Substitution can
    overlay any primary mode without adding another state coordinate.
\end{itemize}

\textbf{Bellman recursions.}
Let $\zeta_t$ collect the finite receipt and spoilage outcomes after choosing
an order, and let $F_t^\omega(s,\mathbf p,\mathbf o,\zeta)$ be the resulting
next live state.  Let
$\chi_t^\omega(s,\mathbf p,\mathbf o,\zeta)\in\{0,1\}$ indicate that the
transition remains within the contractual constraint and all required balance
transitions are affordable.  The contingency rules above, including any
deduction and carried deficit, are evaluated inside $F_t^\omega$ and
$\chi_t^\omega$.  For every cMDP state, observed price, and affordable
order, the solver computes the action-continuation satisfaction probability $\widehat P_t(s,\mathbf p,\mathbf o)$
and the Supplier value-function $V_t(s,\mathbf p,\mathbf o)$:
\begin{align}
    \widehat P_t(s,\mathbf p,\mathbf o)
    &=\sum_{\zeta}
      \Pr(\zeta\mid s,\mathbf p,\mathbf o)\,
      \chi_t^\omega(s,\mathbf p,\mathbf o,\zeta)\,
      P_{t+1}\!\left(F_t^\omega(s,\mathbf p,\mathbf o,\zeta)\right),
      \label{eq:rc-bellman-psat}\\
    \widehat V_t(s,\mathbf p,\mathbf o)
    &=\sum_{\zeta}
      \Pr(\zeta\mid s,\mathbf p,\mathbf o)
      \left[
        r_t^\omega(s,\mathbf p,\mathbf o,\zeta)
        +\chi_t^\omega(s,\mathbf p,\mathbf o,\zeta)
         V_{t+1}\!\left(F_t^\omega(s,\mathbf p,\mathbf o,\zeta)\right)
      \right],
      \label{eq:rc-bellman-value}
\end{align}
where $r_t^\omega$ is the current-round Supplier payoff.  Terminal
values implement the final week payment and the contingency-specific terminal rule:
rollover requires no unresolved carried deficit, while a final payment
deduction is capped by the final week amount.  States where the contract has been violated have zero continuation
satisfaction and no future value, but payoff already realized in the current
round is retained.

Let $\bar p_{\mathrm{sat}}=1-\epsilon=0.95$ and let
$\mathcal A_t^{\mathrm{ord}}(s,\mathbf p)$ be the affordable order vectors at
the current table entry.  Define the statewise admissible set:
\begin{equation}
    \mathcal A_t^{\mathrm{sat}}(s,\mathbf p)
    =\left\{\mathbf o\in\mathcal A_t^{\mathrm{ord}}(s,\mathbf p):
      \widehat P_t(s,\mathbf p,\mathbf o)
      \geq\bar p_{\mathrm{sat}}\right\}.
\end{equation}
Our RC Supplier policy then uses a lexicographic selection rule
\begin{equation}
    \pi^{\mathrm{RC}}_{\mathrm{Supp},t}(s,\mathbf p)
    \in
    \begin{cases}
      \displaystyle\arg\max_{\mathbf o\in\mathcal A_t^{\mathrm{sat}}(s,\mathbf p)}
          \widehat V_t(s,\mathbf p,\mathbf o),
          &\mathcal A_t^{\mathrm{sat}}(s,\mathbf p)\neq\varnothing,\\[7pt]
      \displaystyle\arg\max_{\mathbf o\in\mathcal A_t^{\mathrm{ord}}(s,\mathbf p)}
          \widehat P_t(s,\mathbf p,\mathbf o),
          &\mathcal A_t^{\mathrm{sat}}(s,\mathbf p)=\varnothing.
    \end{cases}
    \label{eq:rc-supplier-selection}
\end{equation}
The second branch is important: at a state from which no order achieves the
satisfaction probability threshold, the implementation chooses the order with the highest remaining
satisfaction probability rather than an unconstrained profit maximizer.
We note that a constrained stochastic-control problem can admit a randomized policy whose
mixture of orders gives a better utility--reliability tradeoff than every
deterministic policy \cite{altman2021constrained}.  Our implementation does not optimize over such
mixtures.  It instead seeks a sufficiently reliable, high-value deterministic
Supplier policy and computes exactly one order at every table entry, with fixed
tie-breaking for reproducible replay.  The pre-price Bellman arrays are then:
\begin{align}
    P_t(s)
    &=\sum_{\mathbf p}\nu_t(\mathbf p)\,
      \widehat P_t\!\left(
        s,\mathbf p,\pi^{\mathrm{RC}}_{\mathrm{Supp},t}(s,\mathbf p)
      \right),\\
    V_t(s)
    &=\sum_{\mathbf p}\nu_t(\mathbf p)\,
      \widehat V_t\!\left(
        s,\mathbf p,\pi^{\mathrm{RC}}_{\mathrm{Supp},t}(s,\mathbf p)
      \right),
    \label{eq:rc-supplier-preprice}
\end{align}
where $\nu_t$ is the environment's exact joint price distribution.

Conditional on our deterministic production rule, we evaluate these
recursions backward from the final production round to the first over every finite inventory,
balance, contingency-memory, price, and order tuple.  Receipt, spoilage, and
price probabilities come from the environment's exact transition tables; no
Monte Carlo rollouts or function approximation are used.  Thus the backward
recursion and stochastic integration are exact for the fixed production rule. The result is a deterministic order table indexed
by
$(t,\mathbf n_t,\beta_t,\mu_t,\mathbf p_t)$.  Both RC and RCC Suppliers look-up actions from
this same table, and the same deterministic post-receipt production rule,
before any Customer breach. We can also compute the overall satisfaction probability $P_{\mathrm{sat}}((\pi^{\mathrm{RC}}_{\mathrm{Supp}}, \pi^{\mathrm{RC}}_{\mathrm{Cust}}), C^\omega)$ as $P_0(s_0)$ the satisfaction probability in the initial state $s_0$.

\paragraph{RE approximation.}
Our implementation approximates the Rational Exploiter (Eq.~\ref{eq:re}) by non-engagement
throughout:
\begin{equation}
    \widetilde\pi^{\mathrm{RE}}_{\mathrm{Cust}}: M_w=0
    \quad\text{and}\quad
    \widetilde\pi^{\mathrm{RE}}_{\mathrm{Supp}}:
    \mathbf o_t=\mathbf 0,\ \mathbf q'_t=\mathbf 0.
    \label{eq:implemented-re}
\end{equation}
For the Customer, zero payment is a best response to an RC Supplier when the
Supplier can finance the complete promised delivery schedule from its initial
capital, so that Customer payments cannot improve deliveries.  In general,
however, a strategically exploiting Customer might pay enough to finance
profitable later deliveries; the implemented zero-payment policy does not
solve that unconstrained dynamic program.

For the Supplier, zero purchasing
and production is a best response to a fully paying RC Customer when payments
do not depend on output: it preserves every scheduled receipt and incurs no
input cost.  Under payment deduction, some production could instead preserve
future payments, so the zero-production policy is again an explicit
non-engagement approximation.
Our RE implementation is thus a fixed adversarial counterparty, not the exact unconstrained best
response for every contingent contract.

\paragraph{RCC and observed breach.}

At a decision in week $t$, define the pre-action counterparty-breach indicator
directly from the role-specific prefix constraint:
\begin{equation}
    D_{i,t}
    =\mathbf{1}\!\left[
       \exists k<t:
       C^\omega_{-i,k}(\tau_{\leq k})=0
      \right].
    \label{eq:implemented-rcc-breach}
\end{equation}
The strict inequality makes a response depend only on a violation already
observed before the current action.  Because
Equation~\ref{eq:absorbing-contract-constraint} is absorbing, $D_{i,t}$ can
switch from zero to one only once.  Customer underpayment is assessed against
the contractually due payment after any capped deduction.  Supplier delivery is
assessed after substitution and, where present, against the deduction or
rollover failure conditions above. The RCC policy is then implemented as:
\begin{equation}
    \pi^{\mathrm{RCC}}_{i,t}
    =
    \begin{cases}
        \pi^{\mathrm{RC}}_{i,t}, & D_{i,t}=0,\\
        \widetilde\pi^{\mathrm{RE}}_{i,t}, & D_{i,t}=1.
    \end{cases}
    \label{eq:implemented-rcc}
\end{equation}
Thus an RCC Customer makes the contractually due RC payments until a Supplier
failure and then pays zero permanently.  An RCC Supplier follows the RC
order table until the Customer pays less than the contractually due amount, and then
orders and produces zero permanently.  Against the implemented non-engaging
RE counterparty, this zero-action continuation is a best response: further
payment yields no delivery to the Customer, and further production yields no
revenue to the Supplier.

\subsection{Rationality of Contract Compliance under Repetition}
\label{app:proof-grim-trigger-compliance}

In the main text, we note that contractual compliance by following the RC policy in a single episode can be made individually rational under a variety of institutional incentives (e.g. repetition, reputation, third-party enforcement). Here, we give specific conditions for the case of repetition.

Consider a performance game $\mathcal{P}$ that two agents
enter after agreeing to a contract $\omega \in \Omega$. The agents know
that they will interact repeatedly; that is, they will play infinitely
many episodes of $\mathcal{P}$. Let $\mathcal{P}_{\mathrm{rep}}$ denote
this infinitely repeated game. Assume the agreed contract admits
well-defined episode-level compliance and defection policies for each
player, and that both players discount future episodes by a common
factor $\delta \in (0,1)$. We then have the following result.

\begin{theorem}
\label{thm:grim-trigger-compliance}
    Consider an infinitely repeated episodic performance game
    $\mathcal{P}_{\mathrm{rep}}$ induced by a contract $\omega \in
    \Omega$. After each episode $k$, assume that public, deterministic
    monitoring reveals whether either player violated the contract. For each
    player $i$, let $D_{i,k}\in\{0,1\}$ denote the violation signal observed
    by $i$, where $D_{i,k}=1$ if and only if player $-i$ violated $\omega$ in
    episode $k$. Let $\pi^{\mathrm{comp}}_i$ be an episode-level compliance
    policy that is optimal among player $i$'s compliant policies against
    $\pi^{\mathrm{comp}}_{-i}$. Let $\Pi^{\mathrm{def}}_i$ be the set of
    player $i$'s episode-level defection policies, each of which produces a
    publicly detected violation with certainty, and choose
    \[
        \pi_i^{\mathrm{def}}
        \in
        \arg\max_{\pi_i\in\Pi^{\mathrm{def}}_i}
        \mathbb{E}\!\left[
            u_i(\tau;\theta_i)
            \mid \omega,\pi_i,\pi_{-i}^{\mathrm{comp}}
        \right].
    \]

    Assume that mutual defection is credible threat; that is, for each player $i$ and every alternative policy $\pi_i$
    \[
       \mathbb{E}\!\left[
            u_i(\tau;\theta_i)
            \mid \omega,\pi_i^{\mathrm{def}},\pi_{-i}^{\mathrm{def}}\right] \geq \mathbb{E}\!\left[
            u_i(\tau;\theta_i)
            \mid \omega,\pi_i,\pi_{-i}^{\mathrm{def}}
        \right],
    \]

    Define the episode-level payoffs as:
    \[
        R_i
        :=
        \mathbb{E}\!\left[
            u_i(\tau;\theta_i)
            \mid \omega,\pi_i^{\mathrm{comp}},\pi_{-i}^{\mathrm{comp}}
        \right],
    \]
    \[
        T_i
        :=
        \mathbb{E}\!\left[
            u_i(\tau;\theta_i)
            \mid \omega,\pi_i^{\mathrm{def}},\pi_{-i}^{\mathrm{comp}}
        \right],
    \]
    \[
        P_i
        :=
        \mathbb{E}\!\left[
            u_i(\tau;\theta_i)
            \mid \omega,\pi_i^{\mathrm{def}},\pi_{-i}^{\mathrm{def}}
        \right].
    \]
    Suppose $T_i > R_i > P_i$ for each $i \in \{1,2\}$.
    Then there exists a threshold patience factor
    $\delta^{*}(\mathcal{P}_{\mathrm{rep}}) < 1$ such that for all
    $\delta \geq \delta^{*}(\mathcal{P}_{\mathrm{rep}})$, the repeated
    game admits a subgame-perfect equilibrium in which both players
    comply with the contract $\omega$ on the equilibrium path.
\end{theorem}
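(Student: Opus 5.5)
We build a grim-trigger strategy profile and verify subgame perfection with the one-shot deviation principle \cite{hendon1996one}. In the profile, player $i$ plays $\pi^{\mathrm{comp}}_i$ in episode $k$ as long as no violation has been publicly detected in any earlier episode. Once $D_{1,k'}=1$ or $D_{2,k'}=1$ for some $k'<k$, player $i$ plays $\pi^{\mathrm{def}}_i$ in every later episode. Monitoring is public and deterministic, so both players always agree on which phase they are in. This partitions the subgames into two classes: cooperative-phase histories with no past violation, and punishment-phase histories with a past violation. On the equilibrium path, no violation ever occurs, so both players comply with $\omega$ forever.

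\emph{Punishment phase.} Every continuation plays the stage profile $(\pi^{\mathrm{def}}_i,\pi^{\mathrm{def}}_{-i})$ regardless of what happens next. A one-shot deviation by $i$ to any $\pi_i$ changes only the current episode's payoff, because future play is already fixed at mutual defection. The credible-threat assumption says that $\pi_i^{\mathrm{def}}$ is a best response to $\pi_{-i}^{\mathrm{def}}$, so no such deviation is profitable. The continuation is therefore an equilibrium; it is the repetition of a stage Nash profile.

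\emph{Cooperative phase.} Conforming yields a discounted value of $R_i/(1-\delta)$. Consider a one-shot deviation to some $\pi_i$ in the current episode. If $\pi_i$ produces no detected violation, then I must argue its stage payoff is at most $R_i$. This is the delicate point. I would interpret the hypothesis that $\pi_i^{\mathrm{comp}}$ is optimal among compliant policies, together with the partition of episode-level policies into compliant ones and detected-defection ones, as covering every non-triggering deviation. Such a deviation then leaves continuation play unchanged and gains nothing. If instead $\pi_i\in\Pi_i^{\mathrm{def}}$, its stage payoff is at most $T_i$ by the choice of $\pi_i^{\mathrm{def}}$. After that, punishment starts, and $i$ earns at most $P_i$ per episode; this uses the punishment-phase argument and the fact that one-shot deviation lets us fix $i$'s continuation at the prescribed play. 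The deviation is unprofitable whenever
\[
\frac{R_i}{1-\delta}\;\geq\; T_i+\frac{\delta P_i}{1-\delta},
\qquad\text{equivalently}\qquad
\delta\;\geq\;\frac{T_i-R_i}{T_i-P_i}.
\]
Set $\delta^*:=\max_i (T_i-R_i)/(T_i-P_i)$. Since $T_i>R_i>P_i$, each ratio lies strictly in $(0,1)$, so $\delta^*<1$.

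The main obstacle is bookkeeping about which deviations exist. We must check that the stage payoffs are uniformly bounded, which holds here because utilities are bounded in a finite-horizon game with capped budgets. We must also handle deviations that are randomized or only probabilistically detected, which the statement sidesteps by assuming $\Pi_i^{\mathrm{def}}$ triggers detection with certainty. I would state explicitly that any policy outside $\Pi_i^{\mathrm{def}}$ is treated as compliant, which is justified by deterministic monitoring, so that the two-case analysis is exhaustive. With that in place, the one-shot deviation principle applies because discounting makes the game continuous at infinity, and the conclusion follows for every $\delta\ge\delta^*$.
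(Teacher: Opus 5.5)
Your proposal is correct and follows essentially the same argument as the paper: a grim-trigger profile verified with the one-shot deviation principle, comparing $R_i/(1-\delta)$ against $T_i+\delta P_i/(1-\delta)$ to obtain $\delta^*=\max_i (T_i-R_i)/(T_i-P_i)<1$. Your explicit punishment-phase check via the credible-threat assumption is slightly more careful than the paper's proof. The exhaustiveness you flag (every deviation is either compliant or a certainly-detected defection) is also assumed implicitly in the paper's proof. However, it comes from the paper's standing assumption that episode-level compliance and defection policies are well defined, not from deterministic monitoring alone, since a policy can violate with intermediate probability.
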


\begin{proof}
    Consider the grim-trigger strategy under which both players comply
    in every episode until the first episode $k$ in which $D_{j,k}=1$ for
    some $j\in\{1,2\}$, after which both switch forever to the
    mutual-defection punishment profile. By the
    one-shot deviation principle for sequential rationality
    \cite{hendon1996one}, it suffices to rule out a profitable
    deviation in a single episode after any compliant history.

    Fix a player $i$. Under compliance, the continuation value is
    \[
        V_i^{\mathrm{comp}}
        =
        \frac{R_i}{1-\delta}.
    \]
    Any one-episode compliant deviation is unprofitable by the assumed
    optimality of $\pi_i^{\mathrm{comp}}$. Among defection policies,
    $\pi_i^{\mathrm{def}}$ yields the maximum one-episode payoff $T_i$.
    Deterministic detection then triggers the mutual-defection punishment
    path, so the maximum defection value is
    \[
        V_i^{\mathrm{def}}
        =
        T_i + \frac{\delta P_i}{1-\delta}.
    \]
    Compliance is optimal whenever
    $V_i^{\mathrm{comp}} \geq V_i^{\mathrm{def}}$, i.e.,
    \[
        \frac{R_i}{1-\delta}
        \geq
        T_i + \frac{\delta P_i}{1-\delta}.
    \]
    Rearranging gives
    \[
        \delta
        \geq
        \delta_i^*
        :=
        \frac{T_i - R_i}{T_i - P_i}.
    \]
    Since $T_i > R_i > P_i$, we have $0 < \delta_i^* < 1$. Define
    \[
        \delta^{*}(\mathcal{P}_{\mathrm{rep}})
        :=
        \max_{i \in \{1,2\}} \delta_i^*.
    \]
    Then for every $\delta \geq \delta^{*}(\mathcal{P}_{\mathrm{rep}})$,
    neither player has a profitable one-episode deviation after any
    compliant history. The public violation signals $D_{i,k},D_{-i,k}$ at
    episode boundaries and credibility of mutual defection therefore make the
    grim-trigger continuation sequentially rational,
    so the resulting profile is subgame-perfect and sustains compliance
    on the equilibrium path.
\end{proof}

\subsection{Natural-Language Contract Translation}
\label{app:contract-translation}

Contracts are evaluated through a separate, offline translation step that
maps the accepted natural-language agreement (and each formal proposal used
by the proposal-level metrics) into the structured contract constraint
$C^\omega=(\mathbf{p}^d,\mathbf{q},\mathbf{M},\kappa)$ described in the main
text and above.  This translator is
not one of the negotiating agents and does not receive either party's private
utilities or the realized environment trajectory.  We use Gemini~3.6 Flash
and require a JSON object conforming to the schema in
Table~\ref{tab:contract-translation-schema}.  Contract identifiers are passed
through unchanged so that every returned object can be matched to its source
text.

\paragraph{Contract-authoring instructions.}
Parseability is supported upstream rather than imposed only after negotiation.
The role prompts reproduced in Appendix~\ref{app:negotiation-prompts} instruct
both agents to negotiate the unit price of every dish, the delivery quantity
in every production round, and the payment in every payment week.  They also
state that any proposed contract---and any contract accepted by the
Customer---must give every one of those terms explicitly, using nonnegative
whole-number quantities and whole-dollar payments.  The action scaffold
further requires the complete agreement to appear in a dedicated
\texttt{contract} field.  These instructions explain why most model-authored
agreements map cleanly into the benchmark's formal space; the downstream
translator nevertheless treats the text as untrusted and never fills an
ambiguous core term by inference.

\begin{table}[!ht]
\centering
\small
\begin{tabularx}{0.96\textwidth}{l>{\raggedright\arraybackslash}X>{\raggedright\arraybackslash}X}
\toprule
\textbf{Field} & \textbf{Extracted representation} & \textbf{Normalization and validation} \\
\midrule
\texttt{dish\_prices}
& Unit prices for dishes A, B, and C
& Every price is numeric or null.  A, B, and C denote Margherita Pizza,
Pesto Pasta, and Tomato Soup, respectively. \\
\texttt{production\_schedule}
& Nonnegative integer quantities for A, B, and C in production weeks
2, 4, 6, 8, and 10
& Uniform quantities are expanded across weeks; omitted or ambiguous values
remain null. \\
\texttt{payment\_schedule}
& Payment amounts in weeks 1, 3, 5, 7, 9, and terminal week 11
& Uniform payments are expanded across weeks; omitted or ambiguous values
remain null. \\
\texttt{contingency\_set}
& Explicitly stated substitution, grim-trigger, payment-deduction, and
rollover clauses
& Only clauses stated in the text are extracted; implicit solver behavior is
recorded separately. \\
\texttt{contingency\_params}
& Minimum delivered quantities for substitution or deduction and the
rollover deficit limit
& Missing parameters receive only the deterministic benchmark defaults
described below, never an LLM-invented value. \\
\bottomrule
\end{tabularx}
\caption{Structured schema used to translate natural-language contracts.}
\label{tab:contract-translation-schema}
\vspace{-6pt}
\end{table}

\paragraph{Exact extraction prompt.}
The following is the implemented first-attempt prompt.  The final placeholder
is replaced by a JSON object containing each source identifier and its raw
contract text.

\begin{lstlisting}[basicstyle=\ttfamily\scriptsize,breaklines=true,
                   columns=fullflexible,keepspaces=true,showstringspaces=false]
Extract structured numerical contract terms from these catering contracts.

Return ONLY valid JSON in this exact shape:
{
  "contracts": [
    {
      "contract_id": "<same id from input>",
      "dish_prices": {"A": <number|null>, "B": <number|null>, "C": <number|null>},
      "production_schedule": [
        {"week": 2, "A": <int|null>, "B": <int|null>, "C": <int|null>},
        {"week": 4, "A": <int|null>, "B": <int|null>, "C": <int|null>},
        {"week": 6, "A": <int|null>, "B": <int|null>, "C": <int|null>},
        {"week": 8, "A": <int|null>, "B": <int|null>, "C": <int|null>},
        {"week": 10, "A": <int|null>, "B": <int|null>, "C": <int|null>}
      ],
      "payment_schedule": [
        {"week": 1, "amount": <number|null>},
        {"week": 3, "amount": <number|null>},
        {"week": 5, "amount": <number|null>},
        {"week": 7, "amount": <number|null>},
        {"week": 9, "amount": <number|null>},
        {"week": 11, "amount": <number|null>}
      ],
      "contingency_set": ["<implemented_clause_name>"],
      "contingency_params": {
        "substitution": {"min_qty": {"A": int, "B": int, "C": int}},
        "payment_deduction": {"min_qty": {"A": int, "B": int, "C": int}},
        "rollover": {"max_deficit": int|null}
      }
    }
  ]
}

Mapping:
- A = Margherita Pizza
- B = Pesto Pasta
- C = Tomato Soup

Rules:
- Parse each input contract_id exactly once and copy its identifier exactly.
- `production_schedule` must contain exactly one row for each of weeks 2, 4, 6,
  8, and 10. Each row has `week`, `A`, `B`, and `C`; quantities are
  non-negative integers or null.
- `payment_schedule` must contain exactly one row for each of weeks 1, 3, 5, 7,
  9, and 11. Each row has `week` and `amount`; amounts are non-negative numbers
  or null.
- If the same quantities apply in weeks 2, 4, 6, 8, and 10, expand them.
- If a fixed payment applies to multiple payment weeks, expand it.
- Be especially careful that Week 11 is payment week 11, not week 1.
- Do not invent missing values. Use null when absent or ambiguous.
- Amounts and prices must be plain numbers, with no dollar signs.
- Only return the fields shown above.
- Extract only contingencies explicitly expressed by the contract. Do not add a
  default or implicit fallback contingency.
- Include every implemented contingency explicitly present in the contract.
  Every array entry must be exactly one of: "substitution", "grim_trigger",
  "payment_deduction", or "rollover". Do not return any other clause name.
- If the contract contains no explicit contingency, return `contingency_set: []`
  and `contingency_params: {}`.
- Use "grim_trigger" only when the contract explicitly states termination,
  suspension of future performance, or an equivalent trigger after breach.
- Use "substitution" when the contract explicitly allows some promised dishes
  above the required minimum quantities to be replaced with other dishes. Use
  expressly stated minimum quantities; if none are stated, use
  {"A": 1, "B": 1, "C": 1}.
- Use "payment_deduction" when a delivery shortfall results in a deduction from
  a future scheduled payment.
- Use "rollover" when missed dishes carry forward or must be cured later. If
  the contract states a maximum deficit, extract that number; otherwise use
  null.
- Only include parameter blocks for active contingencies. The permitted blocks
  are `substitution: {"min_qty": {"A": int, "B": int, "C": int}}`,
  `payment_deduction: {"min_qty": {"A": int, "B": int, "C": int}}`, and
  `rollover: {"max_deficit": int|null}`.

INPUT CONTRACTS JSON:
<JSON batch inserted here>
\end{lstlisting}

\paragraph{Extraction and retries.}
The translation prompt instructs the model to extract only terms stated in
the contract, expand uniform schedules, and return null rather than infer a
missing or ambiguous value.  The response parser accepts either bare JSON or
JSON enclosed in a Markdown code fence.  It verifies that the top-level
object contains a list of contracts and that the returned identifier set
exactly matches the requested batch.  Malformed output receives corrective
feedback for up to three attempts.  If all attempts fail, or if any identifier
is missing or duplicated, the batch is rejected rather than partially
accepted.

\paragraph{Unsupported clauses.}
The contingency schema is deliberately closed: the translator extracts only
substitution, grim trigger, payment deduction, and rollover.  Text expressing
another remedy remains in the retained raw agreement and audit response, but
it is not added to $\kappa$, enforced by $C^\omega$, credited in clause counts,
or passed to the solver.  If the remaining implemented core terms are
complete, the record can still be solved under the implemented clauses; if an
unsupported clause makes a required price, quantity, or payment ambiguous,
the affected field remains null and the record is parse-incomplete.  This
makes the reported evaluation explicitly an evaluation of the supported
contract language rather than an implicit claim to execute arbitrary legal
prose.

\paragraph{Deterministic normalization.}
After extraction, numeric strings and schedule rows are converted into the
canonical types and week indices above.  Missing schedule rows are inserted
with null values so that incompleteness remains visible.  For an explicitly
stated substitution clause whose minimum is unspecified, the benchmark uses
the current default minimum of one unit of each dish.  Payment-deduction
amounts are derived deterministically from the agreed dish prices rather than
being treated as independently translated terms.  An explicitly stated
rollover clause defaults to a maximum deficit of two units per dish, and its
limit is clamped to the supported range of zero to two.

The RCC solver requires a primary post-breach response even when the contract
does not state one.  In that case preprocessing supplies grim trigger solely
as an implicit solver fallback.  The stored record separates
\emph{explicit contingencies} from \emph{implicit solver contingencies}; all
reported clause-frequency and negotiated-contingency statistics use only the
explicit set.  Thus this fallback does not turn an omitted term into a clause
that the agents negotiated.

\paragraph{Validation and failure handling.}
The normalized contract is checked for all three prices, all five production
weeks, and all six payment weeks.  A contract with any required null or an
invalid type is marked parse-incomplete and is not submitted to the RCC
solver.  A separate deterministic validator checks stated totals and payment
arithmetic, the payment total against the environment-specific customer
budget, minimum quantities, and inventory/resource feasibility.  Arithmetic
disagreements are retained as contract outcomes rather than silently
repaired.  Parse-complete records are converted to solver inputs containing
the environment identifier, production quantities, dish prices, the five
in-game payments and terminal payment, contingency mode and parameters, and
the $0.95$ satisfaction threshold corresponding to the $\epsilon=0.05$
violation tolerance.

\paragraph{Parsing validity of LLM-negotiation contracts.}
All 1,403 proposals in the LLM negotiation corpus produced
parse-complete core terms and solver configurations, including all 159
accepted final agreements; there were no missing required fields or
configuration-construction errors.  This is parsing coverage, not a claim that
every proposal was valid.  The deterministic checks marked 44 proposals as
arithmetically inconsistent and 160 as over budget, with five proposals in
both groups, leaving 1,204 of 1,403 proposals feasible under the paper's
combined definition.  Among accepted finals, all 159 were within budget and
four contained an arithmetic inconsistency.  We retain these failures as
model outcomes rather than repairing or excluding their text at translation
time.

\paragraph{Caching and audit trail.}
Each translation is cached only when its schema version, record identifier,
SHA-256 hash of the raw contract text, provider/model, and structured terms
match.  Raw batch responses and per-record normalized terms are retained for
audit, while hashes of the normalized solver configurations support exact
deduplication.  Consequently, changing either the contract text, translation
model, or schema forces retranslation instead of reusing a stale parse.

\subsection{Synthetic Contract Generation}
\label{app:synthetic-generation}

To generate contracts spanning a range of satisfiability levels, we first
search over base contracts containing the grim-trigger termination rule but no
elective contingencies.  For each target
$\rho\in\{0.9,0.8,0.7,0.6,0.5\}$, we solve
\begin{align}
    \max_{\omega\in\Omega'_{\mathrm{base}}}\quad
        &U_{\mathrm{Cust}}(\omega)+U_{\mathrm{Supp}}(\omega) \\
    \text{subject to}\quad
        &U_i(\omega)\geq u_i(\bot),
          &&i\in\{\mathrm{Cust},\mathrm{Supp}\}, \\
        &\left|P_{\mathrm{sat}}\!\left(
            \pi^{\mathrm{RCC}},C^\omega
          \right)-\rho\right|\leq 0.05.
\end{align}
Thus the objective maximizes equal-weight expected mutual benefit, subject to
individual rationality and membership in the target $P_{\mathrm{sat}}$ band.
A multi-start coordinate-ascent search varies the three delivery quantities,
three unit prices, and six nominal payments.

We run this construction independently in the four stochastic
environments (Environments~3--6).  Because satisfiability is degenerate in the two deterministic
environments, Environment~1 reuses the written terms generated for
Environment~3 and Environment~2 reuses those generated for Environment~4; in
both cases, the policies and metrics are re-solved under the destination
environment's dynamics.

For every base family, we hold quantities, prices, and payments fixed and
attach each of the eight combinations of dish substitution, payment deduction,
and rollover; the grim-trigger termination rule remains present in every
variant.  This produces $5\times8=40$ candidates per environment.  Across the
240 synthetic contracts, payment deduction and payment deduction combined rendered 46 contracts mutually unbeneficial.  We
therefore exclude those two variant classes uniformly, retaining six variants
per family and $6\times5\times6=180$ contracts across the six evaluation
environments.  Because a contingency changes the induced execution problem,
we independently re-solve the RCC policy and recompute satisfiability and
expected utilities for every variant rather than inheriting the base
contract's scores.

\paragraph{Domain-specific natural-language agreements.}
\label{app:domain-contract-materialization}
The optimizer outputs structured prices, delivery quantities, payments, and a
set of active contingencies.  We authored the natural-language base terms and
contingency clauses for the Catering domain.  For the matched Environment~5
cross-domain evaluation, we also authored adapters that render the same terms
and clauses in the vocabulary and monetary scale of the Hotel Cleaning and AI
Model Hosting domains.  For each selected structured record, the renderer
writes the base terms, appends exactly the selected substitution, deduction,
and rollover clauses, and finally appends the always-present grim-trigger
termination clause.  The resulting fully materialized natural-language
agreement is saved as \texttt{contract.txt} and inserted unchanged into both
agents' prompts; no model translates the contract at performance time.

\label{app:sample-contract}
\paragraph{Example rendered agreement.}
Below we reproduce a benchmark-eligible contract evaluated in  Environment~5 and carries dish substitution, payment
deduction, and rollover in addition to grim-trigger termination. The contract
language is copied verbatim from the evaluated \texttt{contract.txt}; only its
decorative separators are omitted and its field labels are bolded.

\begin{promptbox}{gray}{Evaluated Environment 5 contract (three elective contingencies)}
\plabel{Parties:} Customer and Caterer.

\medskip
\plabel{Term:} Weeks 1 through 11.

\medskip
\plabel{Unit prices:} Margherita Pizza \$11; Pesto Pasta \$6; Tomato
Soup \$3.

\medskip
\plabel{Delivery:} each of Week 2, Week 4, Week 6, Week 8, Week 10,
Caterer delivers 2 Margherita Pizzas; 2 Pesto Pastas; 1 Tomato Soup.
A shortfall is the amount by which a production week's delivery falls
below this scheduled quantity.

\medskip
\plabel{Payments:} Week 1 \$34; Week 3 \$99; Week 5 \$18; Week 7 \$10;
Week 9 \$24; Week 11 \$0.

\medskip
\plabel{Weekly minimums:} 1 Margherita Pizza; 1 Pesto Pasta; 1 Tomato
Soup.

\medskip
\plabel{Substitution:} quantities above the weekly minimums may be
substituted at 1 Margherita Pizza $=$ 2 Pesto Pastas $=$ 4 Tomato
Soups.

\medskip
\plabel{Rollover:} a shortfall by the Caterer carries to the next
production week to be cured. It is a violation if the shortfall on any
dish exceeds 2 units in a week, or if a carried shortfall is not cured
the following week.

\medskip
\plabel{Payment deduction:} the Customer may reduce the next scheduled
payment by the unit-price value of that production round's shortfall,
to no less than \$0; any deduction in excess of that payment is
forfeited and does not carry to later payments; the deduction is final
and creates no later repayment obligation. Absent a rollover clause,
the Caterer need not make up the undelivered units. Two consecutive
production weeks with a shortfall by the Caterer is a violation.

\medskip
\plabel{Termination:} except as modified by the clauses above (if any
exist in this contract), a violation by a party entitles the
counterparty to cease all further performance immediately and
permanently.
\end{promptbox}

\subsection{Negotiation Metric Definitions}
\label{app:negotiation-metrics}

For each evaluated message history $h\in\mathcal{M}^*$, let
$(\omega_h^{(1)},\ldots,\omega_h^{(n_h)})$ be its ordered sequence of formal
proposals and let $\operatorname{role}_h(k)\in\mathcal{I}'$ identify the
proposer of proposal $k$.  If agreement is reached, write
$\omega_h=\omega_h^{(n_h)}=\alpha(h)$.  Throughout this subsection,
$i\in\mathcal{I}'=\{\mathrm{Cust},\mathrm{Supp}\}$, and
$\varepsilon_{\mathrm{num}}=10^{-9}$ is the numerical comparison tolerance
used by the implementation.  The disagreement payoffs are
$u_{\mathrm{Cust}}(\bot)=B$ and $u_{\mathrm{Supp}}(\bot)=0$.

\paragraph{Expected utility gain and satisfaction.} Reusing the expected contract utility
from Section~\ref{subsubsec:rational-negotiation},
\begin{equation}
    U_i(\omega)
    =\mathbb{E}_{\tau}\!\left[
        u_i(\tau;\theta_i)\mid \omega,\pi^{\mathrm{RCC}}
    \right],
\end{equation}
we report expected gain relative to disagreement:
\begin{equation}
    G_i(\omega)=U_i(\omega)-u_i(\bot).
\end{equation}
The satisfaction probability follows the definition in the main text:
\begin{equation}
    P_{\mathrm{sat}}\!\left(\pi^{\mathrm{RCC}},C^\omega\right)
    =\mathbb{E}_{\tau}\!\left[
        C^\omega(\tau)\mid \omega,\pi^{\mathrm{RCC}}
    \right].
\end{equation}

\paragraph{Mutual benefit.} The accepted contract is mutually beneficial when
both roles weakly prefer it to disagreement, up to the numerical tolerance:
\begin{equation}
    \mathrm{MB}(\omega)
    =\mathbf{1}\!\left[
        G_{\mathrm{Cust}}(\omega)\ge-\varepsilon_{\mathrm{num}}
        \ \wedge\
        G_{\mathrm{Supp}}(\omega)\ge-\varepsilon_{\mathrm{num}}
    \right].
\end{equation}

\paragraph{Proposal feasibility.} Proposal feasibility captures whether agent's proposed contracts are parse-complete, within budget, and free of arithmetic inconsistencies. Let the indicator
$\mathrm{WF}(\omega_h^{(k)})$ be one exactly when the parser recovers the
complete product price vector, production schedule for every week in
$W_{\mathrm{prod}}$, and payment schedule for every week in
$W_{\mathrm{pay}}$, and finds no arithmetic inconsistency among the total
implied by unit prices and scheduled quantities, the scheduled-payment total,
and any total explicitly reported in the proposal. We define
\begin{equation}
    \mathrm{Feasibility}(\omega_h^{(k)})
    =\mathbf{1}\!\left[
        \mathrm{WF}(\omega_h^{(k)})
        \ \wedge\
        \sum_{w\in W_{\mathrm{pay}}}M_{h,w}^{(k)}\le B
    \right].
\end{equation}
For model $\ell$ in role $i$, let $\mathcal{H}_{\ell,i}$ be its evaluated
negotiation histories and let
$\mathcal{J}_i(h)=\{k:\operatorname{role}_h(k)=i\}$ index the proposals it
makes in history $h$.  We average the feasibility of contract proposals across those histories and report:
\begin{equation}
    \mathrm{PF}_{\ell,i}
    =
    \frac{
        \sum_{h\in\mathcal{H}_{\ell,i}}
        \sum_{k\in\mathcal{J}_i(h)}\mathrm{Feasibility}(\omega_h^{(k)})
    }{
        \sum_{h\in\mathcal{H}_{\ell,i}}|\mathcal{J}_i(h)|
    }.
\end{equation}
Thus every proposal receives equal weight. A negotiation in which the scored
model makes no proposal contributes to neither the numerator nor the
denominator.

\paragraph{Acceptance regret.} In an accepted negotiation, role $i$ incurs
acceptance regret if it passes over an earlier feasible counterparty offer that
gives it strictly greater expected utility than the final agreement, beyond
the numerical tolerance:
\begin{equation}
\begin{aligned}
    \mathrm{AR}_i(h)
    =\mathbf{1}\!\big[\,\exists k<n_h:\;&
        \operatorname{role}_h(k)=-i,
        \ \mathrm{Feas}(\omega_h^{(k)})=1,\\[-2pt]
        &U_i(\omega_h^{(k)})>U_i(\omega_h)+\varepsilon_{\mathrm{num}}\,\big].
\end{aligned}
\end{equation}
The indicator is zero when no such earlier counterparty offer exists. We report
its mean over accepted negotiations in which the model occupies the scored
role, regardless of which role emits the final acceptance.

\paragraph{Contingency count.} Let
$\mathcal{K}_{\mathrm{elec}}=\{\text{payment deduction},\text{rollover},
\text{substitution}\}$ and let $\kappa(\omega)$ be the contingencies in the
accepted contract. The clause count is
\begin{equation}
    \mathrm{CC}(\omega)
    =\sum_{\xi\in\mathcal{K}_{\mathrm{elec}}}
        \mathbf{1}[\xi\in\kappa(\omega)].
\end{equation}
Expected gain, satisfaction, mutual benefit, and contingency count are
averaged over accepted negotiations in which the model occupies the scored
role. Proposal feasibility instead pools all proposals by model and role as
defined above.

\paragraph{Contract completeness.}
\label{app:completeness}

We assess the \emph{completeness} of a contract $\omega$ (in the sense of lacking \emph{unanticipated unsatisfiability}) as the extent to which the RCC
policy can satisfy the constraints $C^\omega$ across all situations reached with positive
probability under the environment's stochastic dynamics. We measure this
using the occupancy probability of an absorbing contract-breach set. This is an on-policy
reachability measure, rather than a measure over states that the
policy never reaches.

\textbf{Setup.} Fix a contract $\omega$ and an environment. Let
$\pi^{\mathrm{RCC}}$ be the joint RCC profile defined in
Section~\ref{subsec:rational-contracting}. The Customer follows
the contracted payment schedule and the Supplier component chooses the
production-round order:
\begin{equation}
    \mathbf{o}_t
    =
    \pi_{\mathrm{Supp}}^{\mathrm{RCC}}
    \!\left(t,\mathbf{n}_t,\beta_t,\mu_t,\mathbf{p}_t\right).
\end{equation}
Here $\mathbf{n}_t=(n_{t,x})_{x\in\mathcal X}$ is the Supplier's ingredient
inventory, $\beta_t$ is its liquid balance, $\mu_t$ is the contingency-memory
state, and $\mathbf{p}_t=(p_{t,x})_{x\in\mathcal X}$ is the currently observed
input-price scenario. The memory is trivial under grim trigger or substitution
alone, is a binary prior-shortfall flag under payment deduction, and is an
encoded vector of carried product deficits under rollover. Thus the policy is
indexed by production round, inventory, Supplier balance, contingency memory,
and current prices. Prices are redrawn each round and therefore are not part of
the carried state.

The policy is solved for under the reliability
requirement
$P_{\mathrm{sat}}(\pi^{\mathrm{RCC}},C^\omega)\geq1-\epsilon$, with
$1-\epsilon=0.95$. The occupancy calculation
applies no additional satisfaction threshold, but it remains conditional on
this threshold-selected policy and can therefore change if the satisfaction
threshold changes.

\textbf{Forward occupancy push.} Let $\rho_0$ place unit mass on the initial state $(\mathbf{0},\beta_0,\mu_0)$, and augment the live state space with
an absorbing state $\mathsf{DEAD}$. In production round $t$, the current price
scenario is drawn at its true probability and the policy chooses
$\mathbf{o}_t$. The scheduled payment is added and the realized order cost is
debited. Probability mass for which the resulting Supplier balance is negative is routed to
$\mathsf{DEAD}$. We then spread probability mass over realized input receipts, apply the
contract's production and contingency rules, and route any uncured production
shortfall to $\mathsf{DEAD}$.

For payment deduction, the raw deduction is first capped by the
amount of the next scheduled payment. The capped amount is then debited from
the current post-order Supplier balance; mass whose balance cannot absorb that
debit is routed to $\mathsf{DEAD}$. After the final production round, the final
payment amount supplies only this deduction cap and is not otherwise credited
to the carried balance. Surviving inventory is finally spread through the true
spoilage distribution to form the step$-t$ occupancy measure $\rho_t$.

\textbf{Metric.} Let $S_t = \sum_{s\ \mathrm{live}}\rho_t(s) = 1-\rho_t(\mathsf{DEAD})$
be the probability mass surviving production round $t$ (i.e. the total probability that no contract violation will occur by step $t$ under the RCC policy). We define
\begin{equation}
    \mathrm{Completeness}(\omega)
    =1-\frac{1}{|W_\text{prod}|}\sum_{t=1}^{|W_\text{prod}|}\bigl(1-S_t\bigr)
    \in[0,1].
\end{equation}
This is one minus the expected fraction of the episode spent in breach of the contract. Since $\mathsf{DEAD}$ is absorbing, an earlier breach is charged in
every subsequent round, while survival through all production rounds
results in a score of $1$.

\subsection{Performance Metric Definitions}
\label{app:performance-metrics}

Fix a role $i\in\mathcal{I}'=\{\mathrm{Cust},\mathrm{Supp}\}$, and let
$j\in\mathcal{R}$ index a valid performance run with realized trajectory
$\tau_j$.  Let
$\mathcal{W}_{\mathrm{Cust}}=W_{\mathrm{pay}}$ and
$\mathcal{W}_{\mathrm{Supp}}=W_{\mathrm{prod}}$. For every week $w\in\mathcal{W}_i$, let
$c_{i,j,w}\in\{0,1\}$ indicate that role $i$'s constraint is fulfilled in run $j$ after
applying the substitution, payment-deduction, and
rollover contingencies. The indicator is local to each week, i.e., a
violation in one week does not force later indicators to zero. If a contingency reduces the required payment or delivery to zero for that week, we set $c_{i,j,w}=1$ but exclude that week from the denominators when computing compliance or defection rates. Let
$\mathcal{W}^{\mathrm{eval}}_{i,j}\subseteq\mathcal{W}_i$ denote the remaining weeks after excluding this set.

\paragraph{Utility.} The \emph{realized utility} of a trajectory $\tau_j$ is just $u_i(\tau_j;\theta_i)$.
For compliant-utility accounting, our implementation decomposes realized
utility into per-week gains:
\begin{align}
    \Delta u_{\mathrm{Cust},j,w}
    &=\sum_{d\in D}v_d q'_{j,w+1,d}-M^{\mathrm{paid}}_{j,w},
    &&w\in W_{\mathrm{pay}},\\
    \Delta u_{\mathrm{Supp},j,w}
    &=M^{\mathrm{paid}}_{j,w-1}
      +\mathbf{1}[w=L-1]M^{\mathrm{paid}}_{j,L}
      -\sum_{x\in\mathcal{X}}o_{j,w,x}p_{j,w,x},
    &&w\in W_{\mathrm{prod}},
\end{align}
where $q'_{j,12,d}=0$. Since there is one more payment week than production week, the Supplier's final production
week groups the payments from the final week with the prior week. Realized utility can then be written as:
\begin{equation}
    u_i(\tau_j;\theta_i)
    =u_i(\bot)+\sum_{w\in\mathcal{W}_i}\Delta u_{i,j,w}.
\end{equation}
\emph{Compliant utility} excludes the positive gains that can be attributed to weeks where a contract constraint is violated:
\begin{equation}
    u_i^{\mathrm{c}}(\tau_j)
    =u_i(\bot)+\sum_{w\in\mathcal{W}_i}\!\left[
        c_{i,j,w}\Delta u_{i,j,w}
        +(1-c_{i,j,w})\min(\Delta u_{i,j,w},0)
    \right].
\end{equation}

\paragraph{Regret.} Regret metrics measure utility of some agent relative to what is obtained by the RCC baseline. Let $\tau_j^{\mathrm{RCC}}$ be the counterfactual trajectory obtained by
replacing the agent's policy in role $i$ with RCC while holding fixed the
contract, environment configuration, counterparty policy, and RNG seed. This
is a separate simulation: because a changed policy can change RNG consumption,
$\tau_j^{\mathrm{RCC}}$ need not share the realized stochastic history of
$\tau_j$. \emph{Ex-post regret} and \emph{compliant regret} are, respectively,
\begin{align}
    \mathrm{Regret}_{i}(\tau_j)
        &=u_i(\tau_j^{\mathrm{RCC}};\theta_i)
          -u_i(\tau_j;\theta_i),\\
    \mathrm{Regret}^{\mathrm{c}}_{i}(\tau_j)
        &=u_i^{\mathrm{c}}(\tau_j^{\mathrm{RCC}})
          -u_i^{\mathrm{c}}(\tau_j).
\end{align}
The compliant variant applies compliant utility to both the RCC reference and
the evaluated agent.

\paragraph{Defection.} Define the focal agent's first violation
week by
\begin{equation}
    f_{i,j}
    =\min\{w\in\mathcal{W}^{\mathrm{eval}}_{i,j}:c_{i,j,w}=0\},
\end{equation}
with $f_{i,j}=\infty$ if $i$ never violates. Recalling the violation indicator $D_{i,t}$, define its run-indexed, pre-action
counterpart as:
\begin{equation}
    D_{i,j,w}=\mathbf{1}[f_{-i,j}<w],
\end{equation}
which records whether role $i$ has observed a counterparty violation in a
strictly earlier week. The focal agent's pre- and post-violation opportunity
sets are:
\begin{align}
    \mathcal{W}^{\mathrm{pre}}_{i,j}
        &=\{w\in\mathcal{W}^{\mathrm{eval}}_{i,j}:D_{i,j,w}=0\},\\
    \mathcal{W}^{\mathrm{post}}_{i,j}
        &=\{w\in\mathcal{W}^{\mathrm{eval}}_{i,j}:D_{i,j,w}=1\}.
\end{align}
Thus a same-week violation is not a response to an earlier violation. Let:
\begin{align}
    e^{\mathrm{pre}}_{i,j}
        &=\mathbf{1}[\mathcal{W}^{\mathrm{pre}}_{i,j}\neq\emptyset],\\
    e^{\mathrm{resp}}_{i,j}
        &=\mathbf{1}[f_{-i,j}<f_{i,j}
          \ \wedge\ \mathcal{W}^{\mathrm{post}}_{i,j}\neq\emptyset].
\end{align}
The second indicator requires the counterparty to be the strict first defector
and the focal agent to have a later opportunity. Define the run-level indicators:
\begin{align}
    z^{\mathrm{any}}_{i,j}
        &=\mathbf{1}[f_{i,j}<\infty],\\
    z^{\mathrm{uni}}_{i,j}
        &=\mathbf{1}[\exists w\in\mathcal{W}^{\mathrm{pre}}_{i,j}:
                     c_{i,j,w}=0],\\
    z^{\mathrm{rec}}_{i,j}
        &=\mathbf{1}[e^{\mathrm{resp}}_{i,j}=1\ \wedge\
          \exists w\in\mathcal{W}^{\mathrm{post}}_{i,j}:c_{i,j,w}=0].
\end{align}
Our defection metrics are then defined as:
\begin{align}
    \mathrm{Defection}_i
        &=\frac{\sum_{j\in\mathcal{R}}z^{\mathrm{any}}_{i,j}}
                {|\mathcal{R}|},\\
    \mathrm{Unilateral}_i
        &=\frac{\sum_{j\in\mathcal{R}}z^{\mathrm{uni}}_{i,j}}
                {\sum_{j\in\mathcal{R}}e^{\mathrm{pre}}_{i,j}},\\
    \mathrm{Reciprocal}_i
        &=\frac{\sum_{j\in\mathcal{R}}z^{\mathrm{rec}}_{i,j}}
                {\sum_{j\in\mathcal{R}}e^{\mathrm{resp}}_{i,j}}.
\end{align}
In words, the raw \emph{defection} metric $\mathrm{Defection}_i$ measures the rate at which the agent defects across all runs, the \emph{unilateral defection} metric $\mathrm{Unilateral}_i$ measures the defection rate across runs where the counterparty did not defect first, and the \emph{reciprocal defection} metric $\mathrm{Reciprocal}_i$ measures the defection rate across runs where the counterparty defected first.

\paragraph{Compliance: Macro-averaging.} All aggregate metrics reported in the main text use
macro-averaging across performance runs. For a given focal-agent and counterparty pairing,
each $j\in\mathcal{R}$ is one of the environment-contract runs. Utility and regret already yield one scalar per case and are
averaged directly across cases; the three defection estimands above analogously
average one binary indicator per eligible run.

For the four week-level
compliance metrics, we first calculate a rate within each run and then average
those rates across eligible runs. Let
$\mathcal{R}_i=\{j\in\mathcal{R}:
|\mathcal{W}^{\mathrm{eval}}_{i,j}|>0\}$ and define:
\begin{align}
    \bar c_{i,j}
        &=\frac{\sum_{w\in\mathcal{W}^{\mathrm{eval}}_{i,j}}c_{i,j,w}}
                {|\mathcal{W}^{\mathrm{eval}}_{i,j}|},\\
    \bar c^{\mathrm{pre}}_{i,j}
        &=\frac{\sum_{w\in\mathcal{W}^{\mathrm{pre}}_{i,j}}c_{i,j,w}}
                {|\mathcal{W}^{\mathrm{pre}}_{i,j}|},\\
    \bar c^{\mathrm{post}}_{i,j}
        &=\frac{\sum_{w\in\mathcal{W}^{\mathrm{post}}_{i,j}}c_{i,j,w}}
                {|\mathcal{W}^{\mathrm{post}}_{i,j}|},\\
    \bar m_{i,j}
        &=\frac{1}{|\mathcal{W}^{\mathrm{eval}}_{i,j}|}
          \sum_{w\in\mathcal{W}^{\mathrm{eval}}_{i,j}}
          \mathbf{1}[1-c_{i,j,w}=D_{i,j,w}],
\end{align}
where a pre- or post-rate is defined only when its denominator is nonzero. The macro-averaged estimands are:
\begin{align}
    \mathrm{Compliance}^{\mathrm{macro}}_i
        &=\frac{1}{|\mathcal{R}_i|}
          \sum_{j\in\mathcal{R}_i}\bar c_{i,j},\\
    \mathrm{Conditional}^{\mathrm{macro}}_i
        &=\frac{\sum_{j\in\mathcal{R}}e^{\mathrm{pre}}_{i,j}
                    \bar c^{\mathrm{pre}}_{i,j}}
                {\sum_{j\in\mathcal{R}}e^{\mathrm{pre}}_{i,j}},\\
    \mathrm{Exploited}^{\mathrm{macro}}_i
        &=\frac{\sum_{j\in\mathcal{R}}e^{\mathrm{resp}}_{i,j}
                    \bar c^{\mathrm{post}}_{i,j}}
                {\sum_{j\in\mathcal{R}}e^{\mathrm{resp}}_{i,j}},\\
    \mathrm{TFT}^{\mathrm{macro}}_i
        &=\frac{1}{|\mathcal{R}_i|}
          \sum_{j\in\mathcal{R}_i}\bar m_{i,j}.
\end{align}
In words, the raw \emph{compliance} metric $\mathrm{Compliance}_i$ measures the fraction of weeks in a run where the agent complies with the contract, the \emph{conditional compliance} metric $\mathrm{Conditional}_i$ measures the fraction of weeks where the agent complies out of weeks where the counterparty is compliant, the \emph{exploited compliance} metric $\mathrm{Exploited}_i$ measures the fraction of weeks where the agent complies out of weeks where the counterparty has defected, and the \emph{tit-for-tat} metric $\mathrm{TFT}_i$ measures the fraction of weeks where the agent complies or defects in response to compliance or defection respectively.

Since we average across runs, eligible run receives equal weight even when run contain
different numbers of eligible weeks. These are the estimands reported in the main text and in
Appendix~\ref{app:performance-re-counterparties}; the superscript
``macro'' is omitted in the table headings.

\paragraph{Compliance: Pooled-week micro-averaging.}
\label{app:performance-micro-aggregation}

As a sensitivity analysis, we additionally micro-average the same week-level
indicators, giving every eligible week equal weight rather than giving every
eligible performance run equal weight:
\begin{align}
    \mathrm{Compliance}^{\mathrm{micro}}_i
        &=\frac{\sum_{j\in\mathcal{R}_i}
                    \sum_{w\in\mathcal{W}^{\mathrm{eval}}_{i,j}}c_{i,j,w}}
                {\sum_{j\in\mathcal{R}_i}
                    |\mathcal{W}^{\mathrm{eval}}_{i,j}|},\\
    \mathrm{Conditional}^{\mathrm{micro}}_i
        &=\frac{\sum_{j\in\mathcal{R}}e^{\mathrm{pre}}_{i,j}
                    \sum_{w\in\mathcal{W}^{\mathrm{pre}}_{i,j}}c_{i,j,w}}
                {\sum_{j\in\mathcal{R}}e^{\mathrm{pre}}_{i,j}
                    |\mathcal{W}^{\mathrm{pre}}_{i,j}|},\\
    \mathrm{Exploited}^{\mathrm{micro}}_i
        &=\frac{\sum_{j\in\mathcal{R}}e^{\mathrm{resp}}_{i,j}
                    \sum_{w\in\mathcal{W}^{\mathrm{post}}_{i,j}}c_{i,j,w}}
                {\sum_{j\in\mathcal{R}}e^{\mathrm{resp}}_{i,j}
                    |\mathcal{W}^{\mathrm{post}}_{i,j}|},\\
    \mathrm{TFT}^{\mathrm{micro}}_i
        &=\frac{\sum_{j\in\mathcal{R}_i}
                    \sum_{w\in\mathcal{W}^{\mathrm{eval}}_{i,j}}
                    \mathbf{1}[1-c_{i,j,w}=D_{i,j,w}]}
                {\sum_{j\in\mathcal{R}_i}
                    |\mathcal{W}^{\mathrm{eval}}_{i,j}|}.
\end{align}
Figure~\ref{fig:stochastic-behavior} uses pooled-week overall compliance
alongside run-level unilateral defection. An opportunity-conditioned rate is
undefined when its denominator is zero. Appendix~\ref{app:micro-compliance-tables}
reports the micro-averaged results for all four compliance metrics. Utility,
regret, and the three run-level defection metrics remain case-level quantities
and are unchanged by this sensitivity analysis.

\clearpage
\section{Additional Results}

\subsection{Contract Negotiation Pareto Frontiers}
\label{app:negotiation-pareto-frontiers}

Figures~\ref{fig:pareto-deterministic}, \ref{fig:pareto-moderate},
and~\ref{fig:pareto-high} complement the Environment~5 comparison in
Figure~\ref{fig:neg-frontier} by reporting the other five environments.  Each
black diamond is the ex-ante valuation of an LLM pairing, averaged over its
three negotiated contracts.  The frontiers use the same utility coordinates
and reliability requirement as the main figure: $F_1$ contains
non-contingent contracts and $F_2$ permits the supported contingencies, with
$P_{\mathrm{sat}}\geq0.95$.  In the environments without stochasticity, exhaustive
enumeration gives the exact full-performance frontier and $F_1=F_2$.  In the
stochastic environments, the open circles are sampled nondominated contracts;
segments connecting them are visual guides and do not assert that every
intermediate utility pair is feasible.

The no-stochasticity results show that several negotiated outcomes lie on or
close to the exact frontier.  This pattern persists in the low-stochasticity
environments, where multiple pairings reach or closely approach either $F_1$
or $F_2$.  Environment~6 is more favorable than the low-value,
high-stochasticity Environment~5 shown in the main text, although its outcomes
remain more dispersed than in the settings without stochasticity.

\begin{figure}[h]
\centering
\begin{subfigure}[t]{0.48\textwidth}
    \centering
    \includegraphics[width=\textwidth]{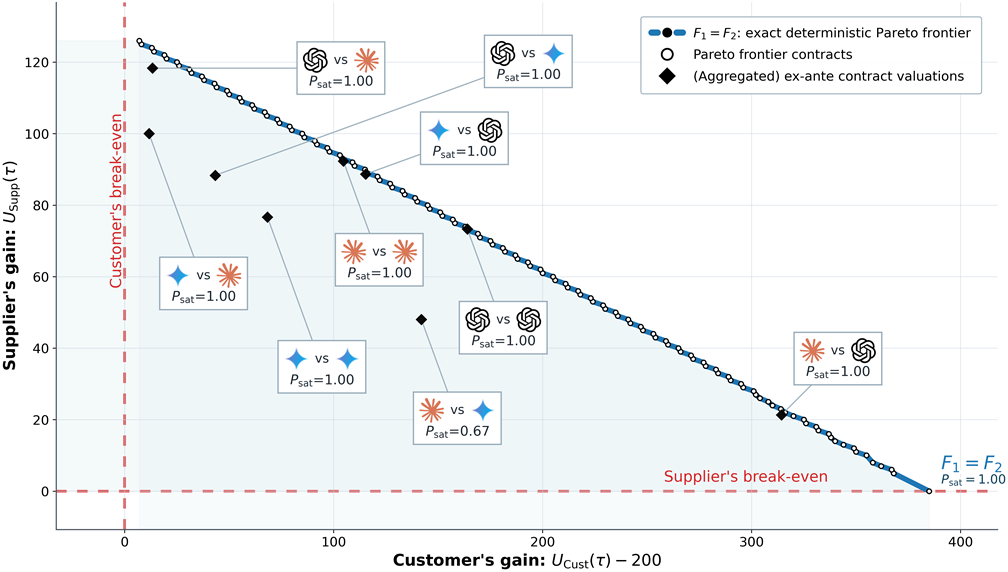}
    \caption{Environment 1: no stochasticity, low value/capital.}
    \label{fig:pareto-deterministic-a}
\end{subfigure}
\hfill
\begin{subfigure}[t]{0.48\textwidth}
    \centering
    \includegraphics[width=\textwidth]{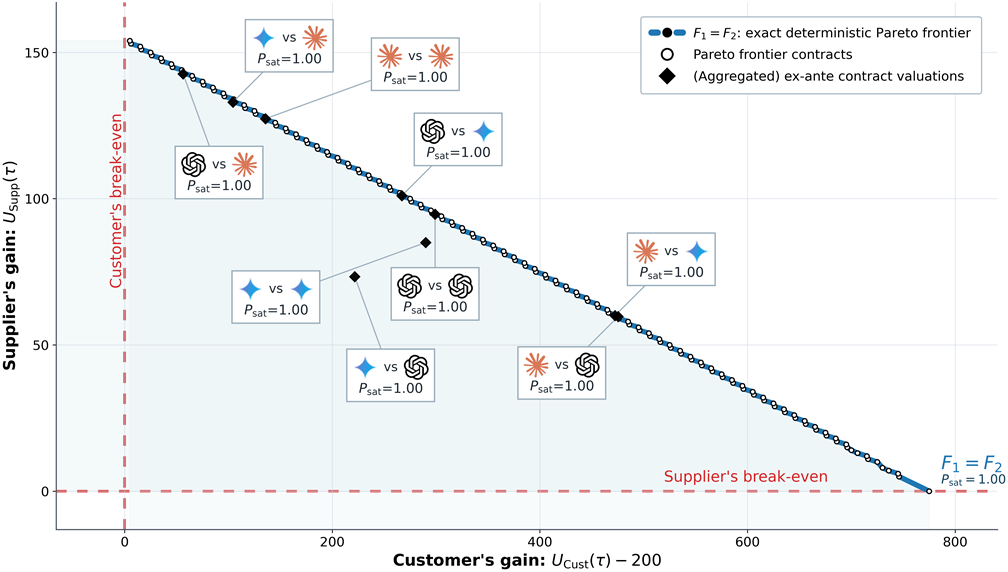}
    \caption{Environment 2: no stochasticity, high value/capital.}
    \label{fig:pareto-deterministic-b}
\end{subfigure}
\caption{\textbf{Negotiated contracts and exact Pareto frontiers in the
deterministic environments.} Black diamonds show three-run matchup means and
open circles show exhaustively enumerated frontier contracts.  Because
performance is deterministic, every frontier contract has
$P_{\mathrm{sat}}=1$ and the non-contingent and contingent frontiers coincide.}
\label{fig:pareto-deterministic}
\end{figure}

\begin{figure}[h]
\centering
\begin{subfigure}[t]{0.48\textwidth}
    \centering
    \includegraphics[width=\textwidth]{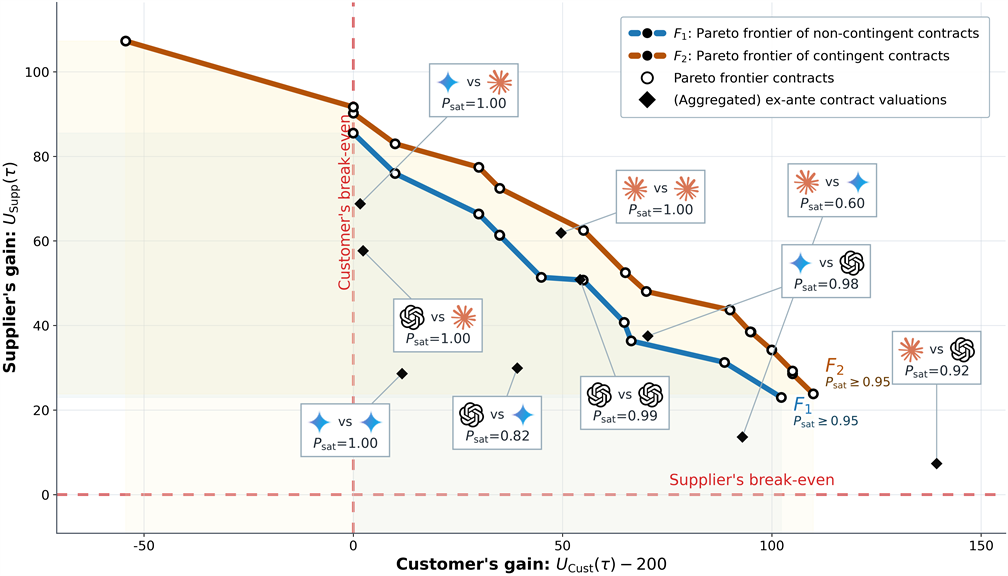}
    \caption{Environment 3: low stochasticity, low value/capital.}
    \label{fig:pareto-moderate-a}
\end{subfigure}
\hfill
\begin{subfigure}[t]{0.48\textwidth}
    \centering
    \includegraphics[width=\textwidth]{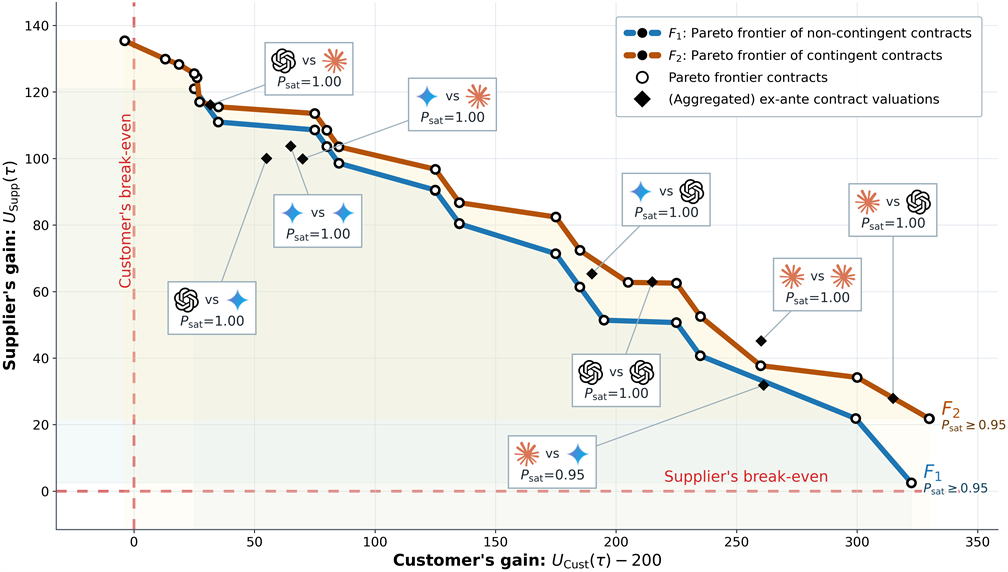}
    \caption{Environment 4: low stochasticity, high value/capital.}
    \label{fig:pareto-moderate-b}
\end{subfigure}
\caption{\textbf{Negotiated contracts and Pareto frontiers in the low
stochasticity environments.} Black diamonds show three-run matchup means;
$F_1$ is the estimated Pareto frontier for non-contingent contracts and
$F_2$ is its contingency-enabled counterpart.}
\label{fig:pareto-moderate}
\end{figure}

\begin{figure}[h]
\centering
\begin{subfigure}[t]{0.48\textwidth}
    \centering
    \includegraphics[width=\textwidth]{images/negotiation_frontier.png}
    \caption{Environment 5: high, low value/capital.}
    \label{fig:pareto-high-a}
\end{subfigure}
\hfill
\begin{subfigure}[t]{0.48\textwidth}
    \centering
    \includegraphics[width=\textwidth]{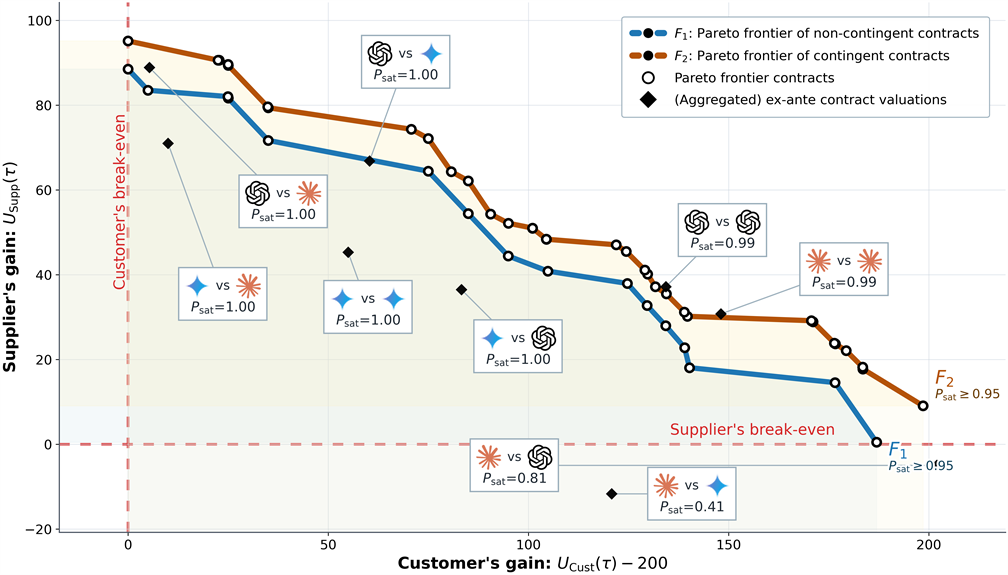}
    \caption{Environment 6: high stochasticity, high value/capital.}
    \label{fig:pareto-high-b}
\end{subfigure}
\caption{\textbf{Negotiated contracts and Pareto frontiers in the high
stochasticity environments.}
Sub-figure (a) reproduces Figure~\ref{fig:neg-frontier} in the main text.}
\label{fig:pareto-high}
\end{figure}

\FloatBarrier

\clearpage
\subsection{Negotiation Quality by Environment}
\label{app:negotiation-quality-by-environment}

Figures~\ref{fig:negotiation-quality-customer-by-environment}
and~\ref{fig:negotiation-quality-supplier-by-environment} disaggregate the
role-conditioned results in Figure~\ref{fig:negotiation-quality-roles} of the
main paper by benchmark environment, using the same aggregation and metric
definitions as the main-text figure. As evidenced by the plots, environments can substantially influence certain negotiation metrics.

\begin{figure}[htbp]
\centering
\includegraphics[width=\textwidth,height=0.7\textheight,keepaspectratio]{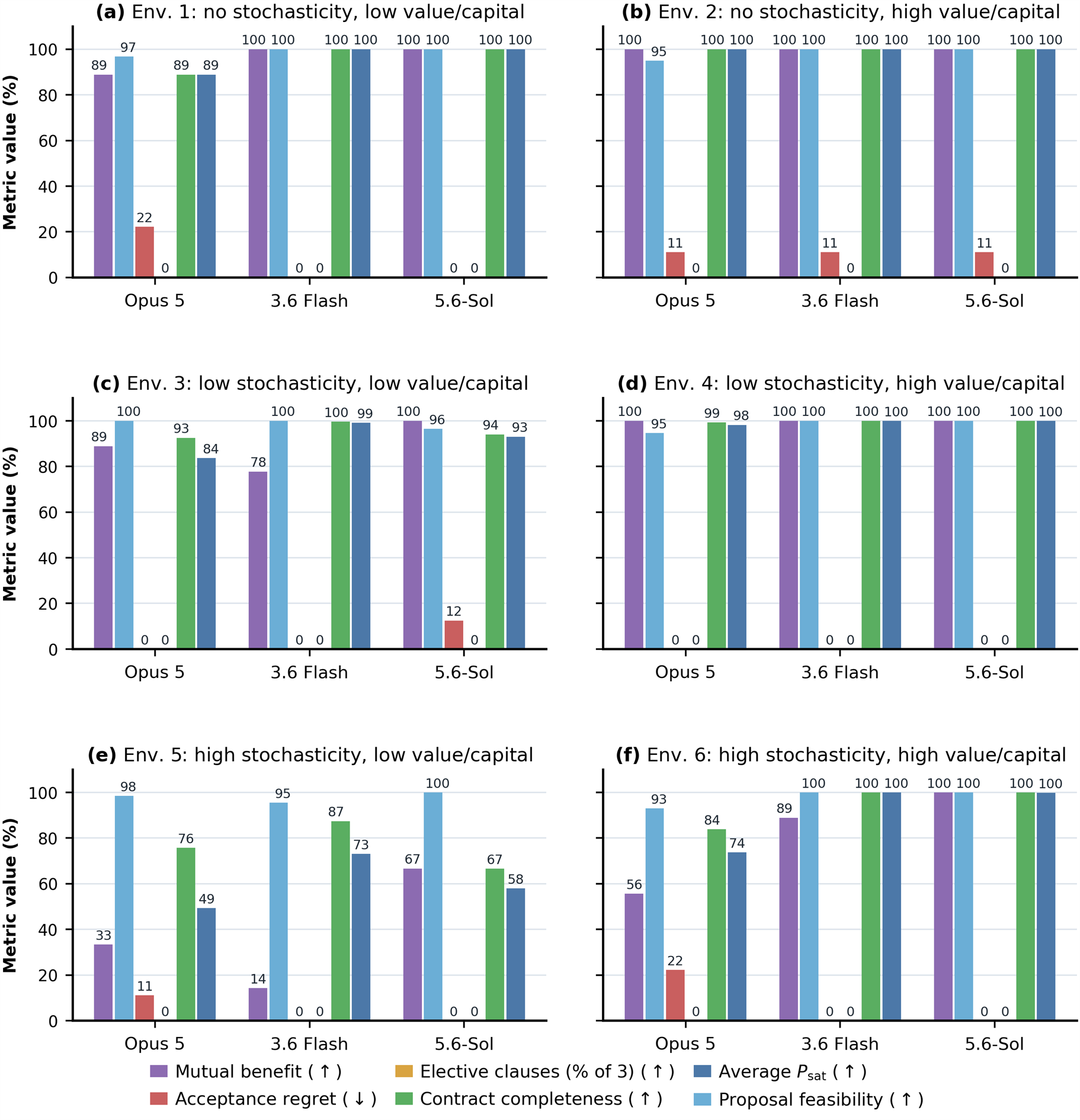}
\caption{\textbf{Role-conditioned negotiation quality by environment:
Customer.} Negotiation metrics for Customer models in (a) no stochasticity,
low value/capital; (b) no stochasticity, high value/capital; (c) low
stochasticity, low value/capital; (d) low stochasticity, high value/capital;
(e) high stochasticity, low value/capital; and (f) high stochasticity, high
value/capital.}
\label{fig:negotiation-quality-customer-by-environment}
\end{figure}

\clearpage
\begin{figure}[p]
\centering
\includegraphics[width=\textwidth,height=0.7\textheight,keepaspectratio]{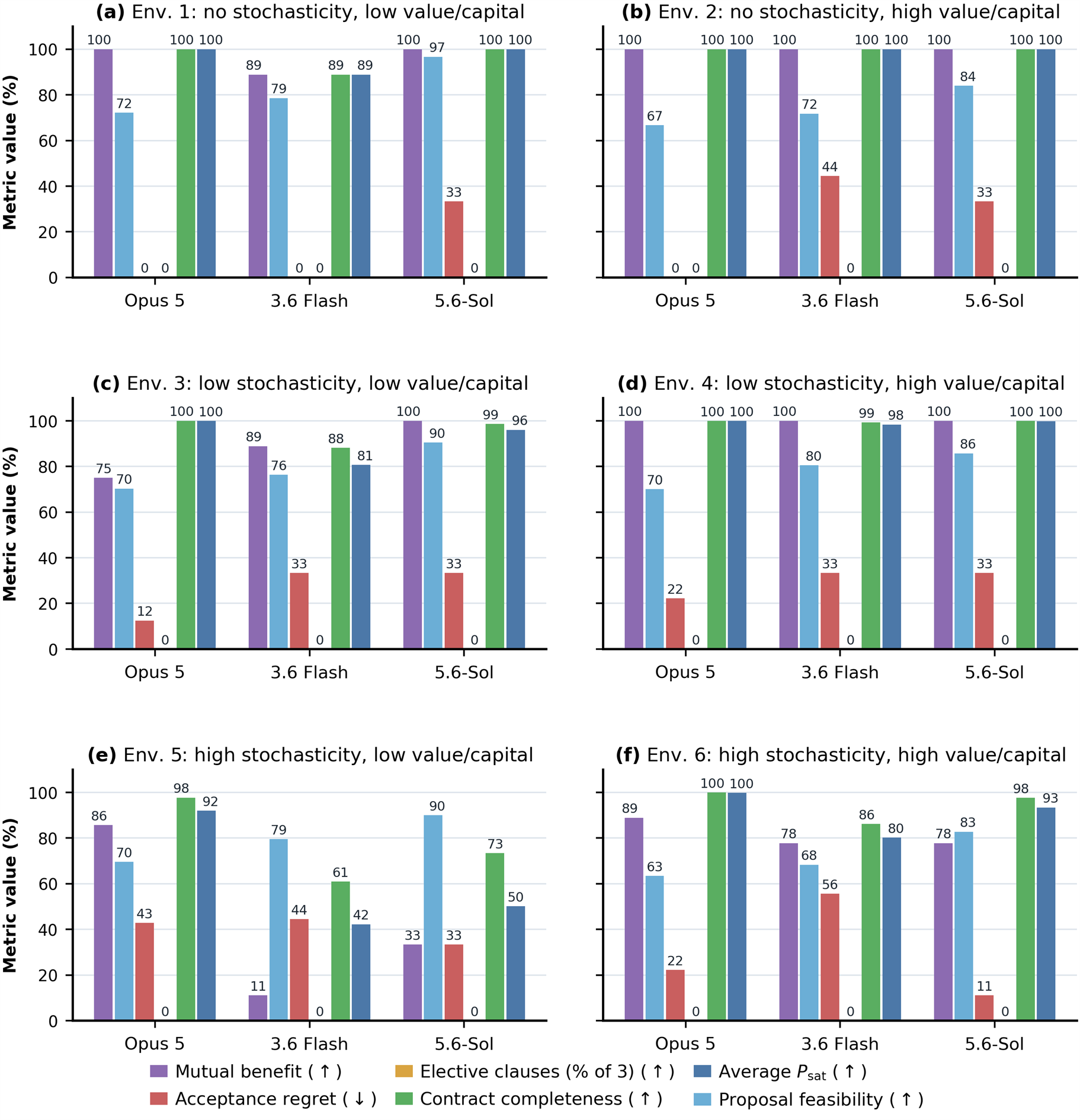}
\caption{\textbf{Role-conditioned negotiation quality by environment:
Supplier.} Negotiation metrics for Supplier models in (a) no stochasticity,
low value/capital; (b) no stochasticity, high value/capital; (c) low
stochasticity, low value/capital; (d) low stochasticity, high value/capital;
(e) high stochasticity, low value/capital; and (f) high stochasticity, high
value/capital.}
\label{fig:negotiation-quality-supplier-by-environment}
\end{figure}

\FloatBarrier

\clearpage
\subsection{Contract Performance Tables}
\label{app:performance-re-counterparties}

Tables~\ref{tab:synthetic-baseline-customer-re-full}
and~\ref{tab:synthetic-baseline-caterer-re-full} expand the performance
tables shown in the main text to all three counterparty policies (RCC, RC, and RE) and all rational
controls. As in the main text, compliance is calculated within
each environment--contract case and then macro-averaged with equal case
weights.

\begingroup
\providecommand{\meanstd}[2]{#1{\hspace{0.1em}\raisebox{0.25ex}{\scalebox{0.5}{$\pm$\,#2}}}}
\providecommand{\counterpartyrcc}{\shortstack{Rational Conditional\\Complier (RCC)}}
\providecommand{\counterpartyrc}{\shortstack{Rational Complier\\(RC)}}
\providecommand{\counterpartyrnc}{\shortstack{Rational Exploiter\\(RE)}}
\begin{table*}[h]
\caption{Customer-side performance metrics against RCC, RC, and RE caterer counterparties. Entries are equal-run-weight means $\pm$ sample standard deviations across 180 environment--contract cases; each compliance rate is computed within a case before averaging across cases. -- denotes an undefined rate.}
\label{tab:synthetic-baseline-customer-re-full}
\centering
\setlength{\tabcolsep}{1.2pt}
\renewcommand{\arraystretch}{0.96}
\resizebox{\textwidth}{!}{
\begin{tabular}{ccccccccccccc}
\toprule
\multirow{2}{*}{\textbf{Counterparty}} & \multirow{2}{*}{\textbf{Model}} & \multicolumn{2}{c}{\textbf{Utility}} & \multicolumn{2}{c}{\textbf{(Ex-Post) Regret}} & \multicolumn{4}{c}{\textbf{Compliance}} & \multicolumn{3}{c}{\textbf{Defection}} \\
\cmidrule(lr){3-4}\cmidrule(lr){5-6}\cmidrule(lr){7-10}\cmidrule(lr){11-13}
 & & \textbf{Utility}\,$\uparrow$ & \textbf{Compl.}\,$\uparrow$ & \textbf{Regret}\,$\to 0$ & \textbf{Compl.}\,$\to 0$ & \textbf{Rate}\,$\uparrow$ & \textbf{Cond.}\,$\uparrow$ & \textbf{Exploited}\,$\downarrow$ & \textbf{TFT}\,$\uparrow$ & \textbf{Rate}\,$\downarrow$ & \textbf{Unilateral}\,$\downarrow$ & \textbf{Reciprocal}\,$\uparrow$ \\
\midrule
\multirow{6}{*}{\counterpartyrcc} & Opus 5 & \meanstd{324.4}{95.6} & \meanstd{324.4}{95.6} & \meanstd{-1.0}{33.8} & \meanstd{-1.0}{33.8} & \meanstd{86.1}{18.8} & \meanstd{88.0}{15.2} & \meanstd{0.0}{0.0} & \meanstd{88.1}{15.5} & \meanstd{50.6}{50.1} & \meanstd{48.9}{50.1} & \meanstd{100.0}{0.0} \\
 & 3.6 Flash & \meanstd{323.6}{95.1} & \meanstd{323.6}{95.1} & \meanstd{-0.2}{40.1} & \meanstd{-0.2}{40.1} & \meanstd{90.0}{14.3} & \meanstd{90.7}{13.0} & \meanstd{60.0}{52.9} & \meanstd{90.0}{14.4} & \meanstd{43.9}{49.8} & \meanstd{42.8}{49.6} & \meanstd{66.7}{57.7} \\
 & GPT--5.6-Sol & \meanstd{\textbf{326.0}}{95.2} & \meanstd{\textbf{326.0}}{95.2} & \meanstd{-2.6}{52.1} & \meanstd{-2.6}{52.1} & \meanstd{78.9}{18.0} & \meanstd{81.7}{13.0} & \meanstd{0.0}{0.0} & \meanstd{82.7}{12.0} & \meanstd{83.9}{36.9} & \meanstd{82.2}{38.3} & \meanstd{100.0}{0.0} \\
 & RC & \meanstd{323.5}{88.2} & \meanstd{323.5}{88.2} & \meanstd{-0.1}{1.5} & \meanstd{-0.1}{1.5} & \meanstd{\textbf{100.0}}{0.0} & \meanstd{100.0}{0.0} & \meanstd{100.0}{0.0} & \meanstd{98.7}{10.4} & \meanstd{\textbf{0.0}}{0.0} & \meanstd{0.0}{0.0} & \meanstd{0.0}{0.0} \\
 & RCC & \meanstd{323.4}{88.4} & \meanstd{323.4}{88.4} & \meanstd{\textbf{0.0}}{0.0} & \meanstd{\textbf{0.0}}{0.0} & \meanstd{98.7}{10.4} & \meanstd{\textbf{100.0}}{0.0} & \meanstd{\textbf{0.0}}{0.0} & \meanstd{\textbf{100.0}}{0.0} & \meanstd{1.7}{12.8} & \meanstd{\textbf{0.0}}{0.0} & \meanstd{\textbf{100.0}}{0.0} \\
 & RE & \meanstd{200.0}{0.0} & \meanstd{200.0}{0.0} & \meanstd{123.4}{88.4} & \meanstd{123.4}{88.4} & \meanstd{0.0}{0.0} & \meanstd{0.0}{0.0} & -- & \meanstd{38.7}{36.7} & \meanstd{100.0}{0.0} & \meanstd{100.0}{0.0} & -- \\
\midrule
\multirow{6}{*}{\counterpartyrc} & Opus 5 & \meanstd{340.2}{89.8} & \meanstd{\textbf{333.8}}{91.4} & \meanstd{-16.4}{22.0} & \meanstd{-10.2}{24.3} & \meanstd{87.4}{15.9} & \meanstd{87.9}{15.1} & \meanstd{35.6}{33.6} & \meanstd{86.7}{17.6} & \meanstd{51.7}{50.1} & \meanstd{50.0}{50.1} & \meanstd{100.0}{0.0} \\
 & 3.6 Flash & \meanstd{334.0}{92.4} & \meanstd{326.5}{94.3} & \meanstd{-10.1}{17.2} & \meanstd{-2.9}{33.0} & \meanstd{90.4}{14.0} & \meanstd{91.5}{11.3} & \meanstd{60.0}{52.9} & \meanstd{90.6}{13.4} & \meanstd{43.9}{49.8} & \meanstd{42.8}{49.6} & \meanstd{66.7}{57.7} \\
 & GPT--5.6-Sol & \meanstd{\textbf{345.8}}{94.3} & \meanstd{329.5}{92.3} & \meanstd{-22.0}{24.6} & \meanstd{-5.8}{42.7} & \meanstd{78.9}{20.3} & \meanstd{81.8}{15.1} & \meanstd{0.0}{0.0} & \meanstd{83.5}{12.4} & \meanstd{80.0}{40.1} & \meanstd{78.3}{41.3} & \meanstd{100.0}{0.0} \\
 & RC & \meanstd{323.5}{88.2} & \meanstd{323.5}{88.2} & \meanstd{0.3}{2.5} & \meanstd{0.1}{1.0} & \meanstd{\textbf{100.0}}{0.0} & \meanstd{100.0}{0.0} & \meanstd{100.0}{0.0} & \meanstd{98.7}{10.4} & \meanstd{\textbf{0.0}}{0.0} & \meanstd{0.0}{0.0} & \meanstd{0.0}{0.0} \\
 & RCC & \meanstd{323.9}{87.9} & \meanstd{323.6}{88.1} & \meanstd{\textbf{0.0}}{0.0} & \meanstd{\textbf{0.0}}{0.0} & \meanstd{98.7}{10.4} & \meanstd{\textbf{100.0}}{0.0} & \meanstd{\textbf{0.0}}{0.0} & \meanstd{\textbf{100.0}}{0.0} & \meanstd{1.7}{12.8} & \meanstd{\textbf{0.0}}{0.0} & \meanstd{\textbf{100.0}}{0.0} \\
 & RE & \meanstd{282.6}{53.9} & \meanstd{209.2}{26.0} & \meanstd{41.3}{47.9} & \meanstd{114.4}{86.4} & \meanstd{0.0}{0.0} & \meanstd{0.0}{0.0} & -- & \meanstd{48.5}{25.2} & \meanstd{100.0}{0.0} & \meanstd{100.0}{0.0} & -- \\
\midrule
\multirow{6}{*}{\counterpartyrnc} & Opus 5 & \meanstd{142.0}{28.4} & \meanstd{142.0}{28.4} & \meanstd{11.8}{21.8} & \meanstd{11.8}{21.8} & \meanstd{74.6}{35.6} & \meanstd{99.4}{5.3} & \meanstd{9.7}{10.1} & \meanstd{96.8}{7.9} & \meanstd{34.4}{47.7} & \meanstd{1.1}{10.5} & \meanstd{100.0}{0.0} \\
 & 3.6 Flash & \meanstd{149.6}{31.2} & \meanstd{149.6}{31.2} & \meanstd{4.3}{22.6} & \meanstd{4.3}{22.6} & \meanstd{72.0}{37.9} & \meanstd{98.6}{8.2} & \meanstd{3.3}{7.5} & \meanstd{97.7}{8.9} & \meanstd{36.1}{48.2} & \meanstd{2.8}{16.5} & \meanstd{100.0}{0.0} \\
 & GPT--5.6-Sol & \meanstd{153.8}{32.4} & \meanstd{153.8}{32.4} & \meanstd{0.0}{0.0} & \meanstd{0.0}{0.0} & \meanstd{72.4}{39.1} & \meanstd{100.0}{0.0} & \meanstd{0.0}{0.0} & \meanstd{100.0}{0.0} & \meanstd{33.3}{47.3} & \meanstd{0.0}{0.0} & \meanstd{100.0}{0.0} \\
 & RC & \meanstd{54.0}{64.7} & \meanstd{54.0}{64.7} & \meanstd{99.8}{68.2} & \meanstd{99.8}{68.2} & \meanstd{\textbf{100.0}}{0.0} & \meanstd{100.0}{0.0} & \meanstd{100.0}{0.0} & \meanstd{72.4}{39.1} & \meanstd{\textbf{0.0}}{0.0} & \meanstd{0.0}{0.0} & \meanstd{0.0}{0.0} \\
 & RCC & \meanstd{153.8}{32.4} & \meanstd{153.8}{32.4} & \meanstd{\textbf{0.0}}{0.0} & \meanstd{\textbf{0.0}}{0.0} & \meanstd{72.4}{39.1} & \meanstd{\textbf{100.0}}{0.0} & \meanstd{\textbf{0.0}}{0.0} & \meanstd{\textbf{100.0}}{0.0} & \meanstd{33.3}{47.3} & \meanstd{\textbf{0.0}}{0.0} & \meanstd{\textbf{100.0}}{0.0} \\
 & RE & \meanstd{\textbf{200.0}}{0.0} & \meanstd{\textbf{200.0}}{0.0} & \meanstd{-46.2}{32.4} & \meanstd{-46.2}{32.4} & \meanstd{0.0}{0.0} & \meanstd{0.0}{0.0} & -- & \meanstd{38.7}{36.7} & \meanstd{100.0}{0.0} & \meanstd{100.0}{0.0} & -- \\
\bottomrule
\end{tabular}
}
\end{table*}

\begin{table*}[h]
\caption{Supplier-side performance metrics against RCC, RC, and RE customer counterparties. Entries are equal-run-weight means $\pm$ sample standard deviations across 180 environment--contract cases; each compliance rate is computed within a case before averaging across cases. -- denotes an undefined rate.}
\label{tab:synthetic-baseline-caterer-re-full}
\centering
\setlength{\tabcolsep}{1.2pt}
\renewcommand{\arraystretch}{0.96}
\resizebox{\textwidth}{!}{
\begin{tabular}{ccccccccccccc}
\toprule
\multirow{2}{*}{\textbf{Counterparty}} & \multirow{2}{*}{\textbf{Model}} & \multicolumn{2}{c}{\textbf{Utility}} & \multicolumn{2}{c}{\textbf{(Ex-Post) Regret}} & \multicolumn{4}{c}{\textbf{Compliance}} & \multicolumn{3}{c}{\textbf{Defection}} \\
\cmidrule(lr){3-4}\cmidrule(lr){5-6}\cmidrule(lr){7-10}\cmidrule(lr){11-13}
 & & \textbf{Utility}\,$\uparrow$ & \textbf{Compl.}\,$\uparrow$ & \textbf{Regret}\,$\to 0$ & \textbf{Compl.}\,$\to 0$ & \textbf{Rate}\,$\uparrow$ & \textbf{Cond.}\,$\uparrow$ & \textbf{Exploited}\,$\downarrow$ & \textbf{TFT}\,$\uparrow$ & \textbf{Rate}\,$\downarrow$ & \textbf{Unilateral}\,$\downarrow$ & \textbf{Reciprocal}\,$\uparrow$ \\
\midrule
\multirow{6}{*}{\counterpartyrcc} & Opus 5 & \meanstd{62.5}{36.1} & \meanstd{45.0}{42.3} & \meanstd{-9.8}{25.0} & \meanstd{7.4}{35.3} & \meanstd{87.0}{20.0} & \meanstd{88.1}{17.4} & -- & \meanstd{88.9}{16.4} & \meanstd{41.7}{49.4} & \meanstd{41.7}{49.4} & -- \\
 & 3.6 Flash & \meanstd{46.1}{41.9} & \meanstd{43.8}{44.6} & \meanstd{6.5}{25.9} & \meanstd{8.6}{29.0} & \meanstd{95.7}{14.1} & \meanstd{96.5}{10.8} & -- & \meanstd{97.1}{9.0} & \meanstd{11.7}{32.2} & \meanstd{11.7}{32.2} & -- \\
 & GPT--5.6-Sol & \meanstd{\textbf{63.5}}{36.4} & \meanstd{45.6}{36.3} & \meanstd{-10.9}{25.1} & \meanstd{6.9}{31.6} & \meanstd{85.9}{22.5} & \meanstd{86.5}{21.3} & -- & \meanstd{87.1}{20.5} & \meanstd{40.6}{49.2} & \meanstd{40.6}{49.2} & -- \\
 & RC & \meanstd{52.6}{36.2} & \meanstd{52.4}{36.6} & \meanstd{0.0}{0.2} & \meanstd{0.0}{0.2} & \meanstd{97.2}{15.4} & \meanstd{97.2}{15.4} & -- & \meanstd{98.6}{9.0} & \meanstd{3.9}{19.4} & \meanstd{3.9}{19.4} & -- \\
 & RCC & \meanstd{52.6}{36.2} & \meanstd{\textbf{52.4}}{36.5} & \meanstd{\textbf{0.0}}{0.0} & \meanstd{\textbf{0.0}}{0.0} & \meanstd{\textbf{97.2}}{15.4} & \meanstd{\textbf{97.2}}{15.4} & -- & \meanstd{\textbf{98.6}}{9.0} & \meanstd{\textbf{3.9}}{19.4} & \meanstd{\textbf{3.9}}{19.4} & -- \\
 & RE & \meanstd{46.2}{32.4} & \meanstd{8.3}{18.3} & \meanstd{6.5}{49.2} & \meanstd{44.1}{45.2} & \meanstd{5.3}{8.9} & \meanstd{5.3}{8.9} & -- & \meanstd{32.0}{35.0} & \meanstd{100.0}{0.0} & \meanstd{100.0}{0.0} & -- \\
\midrule
\multirow{6}{*}{\counterpartyrc} & Opus 5 & \meanstd{67.3}{37.7} & \meanstd{47.2}{41.9} & \meanstd{-13.7}{27.8} & \meanstd{6.2}{33.0} & \meanstd{86.9}{20.4} & \meanstd{86.9}{20.4} & -- & \meanstd{86.9}{20.4} & \meanstd{40.0}{49.1} & \meanstd{40.0}{49.1} & -- \\
 & 3.6 Flash & \meanstd{50.3}{38.1} & \meanstd{45.9}{41.4} & \meanstd{3.3}{20.9} & \meanstd{7.5}{24.8} & \meanstd{96.2}{11.7} & \meanstd{96.2}{11.7} & -- & \meanstd{96.2}{11.7} & \meanstd{13.3}{34.1} & \meanstd{13.3}{34.1} & -- \\
 & GPT--5.6-Sol & \meanstd{71.9}{40.9} & \meanstd{45.6}{37.6} & \meanstd{-18.3}{28.2} & \meanstd{7.8}{32.6} & \meanstd{84.1}{24.7} & \meanstd{84.1}{24.7} & -- & \meanstd{84.1}{24.7} & \meanstd{44.4}{49.8} & \meanstd{44.4}{49.8} & -- \\
 & RC & \meanstd{53.6}{35.0} & \meanstd{53.4}{35.4} & \meanstd{0.0}{0.0} & \meanstd{0.0}{0.0} & \meanstd{98.9}{6.9} & \meanstd{98.9}{6.9} & -- & \meanstd{98.9}{6.9} & \meanstd{3.9}{19.4} & \meanstd{3.9}{19.4} & -- \\
 & RCC & \meanstd{53.6}{35.0} & \meanstd{\textbf{53.4}}{35.4} & \meanstd{\textbf{0.0}}{0.0} & \meanstd{\textbf{0.0}}{0.0} & \meanstd{\textbf{98.9}}{6.9} & \meanstd{\textbf{98.9}}{6.9} & -- & \meanstd{\textbf{98.9}}{6.9} & \meanstd{\textbf{3.9}}{19.4} & \meanstd{\textbf{3.9}}{19.4} & -- \\
 & RE & \meanstd{\textbf{146.0}}{64.7} & \meanstd{8.3}{18.3} & \meanstd{-92.4}{65.0} & \meanstd{45.1}{44.4} & \meanstd{5.3}{8.9} & \meanstd{5.3}{8.9} & -- & \meanstd{5.3}{8.9} & \meanstd{100.0}{0.0} & \meanstd{100.0}{0.0} & -- \\
\midrule
\multirow{6}{*}{\counterpartyrnc} & Opus 5 & \meanstd{-19.6}{12.0} & \meanstd{-19.6}{12.0} & \meanstd{19.6}{12.0} & \meanstd{19.6}{12.0} & \meanstd{7.9}{11.5} & -- & \meanstd{7.9}{11.5} & \meanstd{92.1}{11.5} & \meanstd{100.0}{0.0} & -- & \meanstd{100.0}{0.0} \\
 & 3.6 Flash & \meanstd{-19.7}{13.6} & \meanstd{-19.7}{13.6} & \meanstd{19.7}{13.6} & \meanstd{19.7}{13.6} & \meanstd{\textbf{9.6}}{11.1} & -- & \meanstd{9.6}{11.1} & \meanstd{90.4}{11.1} & \meanstd{100.0}{0.0} & -- & \meanstd{100.0}{0.0} \\
 & GPT--5.6-Sol & \meanstd{-0.4}{4.0} & \meanstd{-0.4}{4.0} & \meanstd{0.4}{4.0} & \meanstd{0.4}{4.0} & \meanstd{0.2}{2.1} & -- & \meanstd{0.2}{2.1} & \meanstd{99.8}{2.1} & \meanstd{100.0}{0.0} & -- & \meanstd{100.0}{0.0} \\
 & RC & \meanstd{-29.7}{10.1} & \meanstd{-29.7}{10.1} & \meanstd{29.7}{10.1} & \meanstd{29.7}{10.1} & \meanstd{9.3}{12.8} & -- & \meanstd{9.3}{12.8} & \meanstd{90.7}{12.8} & \meanstd{100.0}{0.0} & -- & \meanstd{100.0}{0.0} \\
 & RCC & \meanstd{\textbf{0.0}}{0.0} & \meanstd{\textbf{0.0}}{0.0} & \meanstd{\textbf{0.0}}{0.0} & \meanstd{\textbf{0.0}}{0.0} & \meanstd{0.0}{0.0} & -- & \meanstd{\textbf{0.0}}{0.0} & \meanstd{\textbf{100.0}}{0.0} & \meanstd{\textbf{100.0}}{0.0} & -- & \meanstd{\textbf{100.0}}{0.0} \\
 & RE & \meanstd{0.0}{0.0} & \meanstd{0.0}{0.0} & \meanstd{0.0}{0.0} & \meanstd{0.0}{0.0} & \meanstd{0.0}{0.0} & -- & \meanstd{0.0}{0.0} & \meanstd{100.0}{0.0} & \meanstd{100.0}{0.0} & -- & \meanstd{100.0}{0.0} \\
\bottomrule
\end{tabular}
}
\end{table*}
\endgroup

\clearpage
\subsection{Pooled-Week Compliance Sensitivity}
\label{app:micro-compliance-tables}

Tables~\ref{tab:synthetic-baseline-customer-micro}
and~\ref{tab:synthetic-baseline-caterer-micro} reproduce the full performance
comparison against all three counterparty policies (RCC, RC, and RE), including
all rational controls, after pooling eligible weeks for overall, conditional,
exploited, and Tit-for-Tat compliance. The remaining metrics are identical to
the equal-run-weight tables. The aggregation change does not reverse any
qualitative comparison: conditional and exploited compliance remain similar,
while the lower overall Customer compliance against RE strengthens the finding
that LLM agents cease compliance after exploitation.

\begingroup
\providecommand{\meanstd}[2]{#1{\hspace{0.1em}\raisebox{0.25ex}{\scalebox{0.5}{$\pm$\,#2}}}}
\providecommand{\counterpartyrcc}{\shortstack{Rational Conditional\\Complier (RCC)}}
\providecommand{\counterpartyrc}{\shortstack{Rational Complier\\(RC)}}
\providecommand{\counterpartyrnc}{\shortstack{Rational Exploiter\\(RE)}}
\begin{table*}[h]
\caption{Customer-side performance metrics against RCC, RC, and RE caterer counterparties. The four compliance metrics pool their eligible weeks across environment--contract cases; their dispersion is likewise weighted by eligible weeks. All other entries are equal-run-weight means $\pm$ sample standard deviations across 180 cases. -- denotes an undefined rate.}
\label{tab:synthetic-baseline-customer-micro}
\centering
\setlength{\tabcolsep}{1.2pt}
\renewcommand{\arraystretch}{0.96}
\resizebox{\textwidth}{!}{
\begin{tabular}{ccccccccccccc}
\toprule
\multirow{2}{*}{\textbf{Counterparty}} & \multirow{2}{*}{\textbf{Model}} & \multicolumn{2}{c}{\textbf{Utility}} & \multicolumn{2}{c}{\textbf{(Ex-Post) Regret}} & \multicolumn{4}{c}{\textbf{Compliance}} & \multicolumn{3}{c}{\textbf{Defection}} \\
\cmidrule(lr){3-4}\cmidrule(lr){5-6}\cmidrule(lr){7-10}\cmidrule(lr){11-13}
 & & \textbf{Utility}\,$\uparrow$ & \textbf{Compl.}\,$\uparrow$ & \textbf{Regret}\,$\to 0$ & \textbf{Compl.}\,$\to 0$ & \textbf{Rate}\,$\uparrow$ & \textbf{Cond.}\,$\uparrow$ & \textbf{Exploited}\,$\downarrow$ & \textbf{TFT}\,$\uparrow$ & \textbf{Rate}\,$\downarrow$ & \textbf{Unilateral}\,$\downarrow$ & \textbf{Reciprocal}\,$\uparrow$ \\
\midrule
\multirow{6}{*}{\counterpartyrcc} & Opus 5 & \meanstd{324.4}{95.6} & \meanstd{324.4}{95.6} & \meanstd{-1.0}{33.8} & \meanstd{-1.0}{33.8} & \meanstd{88.1}{17.1} & \meanstd{90.4}{12.0} & \meanstd{0.0}{0.0} & \meanstd{90.3}{12.5} & \meanstd{50.6}{50.1} & \meanstd{48.9}{50.1} & \meanstd{100.0}{0.0} \\
 & 3.6 Flash & \meanstd{323.6}{95.1} & \meanstd{323.6}{95.1} & \meanstd{-0.2}{40.1} & \meanstd{-0.2}{40.1} & \meanstd{91.3}{12.2} & \meanstd{91.9}{10.5} & \meanstd{53.8}{43.2} & \meanstd{91.3}{12.2} & \meanstd{43.9}{49.8} & \meanstd{42.8}{49.6} & \meanstd{66.7}{57.7} \\
 & GPT--5.6-Sol & \meanstd{\textbf{326.0}}{95.2} & \meanstd{\textbf{326.0}}{95.2} & \meanstd{-2.6}{52.1} & \meanstd{-2.6}{52.1} & \meanstd{80.4}{16.6} & \meanstd{83.6}{9.8} & \meanstd{0.0}{0.0} & \meanstd{84.2}{9.1} & \meanstd{83.9}{36.9} & \meanstd{82.2}{38.3} & \meanstd{100.0}{0.0} \\
 & RC & \meanstd{323.5}{88.2} & \meanstd{323.5}{88.2} & \meanstd{-0.1}{1.5} & \meanstd{-0.1}{1.5} & \meanstd{\textbf{100.0}}{0.0} & \meanstd{100.0}{0.0} & \meanstd{100.0}{0.0} & \meanstd{98.7}{10.2} & \meanstd{\textbf{0.0}}{0.0} & \meanstd{0.0}{0.0} & \meanstd{0.0}{0.0} \\
 & RCC & \meanstd{323.4}{88.4} & \meanstd{323.4}{88.4} & \meanstd{\textbf{0.0}}{0.0} & \meanstd{\textbf{0.0}}{0.0} & \meanstd{98.7}{10.2} & \meanstd{\textbf{100.0}}{0.0} & \meanstd{\textbf{0.0}}{0.0} & \meanstd{\textbf{100.0}}{0.0} & \meanstd{1.7}{12.8} & \meanstd{\textbf{0.0}}{0.0} & \meanstd{\textbf{100.0}}{0.0} \\
 & RE & \meanstd{200.0}{0.0} & \meanstd{200.0}{0.0} & \meanstd{123.4}{88.4} & \meanstd{123.4}{88.4} & \meanstd{0.0}{0.0} & \meanstd{0.0}{0.0} & -- & \meanstd{64.7}{30.3} & \meanstd{100.0}{0.0} & \meanstd{100.0}{0.0} & -- \\
\midrule
\multirow{6}{*}{\counterpartyrc} & Opus 5 & \meanstd{340.2}{89.8} & \meanstd{\textbf{333.8}}{91.4} & \meanstd{-16.4}{22.0} & \meanstd{-10.2}{24.3} & \meanstd{88.1}{14.8} & \meanstd{89.6}{12.8} & \meanstd{30.8}{26.3} & \meanstd{87.6}{16.3} & \meanstd{51.7}{50.1} & \meanstd{50.0}{50.1} & \meanstd{100.0}{0.0} \\
 & 3.6 Flash & \meanstd{334.0}{92.4} & \meanstd{326.5}{94.3} & \meanstd{-10.1}{17.2} & \meanstd{-2.9}{33.0} & \meanstd{90.2}{13.8} & \meanstd{91.8}{10.3} & \meanstd{53.8}{43.2} & \meanstd{90.7}{12.6} & \meanstd{43.9}{49.8} & \meanstd{42.8}{49.6} & \meanstd{66.7}{57.7} \\
 & GPT--5.6-Sol & \meanstd{\textbf{345.8}}{94.3} & \meanstd{329.5}{92.3} & \meanstd{-22.0}{24.6} & \meanstd{-5.8}{42.7} & \meanstd{79.6}{19.0} & \meanstd{83.1}{12.4} & \meanstd{0.0}{0.0} & \meanstd{83.8}{10.7} & \meanstd{80.0}{40.1} & \meanstd{78.3}{41.3} & \meanstd{100.0}{0.0} \\
 & RC & \meanstd{323.5}{88.2} & \meanstd{323.5}{88.2} & \meanstd{0.3}{2.5} & \meanstd{0.1}{1.0} & \meanstd{\textbf{100.0}}{0.0} & \meanstd{100.0}{0.0} & \meanstd{100.0}{0.0} & \meanstd{98.7}{10.2} & \meanstd{\textbf{0.0}}{0.0} & \meanstd{0.0}{0.0} & \meanstd{0.0}{0.0} \\
 & RCC & \meanstd{323.9}{87.9} & \meanstd{323.6}{88.1} & \meanstd{\textbf{0.0}}{0.0} & \meanstd{\textbf{0.0}}{0.0} & \meanstd{98.7}{10.2} & \meanstd{\textbf{100.0}}{0.0} & \meanstd{\textbf{0.0}}{0.0} & \meanstd{\textbf{100.0}}{0.0} & \meanstd{1.7}{12.8} & \meanstd{\textbf{0.0}}{0.0} & \meanstd{\textbf{100.0}}{0.0} \\
 & RE & \meanstd{282.6}{53.9} & \meanstd{209.2}{26.0} & \meanstd{41.3}{47.9} & \meanstd{114.4}{86.4} & \meanstd{0.0}{0.0} & \meanstd{0.0}{0.0} & -- & \meanstd{58.2}{22.0} & \meanstd{100.0}{0.0} & \meanstd{100.0}{0.0} & -- \\
\midrule
\multirow{6}{*}{\counterpartyrnc} & Opus 5 & \meanstd{142.0}{28.4} & \meanstd{142.0}{28.4} & \meanstd{11.8}{21.8} & \meanstd{11.8}{21.8} & \meanstd{48.4}{34.9} & \meanstd{99.1}{6.7} & \meanstd{10.0}{10.0} & \meanstd{93.9}{8.8} & \meanstd{34.4}{47.7} & \meanstd{1.1}{10.5} & \meanstd{100.0}{0.0} \\
 & 3.6 Flash & \meanstd{149.6}{31.2} & \meanstd{149.6}{31.2} & \meanstd{4.3}{22.6} & \meanstd{4.3}{22.6} & \meanstd{44.1}{36.7} & \meanstd{97.7}{10.4} & \meanstd{3.4}{7.6} & \meanstd{97.1}{8.5} & \meanstd{36.1}{48.2} & \meanstd{2.8}{16.5} & \meanstd{100.0}{0.0} \\
 & GPT--5.6-Sol & \meanstd{153.8}{32.4} & \meanstd{153.8}{32.4} & \meanstd{0.0}{0.0} & \meanstd{0.0}{0.0} & \meanstd{43.1}{38.5} & \meanstd{100.0}{0.0} & \meanstd{0.0}{0.0} & \meanstd{100.0}{0.0} & \meanstd{33.3}{47.3} & \meanstd{0.0}{0.0} & \meanstd{100.0}{0.0} \\
 & RC & \meanstd{54.0}{64.7} & \meanstd{54.0}{64.7} & \meanstd{99.8}{68.2} & \meanstd{99.8}{68.2} & \meanstd{\textbf{100.0}}{0.0} & \meanstd{100.0}{0.0} & \meanstd{100.0}{0.0} & \meanstd{43.1}{38.5} & \meanstd{\textbf{0.0}}{0.0} & \meanstd{0.0}{0.0} & \meanstd{0.0}{0.0} \\
 & RCC & \meanstd{153.8}{32.4} & \meanstd{153.8}{32.4} & \meanstd{\textbf{0.0}}{0.0} & \meanstd{\textbf{0.0}}{0.0} & \meanstd{43.1}{38.5} & \meanstd{\textbf{100.0}}{0.0} & \meanstd{\textbf{0.0}}{0.0} & \meanstd{\textbf{100.0}}{0.0} & \meanstd{33.3}{47.3} & \meanstd{\textbf{0.0}}{0.0} & \meanstd{\textbf{100.0}}{0.0} \\
 & RE & \meanstd{\textbf{200.0}}{0.0} & \meanstd{\textbf{200.0}}{0.0} & \meanstd{-46.2}{32.4} & \meanstd{-46.2}{32.4} & \meanstd{0.0}{0.0} & \meanstd{0.0}{0.0} & -- & \meanstd{64.7}{30.3} & \meanstd{100.0}{0.0} & \meanstd{100.0}{0.0} & -- \\
\bottomrule
\end{tabular}
}
\end{table*}

\begin{table*}[h]
\caption{Supplier-side performance metrics against RCC, RC, and RE customer counterparties. The four compliance metrics pool their eligible weeks across environment--contract cases; their dispersion is likewise weighted by eligible weeks. All other entries are equal-run-weight means $\pm$ sample standard deviations across 180 cases. -- denotes an undefined rate.}
\label{tab:synthetic-baseline-caterer-micro}
\centering
\setlength{\tabcolsep}{1.2pt}
\renewcommand{\arraystretch}{0.96}
\resizebox{\textwidth}{!}{
\begin{tabular}{ccccccccccccc}
\toprule
\multirow{2}{*}{\textbf{Counterparty}} & \multirow{2}{*}{\textbf{Model}} & \multicolumn{2}{c}{\textbf{Utility}} & \multicolumn{2}{c}{\textbf{(Ex-Post) Regret}} & \multicolumn{4}{c}{\textbf{Compliance}} & \multicolumn{3}{c}{\textbf{Defection}} \\
\cmidrule(lr){3-4}\cmidrule(lr){5-6}\cmidrule(lr){7-10}\cmidrule(lr){11-13}
 & & \textbf{Utility}\,$\uparrow$ & \textbf{Compl.}\,$\uparrow$ & \textbf{Regret}\,$\to 0$ & \textbf{Compl.}\,$\to 0$ & \textbf{Rate}\,$\uparrow$ & \textbf{Cond.}\,$\uparrow$ & \textbf{Exploited}\,$\downarrow$ & \textbf{TFT}\,$\uparrow$ & \textbf{Rate}\,$\downarrow$ & \textbf{Unilateral}\,$\downarrow$ & \textbf{Reciprocal}\,$\uparrow$ \\
\midrule
\multirow{6}{*}{\counterpartyrcc} & Opus 5 & \meanstd{62.5}{36.1} & \meanstd{45.0}{42.3} & \meanstd{-9.8}{25.0} & \meanstd{7.4}{35.3} & \meanstd{87.0}{19.9} & \meanstd{89.0}{16.7} & -- & \meanstd{88.9}{16.4} & \meanstd{41.7}{49.4} & \meanstd{41.7}{49.4} & -- \\
 & 3.6 Flash & \meanstd{46.1}{41.9} & \meanstd{43.8}{44.6} & \meanstd{6.5}{25.9} & \meanstd{8.6}{29.0} & \meanstd{95.7}{14.0} & \meanstd{97.1}{9.7} & -- & \meanstd{97.1}{9.0} & \meanstd{11.7}{32.2} & \meanstd{11.7}{32.2} & -- \\
 & GPT--5.6-Sol & \meanstd{\textbf{63.5}}{36.4} & \meanstd{45.6}{36.3} & \meanstd{-10.9}{25.1} & \meanstd{6.9}{31.6} & \meanstd{85.9}{22.5} & \meanstd{87.0}{20.9} & -- & \meanstd{87.1}{20.4} & \meanstd{40.6}{49.2} & \meanstd{40.6}{49.2} & -- \\
 & RC & \meanstd{52.6}{36.2} & \meanstd{52.4}{36.6} & \meanstd{0.0}{0.2} & \meanstd{0.0}{0.2} & \meanstd{97.2}{15.3} & \meanstd{98.5}{10.4} & -- & \meanstd{98.6}{9.0} & \meanstd{3.9}{19.4} & \meanstd{3.9}{19.4} & -- \\
 & RCC & \meanstd{52.6}{36.2} & \meanstd{\textbf{52.4}}{36.5} & \meanstd{\textbf{0.0}}{0.0} & \meanstd{\textbf{0.0}}{0.0} & \meanstd{\textbf{97.2}}{15.3} & \meanstd{\textbf{98.5}}{10.4} & -- & \meanstd{\textbf{98.6}}{9.0} & \meanstd{\textbf{3.9}}{19.4} & \meanstd{\textbf{3.9}}{19.4} & -- \\
 & RE & \meanstd{46.2}{32.4} & \meanstd{8.3}{18.3} & \meanstd{6.5}{49.2} & \meanstd{44.1}{45.2} & \meanstd{5.3}{8.8} & \meanstd{7.3}{9.6} & -- & \meanstd{32.0}{34.9} & \meanstd{100.0}{0.0} & \meanstd{100.0}{0.0} & -- \\
\midrule
\multirow{6}{*}{\counterpartyrc} & Opus 5 & \meanstd{67.3}{37.7} & \meanstd{47.2}{41.9} & \meanstd{-13.7}{27.8} & \meanstd{6.2}{33.0} & \meanstd{86.9}{20.4} & \meanstd{86.9}{20.4} & -- & \meanstd{86.9}{20.4} & \meanstd{40.0}{49.1} & \meanstd{40.0}{49.1} & -- \\
 & 3.6 Flash & \meanstd{50.3}{38.1} & \meanstd{45.9}{41.4} & \meanstd{3.3}{20.9} & \meanstd{7.5}{24.8} & \meanstd{96.2}{11.7} & \meanstd{96.2}{11.7} & -- & \meanstd{96.2}{11.7} & \meanstd{13.3}{34.1} & \meanstd{13.3}{34.1} & -- \\
 & GPT--5.6-Sol & \meanstd{71.9}{40.9} & \meanstd{45.6}{37.6} & \meanstd{-18.3}{28.2} & \meanstd{7.8}{32.6} & \meanstd{84.1}{24.6} & \meanstd{84.1}{24.6} & -- & \meanstd{84.1}{24.6} & \meanstd{44.4}{49.8} & \meanstd{44.4}{49.8} & -- \\
 & RC & \meanstd{53.6}{35.0} & \meanstd{53.4}{35.4} & \meanstd{0.0}{0.0} & \meanstd{0.0}{0.0} & \meanstd{98.9}{6.9} & \meanstd{98.9}{6.9} & -- & \meanstd{98.9}{6.9} & \meanstd{3.9}{19.4} & \meanstd{3.9}{19.4} & -- \\
 & RCC & \meanstd{53.6}{35.0} & \meanstd{\textbf{53.4}}{35.4} & \meanstd{\textbf{0.0}}{0.0} & \meanstd{\textbf{0.0}}{0.0} & \meanstd{\textbf{98.9}}{6.9} & \meanstd{\textbf{98.9}}{6.9} & -- & \meanstd{\textbf{98.9}}{6.9} & \meanstd{\textbf{3.9}}{19.4} & \meanstd{\textbf{3.9}}{19.4} & -- \\
 & RE & \meanstd{\textbf{146.0}}{64.7} & \meanstd{8.3}{18.3} & \meanstd{-92.4}{65.0} & \meanstd{45.1}{44.4} & \meanstd{5.3}{8.8} & \meanstd{5.3}{8.8} & -- & \meanstd{5.3}{8.8} & \meanstd{100.0}{0.0} & \meanstd{100.0}{0.0} & -- \\
\midrule
\multirow{6}{*}{\counterpartyrnc} & Opus 5 & \meanstd{-19.6}{12.0} & \meanstd{-19.6}{12.0} & \meanstd{19.6}{12.0} & \meanstd{19.6}{12.0} & \meanstd{7.9}{11.4} & -- & \meanstd{7.9}{11.4} & \meanstd{92.1}{11.4} & \meanstd{100.0}{0.0} & -- & \meanstd{100.0}{0.0} \\
 & 3.6 Flash & \meanstd{-19.7}{13.6} & \meanstd{-19.7}{13.6} & \meanstd{19.7}{13.6} & \meanstd{19.7}{13.6} & \meanstd{\textbf{9.6}}{11.0} & -- & \meanstd{9.6}{11.0} & \meanstd{90.4}{11.0} & \meanstd{100.0}{0.0} & -- & \meanstd{100.0}{0.0} \\
 & GPT--5.6-Sol & \meanstd{-0.4}{4.0} & \meanstd{-0.4}{4.0} & \meanstd{0.4}{4.0} & \meanstd{0.4}{4.0} & \meanstd{0.2}{2.1} & -- & \meanstd{0.2}{2.1} & \meanstd{99.8}{2.1} & \meanstd{100.0}{0.0} & -- & \meanstd{100.0}{0.0} \\
 & RC & \meanstd{-29.7}{10.1} & \meanstd{-29.7}{10.1} & \meanstd{29.7}{10.1} & \meanstd{29.7}{10.1} & \meanstd{9.3}{12.7} & -- & \meanstd{9.3}{12.7} & \meanstd{90.7}{12.7} & \meanstd{100.0}{0.0} & -- & \meanstd{100.0}{0.0} \\
 & RCC & \meanstd{\textbf{0.0}}{0.0} & \meanstd{\textbf{0.0}}{0.0} & \meanstd{\textbf{0.0}}{0.0} & \meanstd{\textbf{0.0}}{0.0} & \meanstd{0.0}{0.0} & -- & \meanstd{\textbf{0.0}}{0.0} & \meanstd{\textbf{100.0}}{0.0} & \meanstd{\textbf{100.0}}{0.0} & -- & \meanstd{\textbf{100.0}}{0.0} \\
 & RE & \meanstd{0.0}{0.0} & \meanstd{0.0}{0.0} & \meanstd{0.0}{0.0} & \meanstd{0.0}{0.0} & \meanstd{0.0}{0.0} & -- & \meanstd{0.0}{0.0} & \meanstd{100.0}{0.0} & \meanstd{100.0}{0.0} & -- & \meanstd{100.0}{0.0} \\
\bottomrule
\end{tabular}
}
\end{table*}
\endgroup

\FloatBarrier
\clearpage
\subsection{Negotiation and Performance Metrics across Multiple Domains}
\label{app:negotiation-domain-robustness}

Figure~\ref{fig:negotiation-domain-robustness} and
Table~\ref{tab:performance-domain-summary} expand the matched-setting
comparison in Figure~\ref{fig:story-behavior-robustness}(e) of the main paper. The figure reports
the full set of negotiation metrics pooled across roles, while the table
reports role-specific payoff metrics and performance behavior. Both use the
same role and counterparty aggregation as the corresponding quantities in the
main-text plot. We keep the underlying high-stochasticity Environment~5 fixed
while expressing the task as Catering, Hotel Cleaning, or AI Hosting.

\begin{center}
\centering
\includegraphics[width=\textwidth]{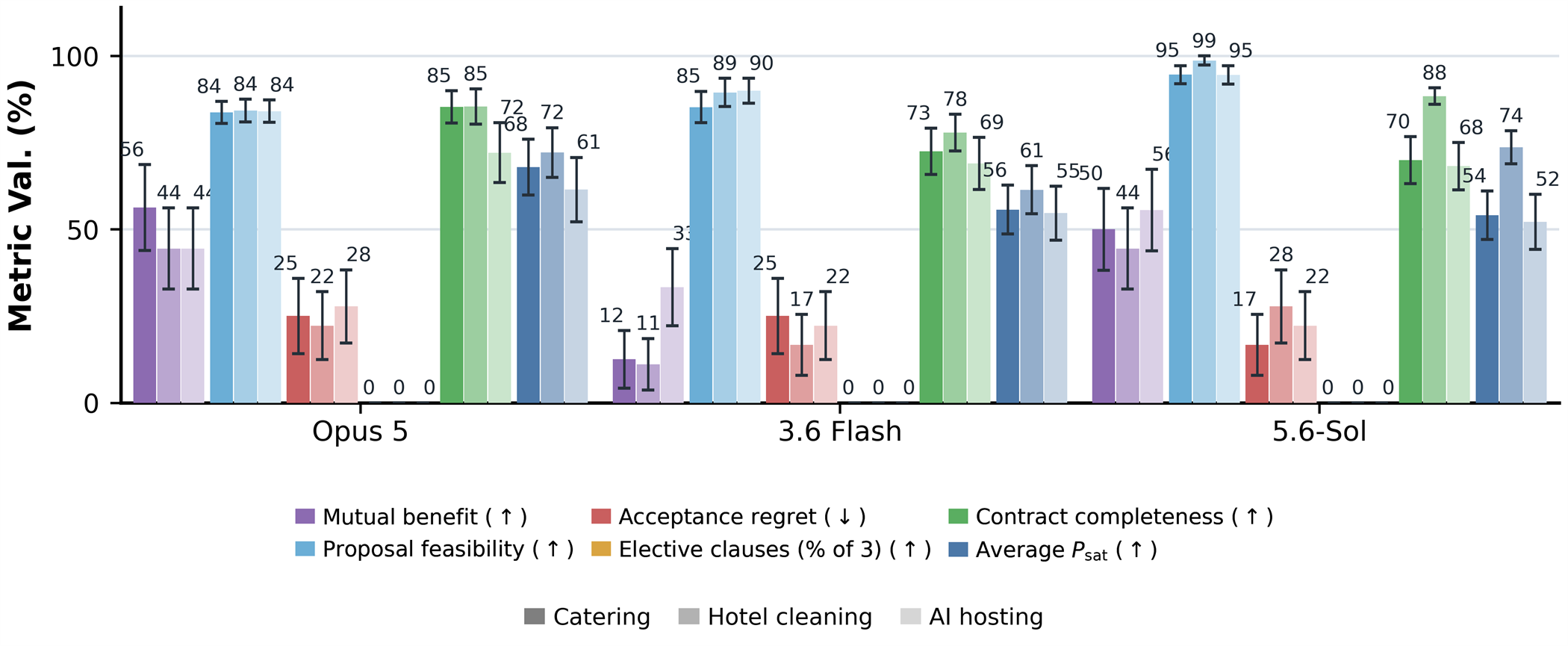}
\captionof{figure}{\textbf{Negotiation robustness across matched supplier settings.}
Bars show mean $\pm$ SE contract quality pooled across Customer and Supplier roles in matched Catering,
Hotel Cleaning, and AI Hosting settings, keeping the underlying environment
fixed (high stochasticity).}
\label{fig:negotiation-domain-robustness}
\end{center}

\begingroup

\begin{center}
\captionof{table}{\textbf{Performance robustness across matched supplier settings.} Entries are means across RCC, RC, and RE counterparties. Customer and Supplier payoff metrics remain role-specific because their scales differ: U/Profit is realized utility/profit, C-U/C-Profit is its compliant counterpart, R is ex-post regret, and C-R is compliant regret. Behavioral rates (\%) pool both roles and all counterparties using the Figure~\ref{fig:story-behavior-robustness} aggregation: rates are computed within a case and macro-averaged across eligible cases.}
\label{tab:performance-domain-summary}
\centering
\tiny
\setlength{\tabcolsep}{0.8pt}
\renewcommand{\arraystretch}{1.05}
\resizebox{\textwidth}{!}{
\begin{tabular}{ccccccccccccccccc}
\toprule
\multirow{2}{*}[-2pt]{\textbf{Model}} & \multirow{2}{*}[-2pt]{\textbf{Setting}} & \multicolumn{4}{c}{\textbf{Customer payoff}} & \multicolumn{4}{c}{\textbf{Supplier payoff}} & \multicolumn{7}{c}{\textbf{Pooled behavior}} \\
\cmidrule(lr){3-6}\cmidrule(lr){7-10}\cmidrule(lr){11-17}
 & & \textbf{U}\,$\uparrow$ & \textbf{C-U}\,$\uparrow$ & \textbf{R}\,$\to 0$ & \textbf{C-R}\,$\to 0$ & \textbf{Profit}\,$\uparrow$ & \textbf{C-Profit}\,$\uparrow$ & \textbf{R}\,$\to 0$ & \textbf{C-R}\,$\to 0$ & \textbf{Compl.}\,$\uparrow$ & \textbf{Cond.}\,$\uparrow$ & \textbf{Exploit.}\,$\downarrow$ & \textbf{TFT}\,$\uparrow$ & \textbf{Defect.}\,$\downarrow$ & \textbf{Unilat.}\,$\downarrow$ & \textbf{Recip.}\,$\uparrow$ \\
\midrule
\multirow{3}{*}{Opus 5} & Catering & 201.7 & 198.3 & -1.1 & 1.9 & 17.4 & 6.1 & -9.0 & 2.3 & 67.4 & 62.0 & 0.0 & 90.6 & 56.1 & 38.0 & 100.0 \\
 & Hotel cleaning & 203.8 & 201.1 & -3.2 & -0.8 & 22.7 & 7.7 & -14.3 & 0.7 & 64.5 & 51.3 & 0.0 & 87.6 & 65.0 & 48.7 & 100.0 \\
 & AI hosting & 202.1 & 200.8 & -1.5 & -0.5 & 32.9 & 13.1 & -24.5 & -4.6 & 61.5 & 40.7 & 0.0 & 85.5 & 73.9 & 59.3 & 100.0 \\
\midrule
\multirow{3}{*}{3.6 Flash} & Catering & 203.9 & 202.2 & -3.3 & -2.0 & 0.5 & -2.3 & 7.9 & 10.7 & 71.4 & 76.0 & 0.0 & 93.3 & 44.4 & 24.0 & 100.0 \\
 & Hotel cleaning & 205.7 & 204.5 & -5.1 & -4.3 & 3.9 & 1.9 & 4.5 & 6.5 & 73.4 & 77.3 & 2.3 & 93.5 & 42.8 & 22.7 & 97.7 \\
 & AI hosting & 206.0 & 203.8 & -5.4 & -3.6 & 9.8 & 7.8 & -1.4 & 0.7 & 72.6 & 73.3 & 0.0 & 93.5 & 46.7 & 26.7 & 100.0 \\
\midrule
\multirow{3}{*}{GPT--5.6-Sol} & Catering & 205.7 & 203.9 & -5.2 & -3.7 & 18.0 & 11.5 & -9.5 & -3.1 & 68.6 & 63.3 & 0.0 & 92.6 & 55.0 & 36.7 & 100.0 \\
 & Hotel cleaning & 206.7 & 205.1 & -6.1 & -4.9 & 25.1 & 15.0 & -16.6 & -6.5 & 66.5 & 53.3 & 0.0 & 90.2 & 63.3 & 46.7 & 100.0 \\
 & AI hosting & 207.1 & 204.5 & -6.5 & -4.3 & 30.7 & 14.8 & -22.2 & -6.3 & 64.9 & 44.7 & 0.0 & 88.8 & 70.6 & 55.3 & 100.0 \\
\bottomrule
\end{tabular}
}
\end{center}

\endgroup

\FloatBarrier

\end{document}